\documentclass[letterpaper]{article} 
\usepackage{aaai2027}  
\usepackage[hyphens]{url}  
\usepackage{graphicx} 
\usepackage{natbib}  
\usepackage{caption} 
\usepackage{algorithm}
\usepackage{algorithmic}
\usepackage{microtype}
\usepackage{newfloat}
\usepackage{listings}
\DeclareCaptionStyle{ruled}{labelfont=normalfont,labelsep=colon,strut=off} 
\floatstyle{ruled}
\newfloat{listing}{tb}{lst}{}
\floatname{listing}{Listing}

\usepackage{booktabs}
\usepackage{amsmath}
\usepackage{amssymb}
\usepackage{amsthm}

\newtheorem{theorem}{Theorem}
\newtheorem{proposition}{Proposition}
\newtheorem{lemma}{Lemma}
\newtheorem{corollary}{Corollary}
\newtheorem{definition}{Definition}
\newtheorem{remark}[theorem]{Remark}

\newcommand{\appProofs}{Appendix~A}       
\newcommand{\appStructure}{Appendix~B}     
\newcommand{\appGeometry}{Appendix~C}    
\newcommand{\appReg}{Appendix~D}          
\newcommand{\appApprox}{Appendix~E}       
\newcommand{\appProtocol}{Appendix~F}      
\newcommand{\appStats}{Appendix~G}       
\newcommand{\appReach}{Appendix~H}      
\newcommand{\appUnified}{Appendix~I}       
\newcommand{\appDepth}{Appendix~J}        
\nocopyright 

\title{FishBack: Pullback Fisher Geometry for Optimal Activation Steering in Transformers}
\author{
	Sihan Wang\textsuperscript{1},
	Jiayi Zhao\textsuperscript{1},
	Qingyan Cao \textsuperscript{1},
	Hongbo Yao\textsuperscript{1}\thanks{Corresponding author.},
	Lin Shu\textsuperscript{1}\thanks{Corresponding author.}
}
\affiliations{
	\textsuperscript{1}School of Future Technology, South China University of Technology, Guangzhou, China\\
	\{cswangsihan, 202564871285, 202310193229\}@mail.scut.edu.cn, shul@scut.edu.cn
}

\begin{document}

\maketitle

\begin{abstract}
Activation steering has emerged as a lightweight approach for modifying language model behavior without parameter updates, yet existing methods remain brittle: unstable across layers and prone to disturbing behavior unrelated to the target concept. We trace these failures to a hidden assumption shared by widely-used methods such as CAA, ActAdd, and ITI: that the intermediate activation space is Euclidean. We show this assumption is fundamentally flawed. The metric that actually governs how a hidden-state perturbation changes the output is the Fisher information metric of the softmax layer, pulled back to the intermediate layer through the Jacobian of the intervening layers. From it we derive a closed-form steering direction, applied to a hidden state at an intermediate layer, that reaches a target concept change with the least non-target distortion. The framework is sharpest in the early and middle intermediate layers, where the metric is strongly non-Euclidean and geometric correction matters most. We evaluate it on three verb-morphology concepts: third-person-singular, progressive, and past-tense inflection, following standard counterfactual-concept evaluation. On GPT-2 Small, this non-Euclidean geometry is borne out empirically, and our method lowers off-target KL divergence by median factors of 1.4--6.5x against individual steering baselines. On Llama-3-8B and Qwen3-8B, it lowers off-target KL by median factors of 1.8--3.6x against individual baselines at the early and middle layers. These results show that geometric correction retains its advantage on larger models with more complex internal structure.
\end{abstract}

\begin{figure*}[t]
	\centering
	\includegraphics[width=\textwidth]{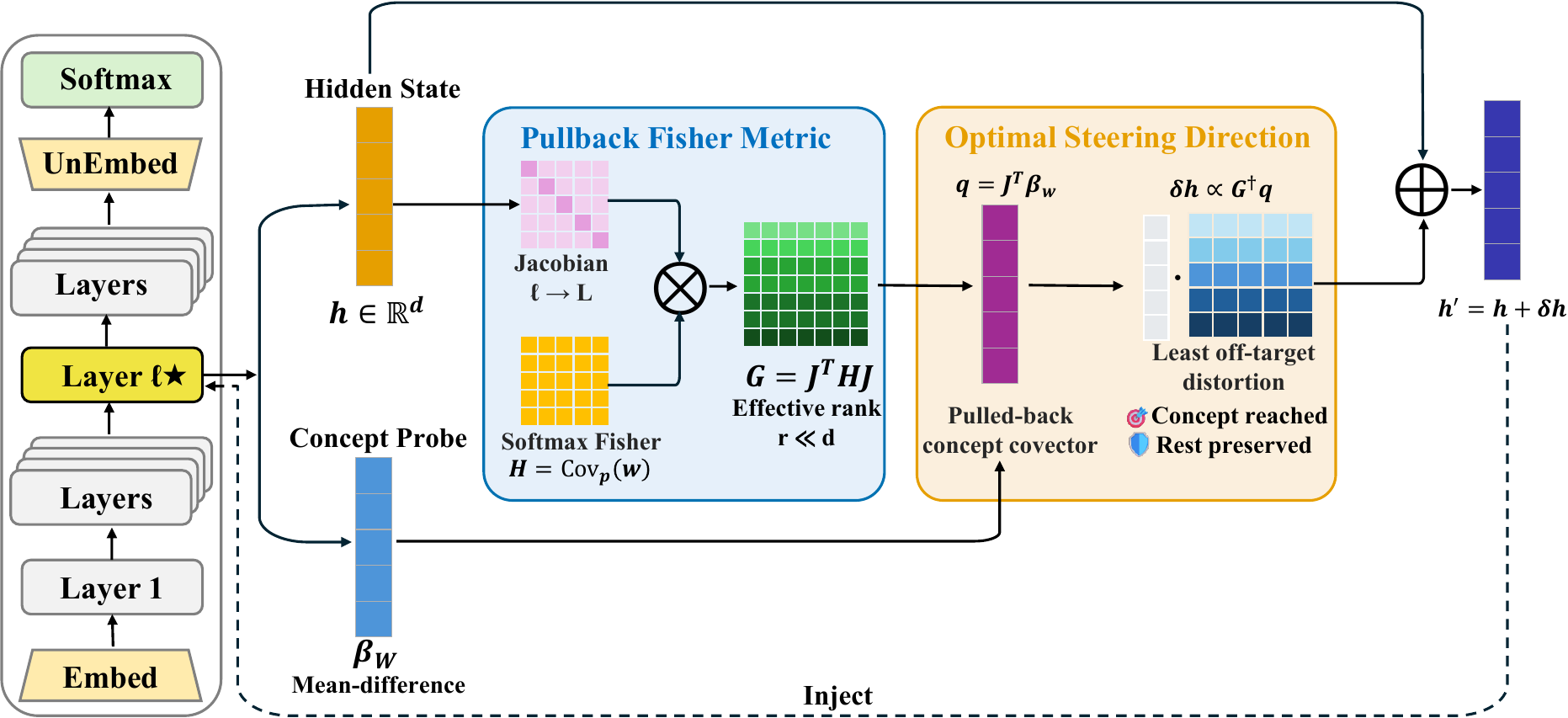}
	\caption{\textbf{Overview of FishBack.} At an intermediate layer, we read the hidden state and the concept probe, pull the softmax Fisher information back through the network's Jacobian to obtain the pullback Fisher metric, and solve in closed form for the steering direction. The steered activation is then injected back into the layer.}
	\label{fig:overview}
\end{figure*}
\section{Introduction}

Shaping a language model's behavior after training---keeping it truthful, within a safety policy, or free of toxic and biased text---is a standing need, and reweighting the model through fine-tuning or human feedback \citep{ouyang2022training} is costly and specific to each objective. Activation steering offers a lighter alternative: because a model's concepts tend to lie along linear directions in its hidden states, displacing an activation along the right direction at inference time turns a behavior up or down \citep{subramani2022extracting,turner2023activation,zou2023representation}. A large toolkit now builds such directions---from contrastive mean differences \citep{rimsky2024steering}, probes on individual heads \citep{li2023inference}, and extracted task or function vectors \citep{hendel2023incontext,todd2024function}, to fitted affine and rank-one directions \citep{singh2024representation,belrose2023leace}---and it has served truthfulness \citep{li2023inference}, sentiment \citep{turner2023activation}, and refusal \citep{arditi2024refusal}. Yet these interventions are unstable: a vector that works at one layer fails at another, its strength must be tuned within narrow margins \citep{weij2024extending,vu2025angular}, its effect leaks into unrelated behavior, and audits report steerability that varies by input, sometimes reverses sign \citep{tan2024analysing,braun2025understanding}, and barely improves on simple baselines \citep{wu2025axbench}. Recurring across models and tasks, this points to a common cause.

That cause is the form of the intervention. The most widely used methods---CAA, ActAdd, and ITI among them---apply the same update $h \leftarrow h + \alpha v$, which treats the activation space as flat, as if a step cost the same in every direction, and steers along its straight lines. The assumption is seldom stated, yet it decides which direction is steepest and whether shifting the target concept also disturbs the rest. It is known to be wrong at the network's final layer: there the softmax head maps the last hidden state to a distribution over tokens, whose local geometry is the Fisher--Rao metric of information theory \citep{rao1945information,amari1998natural,amari2016information}, and steering in its dual coordinates best preserves unrelated semantics \citep{park2026information}. But steering is almost never applied at that final layer. It acts on the intermediate layers of the residual stream, far upstream of the softmax, where the operative geometry has gone unexamined. Recent work fits this geometry from activation samples---a manifold or a kernel-PCA surface \citep{wurgaft2026manifold,raval2026curveball}---confirming that straight-line steering is wrong, but stops short of deriving it from the model or guaranteeing an optimal direction. What is missing is a metric for the intermediate layers that follows from the model itself and names the best direction to move.

We show that this metric need not be fitted at all; it is fixed by the model's own computation. What a displacement costs is not its length but how much it moves the model's predicted distribution, and that cost is available in closed form: the Fisher information of the softmax head, carried back to an intermediate layer along the network's Jacobian \citep{arvanitidis2018latent,arvanitidis2022pulling}, gives a layer-specific metric on the residual stream, built from quantities the model already computes and equal to the generalized Gauss--Newton matrix of second-order optimization \citep{martens2020new}. Measured on GPT-2 Small, it is far from flat. Because the metric is explicit, so is the optimal direction: a single regularized solve returns the displacement that induces a target concept change at the least off-target distortion, and the solve is reapplied along the trajectory as the operating point moves. At the scale of an 8B model this computation becomes infeasible---assembling the Jacobian alone would take thousands of Jacobian-vector products at every step---so we develop an approximation that recovers the same direction without forming it \citep{halko2011finding}, bringing the exact target within a few hundred partial forward passes per case on Llama-3-8B and Qwen3-8B.

Our contributions are as follows.
\begin{itemize}
	\item \textbf{The correct geometry for intermediate-layer steering.} We show that the metric governing an intervention at an intermediate layer is the softmax Fisher information pulled back through the network's Jacobian, obtained in closed form from the model with no fitting to data; the widely used additive methods correspond to steering under a flat metric, with directions that differ from ours in the covector as well (\S4).
	\item \textbf{A closed-form optimal steering direction with state-of-the-art off-target preservation.} From this metric we prove a closed-form expression for the displacement that produces a target concept change at the least off-target KL divergence, applied iteratively along the steering trajectory. On GPT-2 Small, Llama-3-8B, and Qwen3-8B, it attains state-of-the-art off-target preservation among activation-steering methods at the early and middle layers, and the highest target reachability on Qwen3-8B (\S4, \S5).
	\item \textbf{A faithful approximation that scales to 8B models.} Because forming the metric exactly is computationally out of reach at 8B scale, we give an approximation that recovers the same optimal direction---evaluating its action and low-rank spectrum without forming it, and holding the Jacobian fixed across steps---within a few hundred partial forward passes per case on Llama-3-8B and Qwen3-8B (\S4 \S5).
\end{itemize}

\section{Related Work}
\label{sec:related}

\textbf{Activation steering} exploits linear concept directions in hidden states \citep{subramani2022extracting,zou2023representation}, differing mainly in how the direction is obtained---contrastive means \citep{turner2023activation,rimsky2024steering}, head probes \citep{li2023inference}, task and function vectors \citep{hendel2023incontext,todd2024function}, affine or rank-one fits \citep{singh2024representation,belrose2023leace}---while related work controls generation without modifying activations additively \citep{dathathri2020plug,im2025unified}. The additive methods are widely reported to be fragile across layers, coefficients, and inputs \citep{weij2024extending,vu2025angular,tan2024analysing,braun2025understanding,wu2025axbench}.

\textbf{Geometry-aware steering} questions the flat view: \citet{park2026information} steer in the softmax's dual coordinates, but at the output layer only, while \citet{wurgaft2026manifold} and \citet{raval2026curveball} fit a manifold or kernel-PCA surface to activations---confirming curvature but obtaining the geometry from data rather than the model, with no optimality guarantee.

\textbf{Information geometry} grounds our approach: our metric follows from the model in closed form, connecting to the Fisher--Rao metric \citep{rao1945information,amari2016information}, natural-gradient optimization \citep{amari1998natural}, generalized Gauss--Newton curvature \citep{martens2020new}, and pullback metrics on generative latent spaces \citep{arvanitidis2018latent,arvanitidis2022pulling}, none of which has been used to steer intermediate representations.
\section{Preliminaries}
\label{sec:prelim}

We write scalars in lightface ($c$), vectors in lowercase bold ($\mathbf{v}$), and matrices in uppercase ($M$); $\mathbb{E}$ denotes expectation and $D_{\mathrm{KL}}$ the Kullback--Leibler divergence.

\paragraph{Network and output distribution.}
We intervene on the residual stream at an intermediate layer $\ell$ and read the effect in the model's next-token distribution. Let $\mathbf{h} \in \mathbb{R}^{d}$ be the activation at layer $\ell$ and $\boldsymbol{\lambda} \in \mathbb{R}^{d}$ the final-layer representation, related by the smooth map $\boldsymbol{\lambda} = f(\mathbf{h})$ that the remaining layers compute, with Jacobian $J = \partial \boldsymbol{\lambda} / \partial \mathbf{h}$. The softmax head produces a distribution over the vocabulary $\mathcal{V}$,
\begin{equation}
	\label{eq:softmax}
	p_{\boldsymbol{\lambda}} = \mathrm{softmax}(W_U \boldsymbol{\lambda}), \qquad W_U \in \mathbb{R}^{|\mathcal{V}| \times d},
\end{equation}
through the unembedding matrix $W_U$, whose rows we write $\mathbf{w}_i$.

\paragraph{The output metric.}
Change in the prediction is measured by KL divergence, not Euclidean length, and to second order this divergence is the quadratic form of the Fisher information $H$:
\begin{equation}
	\label{eq:kl_fisher}
	\begin{aligned}
		D_{\mathrm{KL}}\!\left(p_{\boldsymbol{\lambda}} \,\big\|\, p_{\boldsymbol{\lambda} + \Delta\boldsymbol{\lambda}}\right)
		&\approx \tfrac{1}{2}\, \Delta\boldsymbol{\lambda}^{\top} H \, \Delta\boldsymbol{\lambda}, \\
		H &= \mathbb{E}_{p_{\boldsymbol{\lambda}}}\!\big[\mathbf{w}\mathbf{w}^{\top}\big] - \bar{\mathbf{w}}\,\bar{\mathbf{w}}^{\top},
	\end{aligned}
\end{equation}
the covariance of the unembedding rows under the current prediction, where $\bar{\mathbf{w}} = \mathbb{E}_{p_{\boldsymbol{\lambda}}}[\mathbf{w}]$. This holds when the second-order term dominates, the regime in which we steer; we check its accuracy on real models in Section~\ref{sec:experiments}. Because $p_{\boldsymbol{\lambda}}$ concentrates on a few plausible tokens, $H$ is far from isotropic, with most of its weight in a low-dimensional subspace---a fact that drives everything below.

\paragraph{Concepts and off-target distortion.}
Following \citet{park2026information}, a binary concept $W$ is specified by counterfactual token pairs whose members differ only in the target attribute (e.g., \emph{run} vs.\ \emph{runs}), and is read by a mean-difference probe $\boldsymbol{\beta}_W$ on the final representation. Steering should move the concept while leaving unrelated predictions intact. To make this precise, partition the vocabulary into the concept tokens $\mathcal{Y}_W$ and the remainder $\mathcal{Y}_W^{c}$; the effect on $\mathcal{Y}_W^{c}$ is the \emph{off-target distortion},
\begin{equation}
	\label{eq:offtarget}
	\begin{split}
		D_{\mathrm{KL}}^{\mathrm{off}} &= D_{\mathrm{KL}}\!\left(p^{\mathrm{off}}_{\boldsymbol{\lambda}} \,\big\|\, p^{\mathrm{off}}_{\boldsymbol{\lambda}'}\right), \\
		p^{\mathrm{off}}(y) &= \frac{p(y)}{\sum_{y' \in \mathcal{Y}_W^{c}} p(y')} \quad \text{for } y \in \mathcal{Y}_W^{c},
	\end{split}
\end{equation}
the divergence between the distributions renormalized over $\mathcal{Y}_W^{c}$ before ($\boldsymbol{\lambda}$) and after ($\boldsymbol{\lambda}'$) steering. The evaluation protocol and concept-probability target are detailed in Section~\ref{sec:experiments}.

\paragraph{From concept direction to steering direction.}
Pulled back to the intervention layer, the probe becomes the covector
\begin{equation}
	\label{eq:pullback_covector}
	\mathbf{q} = J^{\top} \boldsymbol{\beta}_W,
\end{equation}
the direction along which the concept score climbs fastest at $\mathbf{h}$. Steering adds a \emph{vector} to $\mathbf{h}$, so turning the covector $\mathbf{q}$ into one requires a metric $M$ to raise its index, giving $\delta\mathbf{h} \propto M^{-1} \mathbf{q}$. Which $M$ to use is the entire question. Additive steering takes $M = I$, moving along $\mathbf{q}$ as if it were already a vector---the flat choice we make precise, and correct, in Section~\ref{sec:pullback}.

\section{Method}
\label{sec:pullback}

\subsection{The Pullback Metric}
\label{subsec:pullback_metric}

An intervention is applied to the hidden state $\mathbf{h}$ at layer $\ell$, yet what we care about is its effect several layers later, on the prediction. A perturbation $\delta\mathbf{h}$ reaches the final representation as $\Delta\boldsymbol{\lambda} = J\,\delta\mathbf{h}$ to first order, so substituting into the output-space cost \eqref{eq:kl_fisher} expresses that effect directly in terms of $\delta\mathbf{h}$.

\begin{theorem}[Second-order steering cost]
	\label{thm:pullback}
	To second order, the KL divergence between the prediction before and after an intervention $\delta\mathbf{h}$ at layer $\ell$ is
	\begin{equation}
		\label{eq:pullback}
		D_{\mathrm{KL}} \approx \tfrac{1}{2}\, \delta\mathbf{h}^{\top} G\, \delta\mathbf{h},
		\qquad
		G = J^{\top} H J,
	\end{equation}
	where $J$ is the Jacobian of the layers from $\ell$ to the output and $H$ is the softmax Fisher information. The matrix $G$ is symmetric positive semi-definite.
\end{theorem}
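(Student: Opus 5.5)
The plan is to compose two second-order expansions: the KL expansion at the output in terms of $\boldsymbol{\lambda}$, and the first-order map from $\delta\mathbf{h}$ to $\Delta\boldsymbol{\lambda}$.

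\textbf{Output-space expansion.} Consider $\phi(\boldsymbol{\lambda}') = D_{\mathrm{KL}}(p_{\boldsymbol{\lambda}} \,\|\, p_{\boldsymbol{\lambda}'})$. With logits $\mathbf{z} = W_U\boldsymbol{\lambda}'$, we have $\phi = \sum_i p_i \log p_i - \sum_i p_i z_i + \log\sum_j e^{z_j}$. At $\boldsymbol{\lambda}'=\boldsymbol{\lambda}$, $\phi$ vanishes. Its gradient $W_U^\top(p_{\boldsymbol{\lambda}'}-p_{\boldsymbol{\lambda}})$ also vanishes, since $\nabla_{\mathbf z}\log\sum e^{z_j}$ is the softmax.

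The Hessian of the log-partition in $\mathbf z$ is $\mathrm{diag}(p)-pp^\top$. Chaining through $W_U$ gives $W_U^\top(\mathrm{diag}(p)-pp^\top)W_U = \sum_i p_i\mathbf{w}_i\mathbf{w}_i^\top - \bar{\mathbf w}\bar{\mathbf w}^\top = H$. This recovers \eqref{eq:kl_fisher}: $\phi(\boldsymbol{\lambda}+\Delta\boldsymbol{\lambda}) = \tfrac12\Delta\boldsymbol{\lambda}^\top H\Delta\boldsymbol{\lambda} + O(\|\Delta\boldsymbol{\lambda}\|^3)$.

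\textbf{Composition with $f$.} Define $\psi(\delta\mathbf h) = \phi(f(\mathbf h+\delta\mathbf h))$. Since $f$ is smooth, $\Delta\boldsymbol{\lambda} = J\delta\mathbf h + \tfrac12 T[\delta\mathbf h,\delta\mathbf h] + O(\|\delta\mathbf h\|^3)$, where $T$ is the second derivative of $f$.

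The main point to check is that this curvature term of $f$ does not contribute at second order. By the chain rule, the Hessian of $\psi$ at $0$ is $J^\top (\nabla^2\phi) J + \sum_k (\partial_k\phi)\,\nabla^2 f_k$. The second sum vanishes because $\nabla\phi = 0$ at $\boldsymbol{\lambda}$; this is the Gauss--Newton identity. Equivalently, in the direct expansion, $\tfrac12\Delta\boldsymbol{\lambda}^\top H\Delta\boldsymbol{\lambda}$ differs from $\tfrac12(J\delta\mathbf h)^\top H(J\delta\mathbf h)$ only by terms of order $\|\delta\mathbf h\|^3$. Also $\psi(0)=0$ and $\nabla\psi(0) = J^\top\nabla\phi = 0$. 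Hence
\[
\psi(\delta\mathbf h) = \tfrac12\,\delta\mathbf h^\top J^\top H J\,\delta\mathbf h + O(\|\delta\mathbf h\|^3),
\]
which is \eqref{eq:pullback} with $G=J^\top H J$.

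\textbf{Properties of $G$.} Symmetry follows from $G^\top = J^\top H^\top J$ and the symmetry of $H$. For positive semi-definiteness, $H$ is a covariance matrix: $\mathbf v^\top H\mathbf v = \mathrm{Var}_{p_{\boldsymbol{\lambda}}}(\mathbf w^\top\mathbf v)\ge 0$. Therefore $\mathbf u^\top G\mathbf u = (J\mathbf u)^\top H(J\mathbf u)\ge0$ for every $\mathbf u$.

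I expect no real obstacle beyond carefully justifying that the first-order term vanishes, which is what makes the $f$-curvature term drop out.
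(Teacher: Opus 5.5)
Your proposal is correct and follows essentially the same route as the paper. The paper writes the output KL as a Bregman divergence of the log-partition $A$, with $\nabla A = \bar{\mathbf{w}}$ and $\nabla^2 A = H$, which is the same fact as your logit-space computation of the value, gradient and Hessian; it then substitutes $\Delta\boldsymbol{\lambda} = J\,\delta\mathbf{h} + O(\|\delta\mathbf{h}\|^{2})$ and absorbs the cross terms into an $O(\|\delta\mathbf{h}\|^{3})$ remainder, exactly as in your ``direct expansion'' alternative, and obtains positive semi-definiteness from $H$ being a covariance matrix, as you do.
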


We call $G$ the \emph{pullback Fisher metric}: it carries the Fisher geometry of the output back to layer $\ell$ through the Jacobian, and \eqref{eq:pullback} shows that it is the quadratic form measuring how far the prediction moves under an intervention. The steering cost of a direction is therefore not its Euclidean length but its norm under $G$, which is highly anisotropic (Section~\ref{sec:experiments}). The proof substitutes $\Delta\boldsymbol{\lambda} = J\,\delta\mathbf{h}$ into \eqref{eq:kl_fisher}, with positive semi-definiteness inherited from $H$; details are in \appProofs.

The same matrix arises in second-order optimization.

\begin{proposition}[Gauss--Newton form]
	\label{prop:ggn}
	$G = J^{\top} H J$ is the generalized Gauss--Newton matrix of the model's negative log-likelihood with respect to the activation $\mathbf{h}$ at layer $\ell$.
\end{proposition}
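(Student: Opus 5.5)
The plan is to write the negative log-likelihood as a composition and read off its Hessian by the chain rule. Fix the observed next token $y$ and set $\mathcal{L}(\mathbf{h}) = \ell_y(\mathbf{z})$. Here $\mathbf{z} = W_U f(\mathbf{h})$ are the logits, and $\ell_y(\mathbf{z}) = -z_y + \log\sum_i e^{z_i}$ is the softmax cross-entropy.

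Recall the definition of the generalized Gauss--Newton matrix for $\mathcal{L} = \ell\circ g$. It is $\mathcal{J}_g^{\top}\,\nabla^2\ell\,\mathcal{J}_g$, obtained from the full Hessian by dropping the term that involves second derivatives of $g$. Differentiating twice gives
\begin{equation*}
\nabla^2_{\mathbf{h}}\mathcal{L} = (W_U J)^{\top}\,\nabla^2_{\mathbf{z}}\ell_y\,(W_U J) + \sum_{k} \frac{\partial \ell_y}{\partial z_k}\,\nabla^2_{\mathbf{h}} z_k .
\end{equation*}
The GGN is the first term.

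Two choices of inner map need to be compared. The natural one is $g(\mathbf{h}) = W_U f(\mathbf{h})$, whose Jacobian is $W_U J$. One could also split at $\boldsymbol{\lambda}$, with $g = f$ and outer loss $\boldsymbol{\lambda}\mapsto \ell_y(W_U\boldsymbol{\lambda})$. These give the same GGN, because the unembedding is linear and contributes no curvature. I will remark on this so the statement is unambiguous.

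Next, compute the logit Hessian of the cross-entropy. The standard calculation gives $\nabla^2_{\mathbf{z}}\ell_y = \mathrm{diag}(\mathbf{p}) - \mathbf{p}\mathbf{p}^{\top}$, where $\mathbf{p} = p_{\boldsymbol{\lambda}}$. This matrix is independent of $y$, which is the key point: the GGN does not depend on the sampled label and coincides with the Fisher information in the logits.

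Then sandwich it by $W_U$:
\begin{equation*}
W_U^{\top}\big(\mathrm{diag}(\mathbf{p}) - \mathbf{p}\mathbf{p}^{\top}\big)W_U = \sum_i p_i \mathbf{w}_i\mathbf{w}_i^{\top} - \bar{\mathbf{w}}\bar{\mathbf{w}}^{\top}.
\end{equation*}
This is exactly $H$ as defined in \eqref{eq:kl_fisher}. Hence the first term equals $J^{\top}HJ = G$, as Theorem~\ref{thm:pullback} defines it.

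The main obstacle is definitional rather than computational. The GGN depends on where the composition is split, and I must show that any split at or after $\boldsymbol{\lambda}$ gives $G$, using the linearity of $W_U$. I will also note that the network nonlinearity sits entirely in $J$. Consequently, the discarded residual term $\sum_k (p_k - \delta_{ky})\nabla^2_{\mathbf{h}} z_k$ vanishes in expectation over $y\sim\mathbf{p}$. This explains why the GGN, the Fisher, and the expected Hessian all agree here. I will state this as a corollary remark rather than as part of the claim.
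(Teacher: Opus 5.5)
Your argument is correct and is essentially the paper's proof. The paper splits the composition at $\boldsymbol{\lambda}$, writes the outer loss as $A(\boldsymbol{\lambda})-\mathbf{w}_y^{\top}\boldsymbol{\lambda}$, and uses $\nabla^{2}A=H$ (independent of $y$); you split at the logits and sandwich $\mathrm{diag}(\mathbf{p})-\mathbf{p}\mathbf{p}^{\top}$ by $W_U$, which is the same computation because $W_U$ is linear. One caution on your side remark: ``at or after $\boldsymbol{\lambda}$'' should read ``between $\boldsymbol{\lambda}$ and the logits,'' since splitting after the softmax (outer loss $-\log p_y$ in $\mathbf{p}$) instead gives the label-dependent rank-one matrix $J^{\top}(\mathbf{w}_y-\bar{\mathbf{w}})(\mathbf{w}_y-\bar{\mathbf{w}})^{\top}J$, which equals $G$ only in expectation over $y\sim\mathbf{p}$.
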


The curvature that natural-gradient methods use to precondition parameter updates is thus, restricted to one layer's activations, the metric that governs steering there \citep{martens2020new}.
\subsection{Optimal Steering Direction}
\label{subsec:optimal}
The metric of Section~\ref{subsec:pullback_metric} reduces the choice of a steering direction to a constrained optimization. An effective intervention must induce a prescribed change in the target concept while minimizing the accompanying movement of the prediction, which Theorem~\ref{thm:pullback} quantifies as $\tfrac{1}{2}\,\delta\mathbf{h}^{\top} G\,\delta\mathbf{h}$. The concept is captured by the pulled-back covector $\mathbf{q} = J^{\top}\boldsymbol{\beta}_W$ of \eqref{eq:pullback_covector}, and a first-order concept change of magnitude $\rho$ corresponds to the linear constraint $\mathbf{q}^{\top}\delta\mathbf{h} = \rho$. The resulting quadratic program admits a closed-form solution.

\begin{theorem}[Optimal steering direction]
	\label{thm:optimal_steering}
	Suppose $\mathbf{q} \in \operatorname{Range}(G)$. Among all interventions achieving a concept change $\mathbf{q}^{\top}\delta\mathbf{h} = \rho$, the minimum-norm minimizer of the output movement $\tfrac{1}{2}\,\delta\mathbf{h}^{\top} G\,\delta\mathbf{h}$ is
	\begin{equation}
		\label{eq:optimal}
		\delta\mathbf{h}^{\star} = \frac{\rho}{\mathbf{q}^{\top} G^{\dagger} \mathbf{q}}\, G^{\dagger}\mathbf{q},
	\end{equation}
	where $G^{\dagger}$ is the Moore--Penrose pseudoinverse of $G$.
\end{theorem}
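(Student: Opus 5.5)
The plan is to reduce the problem to the range of $G$, where $G$ is invertible, and solve it there by a Cauchy--Schwarz argument in the $G$-inner product. The minimum-norm clause then comes from an orthogonal decomposition. Throughout I assume $\mathbf{q}\neq 0$; otherwise the constraint is infeasible for $\rho\neq0$ and the formula is undefined. By Theorem~\ref{thm:pullback}, $G$ is symmetric positive semi-definite, so $\mathbb{R}^d=\operatorname{Range}(G)\oplus\operatorname{Null}(G)$ orthogonally. Moreover $G^{\dagger}$ is symmetric PSD with the same range, and $G^{\dagger}G=GG^{\dagger}$ is the orthogonal projector $P$ onto $\operatorname{Range}(G)$.

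\textbf{Step 1 (decoupling).} Write $\delta\mathbf{h}=\mathbf{u}+\mathbf{n}$ with $\mathbf{u}\in\operatorname{Range}(G)$ and $\mathbf{n}\in\operatorname{Null}(G)$. The objective satisfies $\delta\mathbf{h}^{\top}G\,\delta\mathbf{h}=\mathbf{u}^{\top}G\mathbf{u}$. Because $\mathbf{q}\in\operatorname{Range}(G)=\operatorname{Null}(G)^{\perp}$, the constraint becomes $\mathbf{q}^{\top}\mathbf{u}=\rho$. So $\mathbf{n}$ enters neither the objective nor the constraint, and the problem reduces to minimizing $\mathbf{u}^{\top}G\mathbf{u}$ over $\mathbf{u}\in\operatorname{Range}(G)$ with $\mathbf{q}^{\top}\mathbf{u}=\rho$.

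\textbf{Step 2 (lower bound and minimizer).} Use the symmetric square roots $G^{1/2}$ and $(G^{\dagger})^{1/2}=(G^{1/2})^{\dagger}$, which satisfy $(G^{\dagger})^{1/2}G^{1/2}=P$. For $\mathbf{u}\in\operatorname{Range}(G)$ we then have
\[
\rho=\mathbf{q}^{\top}\mathbf{u}=\big((G^{\dagger})^{1/2}\mathbf{q}\big)^{\top}\big(G^{1/2}\mathbf{u}\big),
\]
since $P\mathbf{u}=\mathbf{u}$. Cauchy--Schwarz gives
\[
\rho^2\le(\mathbf{q}^{\top}G^{\dagger}\mathbf{q})(\mathbf{u}^{\top}G\mathbf{u}).
\]
Here $\mathbf{q}^{\top}G^{\dagger}\mathbf{q}>0$, because $G^{\dagger}$ is positive definite on $\operatorname{Range}(G)\ni\mathbf{q}\neq0$. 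Equality holds iff $G^{1/2}\mathbf{u}\propto(G^{\dagger})^{1/2}\mathbf{q}$. Since $G^{1/2}$ is injective on its range, this is equivalent to $\mathbf{u}\propto G^{\dagger}\mathbf{q}$. The constraint fixes the scalar to $\rho/(\mathbf{q}^{\top}G^{\dagger}\mathbf{q})$, which yields exactly \eqref{eq:optimal}. The minimal cost is $\rho^2/(2\,\mathbf{q}^{\top}G^{\dagger}\mathbf{q})$, and the minimizer within the range is unique. A Lagrange-multiplier derivation ($G\,\delta\mathbf{h}=\mu\mathbf{q}$, solved via the pseudoinverse) gives the same result and could replace this step.

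\textbf{Step 3 (minimum norm).} By Steps 1--2, the full set of minimizers is $\delta\mathbf{h}^{\star}+\operatorname{Null}(G)$. Since $\delta\mathbf{h}^{\star}\in\operatorname{Range}(G)\perp\operatorname{Null}(G)$, Pythagoras gives $\|\delta\mathbf{h}^{\star}+\mathbf{n}\|^2=\|\delta\mathbf{h}^{\star}\|^2+\|\mathbf{n}\|^2$, so the unique minimum-norm minimizer is $\mathbf{n}=0$, i.e.\ $\delta\mathbf{h}^{\star}$ itself.

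The main obstacle is the bookkeeping around singular $G$. One must check that the hypothesis $\mathbf{q}\in\operatorname{Range}(G)$ is exactly what makes the null-space component irrelevant to the constraint. Without it, a null direction with $\mathbf{q}^{\top}\mathbf{n}\neq0$ would give zero cost, and no minimizer of the stated form would exist. One must also justify the pseudoinverse square-root identities, which is most easily done by diagonalizing $G=U\Lambda U^{\top}$ and reading everything off in the eigenbasis.
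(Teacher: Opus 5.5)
Your proof is correct, but it establishes optimality by a different mechanism from the paper's.

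The paper stays with the Lagrangian. It writes the stationarity condition $G\,\delta\mathbf{h}=\nu\mathbf{q}$ and parametrizes all its solutions as $\nu\,G^{\dagger}\mathbf{q}+\mathbf{k}$ with $\mathbf{k}\in\ker(G)$. It then invokes convexity of the objective, plus its constancy along $\ker(G)$, to identify these stationary points with the minimizers. The minimum-norm element is selected by $\operatorname{Range}(G)\perp\ker(G)$, exactly as in your Step~3.

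You instead discard the kernel component up front and get a global lower bound on $\operatorname{Range}(G)$ by Cauchy--Schwarz in the whitened coordinates $G^{1/2}\mathbf{u}$ and $(G^{\dagger})^{1/2}\mathbf{q}$.

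What your route buys:
global optimality, the optimal value $\rho^{2}/(2\,\mathbf{q}^{\top}G^{\dagger}\mathbf{q})$, and uniqueness within the range all come directly from the equality case. You never rely on first-order conditions being sufficient for convex programs. Your route also makes plain that the minimizer set is all of $\delta\mathbf{h}^{\star}+\ker(G)$; the paper's $\ker(G)\cap\mathbf{q}^{\perp}$ reduces to $\ker(G)$ because $\mathbf{q}\perp\ker(G)$. Finally, it is the same Cauchy--Schwarz device the paper uses in Appendix~I to show that any proxy metric or fixed direction costs at least as much.

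What the paper's route buys:
it is shorter, and it produces the relation $G\,\delta\mathbf{h}^{\star}=\nu\mathbf{q}$ explicitly. The Fisher--Pythagorean identity of Appendix~I is built on that relation, and it presents the optimum as the covector $\mathbf{q}$ with its index raised by $G$.
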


A Lagrange multiplier turns the constraint into the stationarity condition $G\,\delta\mathbf{h} = \nu\mathbf{q}$, which \eqref{eq:optimal} solves while matching the constraint. When $G$ is rank-deficient the minimizers form an affine family differing by directions in $\ker(G)$, all of equal second-order cost; \eqref{eq:optimal} selects the minimum-norm element, and the regularization of Section~\ref{subsec:computation} makes $\mathbf{q} \in \operatorname{Range}(G)$ hold in practice by replacing $G$ with $G + \alpha I$. The full derivation is in \appProofs. The direction is that of $G^{\dagger}\mathbf{q}$, not of $\mathbf{q}$: the metric rotates the raw concept covector toward the cheap directions of the output geometry, and the scalar prefactor sets its length to meet the target $\rho$.

Existing additive methods correspond to this rule under a Euclidean metric. Setting $G = I$ makes \eqref{eq:optimal} collapse to $\delta\mathbf{h}^{\star} = (\rho / \lVert\mathbf{q}\rVert^{2})\,\mathbf{q}$, a step along $\mathbf{q}$ itself; methods that add a fixed direction take the same flat metric, though their direction is an empirical estimate rather than the pulled-back covector.
\subsection{Preservation of Non-Target Behavior}
\label{subsec:nontarget}

Theorem~\ref{thm:optimal_steering} minimizes the total movement of the prediction, whereas the quantity of practical interest is the movement of the non-target distribution, since the on-target change is intended. These two objectives coincide under a separability condition on the concept: the predicted distribution factorizes into a concept factor carried by $\mathcal{Y}_W$ and a content factor shared with $\mathcal{Y}_W^{c}$, which holds exactly when all counterfactual pairs share the same within-pair log-odds. \appProofs{} states the condition formally and proves this criterion.

\begin{theorem}[Non-target optimality]
	\label{thm:nontarget}
Suppose the concept $W$ is separable in the sense above. Then, among all interventions achieving a concept change $\mathbf{q}^{\top}\delta\mathbf{h} = \rho$, the optimal direction $\delta\mathbf{h}^{\star}$ of \eqref{eq:optimal} also minimizes the off-target divergence $D_{\mathrm{KL}}^{\mathrm{off}}$ of \eqref{eq:offtarget} to second order.
\end{theorem}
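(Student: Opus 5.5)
We formalize separability first and then reduce $D_{\mathrm{KL}}^{\mathrm{off}}$ to a fixed multiple of the total cost of Theorem~\ref{thm:pullback}. Concretely, we take separability to mean that the logits decompose as $W_U\boldsymbol{\lambda} = \mathbf{a}(\boldsymbol{\lambda}) + \mathbf{b}(\boldsymbol{\lambda})$. Here $\mathbf{b}$ is supported on $\mathcal{Y}_W$ and depends on $\boldsymbol{\lambda}$ only through the concept score $\boldsymbol{\beta}_W^\top\boldsymbol{\lambda}$, adding a common within-pair log-odds shift. The component $\mathbf{a}$ is the content part, shared between each counterfactual pair and the rest of the vocabulary. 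Under this decomposition $p_{\boldsymbol{\lambda}}$ factorizes as a concept factor times a content factor, and the renormalized distribution $p^{\mathrm{off}}_{\boldsymbol{\lambda}}$ of \eqref{eq:offtarget} depends only on the content factor. The equal-log-odds criterion from \appProofs{} is what licenses this description, so we take it as given.

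Next we split the output Fisher metric. The KL divergence of a product family is additive, and the second-order expansion \eqref{eq:kl_fisher} applies to each factor separately. This gives $H = H_{\mathrm{c}} + H_{\mathrm{off}}$, where $H_{\mathrm{c}}$ is the rank-one Fisher information of the concept factor along the direction $\boldsymbol{\beta}_W$, and $H_{\mathrm{off}}$ is the Fisher information of the renormalized off-target distribution. Pulling back through $J$ as in Theorem~\ref{thm:pullback} yields
\[
D_{\mathrm{KL}}^{\mathrm{off}} \approx \tfrac12\,\delta\mathbf{h}^\top G_{\mathrm{off}}\,\delta\mathbf{h}, \qquad G = G_{\mathrm{off}} + c\,\mathbf{q}\mathbf{q}^\top ,
\]
where $c\ge 0$ is the concept-factor Fisher scalar, $G_{\mathrm{off}} = J^\top H_{\mathrm{off}} J$, and $\mathbf{q} = J^\top\boldsymbol{\beta}_W$ as in \eqref{eq:pullback_covector}.

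The conclusion then follows from the constraint. On the feasible set $\mathbf{q}^\top\delta\mathbf{h} = \rho$ we have
\[
\tfrac12\,\delta\mathbf{h}^\top G\,\delta\mathbf{h} = \tfrac12\,\delta\mathbf{h}^\top G_{\mathrm{off}}\,\delta\mathbf{h} + \tfrac12\,c\rho^2 ,
\]
so the total cost and the off-target cost differ by a constant and have the same minimizers. Theorem~\ref{thm:optimal_steering} says $\delta\mathbf{h}^\star$ minimizes the total cost over this set, so it also minimizes $D_{\mathrm{KL}}^{\mathrm{off}}$ to second order. If $G$ is singular, the minimizers of the two objectives are the same affine family, and $\delta\mathbf{h}^\star$ belongs to it.

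The main obstacle is the second step: showing that the Fisher matrix of the softmax really decomposes with the concept block rank one along $\boldsymbol{\beta}_W$ and with no cross terms. To get this, I would write the probabilities as $p(y) = \pi(z)\,r(y)$, with $z$ the concept value and $r$ the content marginal shared by pairs. I would then expand the log-likelihood Hessian and use the fact that the score functions of independent factors are uncorrelated, which makes the cross terms vanish. Finally, I would identify the concept score direction with the mean-difference probe $\boldsymbol{\beta}_W$. The separability assumption makes that identification exact, since the within-pair log-odds equal a common function of $\boldsymbol{\beta}_W^\top\boldsymbol{\lambda}$.
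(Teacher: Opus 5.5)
Your reduction breaks at the Fisher split $H = H_{\mathrm{c}} + H_{\mathrm{off}}$. Separability factorizes $p_{\boldsymbol{\lambda}}$ into a concept factor $P^{W}$ and a content factor $P^{Z}$. But $P^{Z}$ is a distribution over one atom $z_i$ per counterfactual pair \emph{and} one atom per token of $\mathcal{Y}_W^{c}$; your own $\mathbf{a}$ is shared by the pairs and the rest of the vocabulary. The chain rule therefore gives $H = H_{\mathrm{c}} + H_Z$, where $H_Z$ is the Fisher information of the whole content factor, not of $p^{\mathrm{off}}$. Splitting $P^{Z}$ once more over the partition into pair atoms and non-concept tokens gives
\[
D_{\mathrm{KL}}\big(P^{Z}_{\boldsymbol{\lambda}_0}\,\big\|\,P^{Z}_{\boldsymbol{\lambda}}\big) = d(m_0\,\|\,m) + m_0\,D_{\mathrm{KL}}^{\mathrm{pair}} + (1-m_0)\,D_{\mathrm{KL}}^{\mathrm{off}},
\]
where $m$ is the mass on the pair atoms. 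The first two terms record how much total mass sits on the concept tokens and how it is spread across pairs. Neither is fixed by $\mathbf{q}^{\top}\delta\mathbf{h} = \rho$, which controls only the common within-pair log-odds. So on the feasible set the total cost equals $\tfrac12 c\rho^{2}$ plus the \emph{content} cost, not plus the off-target cost. Your identity $G = G_{\mathrm{off}} + c\,\mathbf{q}\mathbf{q}^{\top}$ is false in general: the correct remainder $J^{\top}H_Z J$ adds the mass and across-pair terms to $(1-m_0)\,G_{\mathrm{off}}$.

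Once corrected, your argument shows that $\delta\mathbf{h}^{\star}$ minimizes the content divergence $D_{\mathrm{KL}}(P^{Z}_{\boldsymbol{\lambda}_0}\|P^{Z}_{\boldsymbol{\lambda}})$. That is exactly what the paper proves (Theorem~\ref{thm:app:nontarget}), and the paper gets there more cheaply. It uses the exact chain-rule identity
\[
D_{\mathrm{KL}}(p_{\boldsymbol{\lambda}_0}\|p_{\boldsymbol{\lambda}}) = \big(\textstyle\sum_i P^{Z}_{\boldsymbol{\lambda}_0}(z_i)\big)\,D_{\mathrm{KL}}(P^{W}_{\boldsymbol{\lambda}_0}\|P^{W}_{\boldsymbol{\lambda}}) + D_{\mathrm{KL}}(P^{Z}_{\boldsymbol{\lambda}_0}\|P^{Z}_{\boldsymbol{\lambda}}).
\]
The concept term is constant on $\mathcal{A}_\rho$, so no Fisher-level analysis of cross terms or rank-one structure is needed, and the obstacle you flag disappears. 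The paper then passes from $P^{Z}$ to $D_{\mathrm{KL}}^{\mathrm{off}}$ only in Remark~\ref{rem:offtarget}, by arguing that $D_{\mathrm{KL}}^{\mathrm{off}}$ is the dominant component when the concept tokens carry a small share $m_0$ of the mass. To finish your proof you need that step or an equivalent extra hypothesis, for example that the steer leaves $m$ and the across-pair distribution unchanged to second order. Separability alone does not yield exact second-order minimization of $D_{\mathrm{KL}}^{\mathrm{off}}$.
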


Separability decomposes the output movement into a target and a non-target term, and the constraint $\mathbf{q}^{\top}\delta\mathbf{h} = \rho$ fixes the former; minimizing the total is therefore equivalent to minimizing the latter alone. The proof is given in \appProofs. The direction that is economical in aggregate is therefore economical precisely where it matters.
\subsection{Structure and Computation}
\label{subsec:computation}

The pullback metric inherits structure from the layers it is pulled through. Writing the Jacobian as the product of per-layer Jacobians expresses $G^{(\ell)}$ recursively in terms of $G^{(\ell+1)}$ (\appStructure), so the metric at an early layer accumulates the transformations of every layer above it. One effect of this accumulation is a growth in the anisotropy of $G$ with depth, for which the per-layer factors provide an upper bound on the condition number (\appStructure). This bound constrains how large the anisotropy can become but does not by itself determine where steering is most effective, a question we return to empirically in Section~\ref{sec:experiments}.

Two properties of $G$ shape its use in practice. First, the metric is rank-deficient and ill-conditioned, so the pseudoinverse in \eqref{eq:optimal} must be regularized; we use a scale-equivariant Tikhonov term whose strength adapts to the spectrum of $G$, which leaves the solution invariant to a rescaling of the activations and introduces a controlled bias (\appReg). Second, the direction of \eqref{eq:optimal} is a first-order construction, yet it remains accurate beyond the regime in which the second-order cost of Theorem~\ref{thm:pullback} is itself accurate: the deviation of the direction is of order $\rho^{2}$ (\appReg). This stability licenses an iterative procedure, in which the metric and the covector are recomputed at the updated activation and a further step is taken, so that a large concept change is reached through a sequence of steps each solved in closed form.

Forming $G$ explicitly is infeasible at the scale of an $8$B model, where assembling the Jacobian alone would require thousands of Jacobian-vector products at every step. The direction can nonetheless be computed matrix-free, as described in Section~\ref{sec:experiments} and \appApprox.

\section{Experiments}
\label{sec:experiments}

The experiments address three questions in the order the theory raises them: whether the pullback metric of a trained Transformer departs from the Euclidean metric at the layers where steering is applied (\S\ref{subsec:exp_geometry}); whether the closed-form direction of Theorem~\ref{thm:optimal_steering} outperforms existing steering methods at matched concept change (\S\ref{subsec:exp_offtarget}); and whether the observed gain is attributable to the metric itself (\S\ref{subsec:exp_ablation}). Section~\ref{subsec:exp_analysis} examines how the gain depends on depth.

\subsection{Setup}
\label{subsec:exp_setup}

\paragraph{Models and layers.}
We evaluate on GPT-2 Small \citep{radford2019language} ($d=768$), where every quantity in \eqref{eq:optimal} is computed exactly, and on Llama-3-8B \citep{grattafiori2024llama} and Qwen3-8B \citep{yang2025qwen3} ($d=4096$), where forming the metric exactly is computationally out of reach and the approximation summarized below is used. Interventions are applied at layers spanning early to late depth: $\{3,6,9,11\}$ for GPT-2 Small, $\{8,16,24\}$ for Llama-3-8B, and $\{6,18,27\}$ for Qwen3-8B. The main comparisons cover all four GPT-2 Small layers and the early and middle layers of the two 8B models; the deepest 8B layers are analyzed separately in Section~\ref{subsec:exp_analysis}.
\begin{table*}[t]
	\centering
	\small
	\begin{tabular}{l ccc ccc ccc}
		\toprule
		& \multicolumn{3}{c}{GPT-2 Small (L3--L11)} & \multicolumn{3}{c}{Llama-3-8B (L8, L16)} & \multicolumn{3}{c}{Qwen3-8B (L6, L18)} \\
		\cmidrule(lr){2-4} \cmidrule(lr){5-7} \cmidrule(lr){8-10}
		Method & Ratio [IQR] & Win\% & $n$ & Ratio [IQR] & Win\% & $n$ & Ratio [IQR] & Win\% & $n$ \\
		\midrule
		CAA        & 1.42 [0.81, 3.03] & 68.7 & 332 & 2.22 [0.88, 5.66] & 69.0 & 142 & 1.77 [0.94, 5.19] & 71.3 & 223 \\
		ActAdd     & 6.52 [2.32, 30.89] & 93.9 & 198 & 3.50 [1.63, 10.98] & 84.5 & 84 & 2.93 [1.38, 8.47] & 85.0 & 206 \\
		ITI        & 2.01 [1.11, 5.59] & 80.1 & 337 & 3.08 [1.05, 7.91] & 77.9 & 113 & 3.61 [1.37, 9.44] & 87.6 & 193 \\
		ReFT       & 2.67 [1.36, 8.52] & 87.2 & 329 & 2.20 [0.89, 5.60] & 71.1 & 159 & 1.93 [0.94, 5.87] & 70.3 & 192 \\
		RepSurgery & 1.97 [1.16, 4.99] & 81.0 & 327 & 2.75 [1.06, 7.07] & 76.6 & 141 & 2.11 [1.00, 5.72] & 74.5 & 235 \\
		\midrule
		Euclidean (ablation) & 1.30 [1.03, 1.82] & 80.1 & 366 & 1.79 [0.90, 5.13] & 70.3 & 236 & 2.45 [1.38, 4.84] & 86.7 & 165 \\
		\midrule
		Best-per-case (oracle) & 1.25 [0.75, 2.63] & 63.5 & 359 & 1.64 [0.75, 4.35] & 64.3 & 221 & 1.49 [0.89, 4.29] & 66.9 & 399 \\
		\bottomrule
	\end{tabular}
	\caption{FishBack lowers off-target KL divergence at matched concept change against every steering baseline, the Euclidean ablation, and a per-case oracle. Entries are the median [IQR] of the per-case ratio $\mathrm{KL}_{\mathrm{method}}/\mathrm{KL}_{\mathrm{FishBack}}$, pooled over concepts and targets; values above $1$ favor FishBack. Pooled entries are descriptive; per-cell tests appear in \appStats.}
	\label{tab:main}
\end{table*}
\paragraph{Concepts and evaluation.}
We adopt the counterfactual protocol of \citet{park2026information}. A concept is specified by token pairs differing only in the target attribute; we use three verb-morphology concepts---third-person-singular (\emph{run}$\to$\emph{runs}), progressive (\emph{run}$\to$\emph{running}), and past tense (\emph{run}$\to$\emph{ran})---each instantiated in natural-language contexts drawn from the C4 validation set \citep{raffel2020exploring} and retained by the acceptance criterion of that protocol: $50$ contexts per layer on GPT-2 Small and Llama-3-8B and $100$ on Qwen3-8B (\appProtocol). The concept probability
\begin{equation}
	\label{eq:concept_prob}
	P^{W} = \frac{\sum_{i} p(y_i^{1})}{\sum_{i} p(y_i^{0}) + \sum_{i} p(y_i^{1})}
\end{equation}
is a ratio of token masses that bisection can drive to a prescribed value, and the vocabulary partitions cleanly into concept and non-concept tokens, so the off-target divergence \eqref{eq:offtarget} at matched concept change is well defined. Under the separability condition of Section~\ref{subsec:nontarget}, $P^{W}$ is a strictly monotone function of the concept score $\boldsymbol{\beta}_W^{\top}\boldsymbol{\lambda}$, so prescribing a level of $P^{W}$ and prescribing the concept change $\rho$ of Theorem~\ref{thm:optimal_steering} are two coordinates for the same constraint; we report $P^{W}$ because it is directly interpretable and identical across methods.
Three concepts suffice to test what this paper claims. Whether geometric correction can reduce the local steering cost is settled theoretically rather than empirically: \appUnified{} proves that, under the second-order model, steering under a flat metric is never cheaper than steering under $G$, for any concept direction whatsoever, with equality only in a degenerate case. Adding concepts cannot strengthen or weaken that result. What experiments must establish is that the predicted advantage survives at practical step sizes, appears in the off-target component specifically, and is of the size the theory predicts---and the theory is specific about the size, attributing it to how the concept direction spreads across the spectrum of $G$. Section~\ref{subsec:exp_analysis} measures that quantity directly. The evidence here is therefore not three observations of one effect, but an effect that varies, across nearly two thousand cases and three model families, with a quantity named in advance.

\paragraph{Steering task and matched comparison.}
Each method steers each context toward four targets $P^{W} \in \{0.3, 0.5, 0.7, 0.9\}$. All comparisons are paired at matched concept change: a case enters a comparison only when both methods reach the same target on the same context, so off-target divergences are compared at equal on-target effect. Aggregation over targets takes the geometric mean of the four per-target ratios and requires all four targets to be reached; runs whose calibration fails to reach a target are excluded from that target's pairs, and per-cell counts appear in Table~\ref{tab:main} and \appStats.

\paragraph{Methods.}
FishBack reaches each target by iteration. At the current activation it solves the regularized system of \eqref{eq:optimal} in closed form, and applies the resulting direction at a fixed concept efficiency: the component along $\mathbf{q}$ is set to a constant and the remainder follows the orthogonal part of the solved direction (\appApprox). Fixing the concept efficiency guarantees progress toward the target when $G$ is ill-conditioned, and the same construction is used for the Euclidean ablation, which retains only the concept component, so the two runs differ exactly by the geometric term. The activation is then updated, and the covector and metric are re-evaluated at the new point---exactly at every step on GPT-2 Small, and through the fixed-Jacobian approximation described below on the 8B models---until the trajectory of $P^{W}$ has crossed the targets. Because the trajectory is discrete, each crossing is localized by bisection in activation space between the adjacent iterates, and the off-target divergence of \eqref{eq:offtarget} is evaluated at the localized crossing, so every method is scored at exactly the prescribed concept level. The same localization applies to the baselines, whose trajectories arise from sweeping the magnitude of their fixed directions. At the scale of an 8B model, exact computation is no longer viable: forming $G$ at $d=4096$ over a $10^{5}$-token vocabulary exceeds practical memory and time budgets by orders of magnitude. We therefore developed an approximation that computes the Fisher correction while avoiding the explicit matrix. The action of $G$ on a vector is evaluated by a finite-difference Jacobian--vector product followed by a backward product under automatic differentiation; the softmax Fisher is restricted to the roughly $10^{3}$ tokens carrying the predictive mass; and a randomized decomposition of rank $50$ \citep{halko2011finding} supplies the low-dimensional subspace in which the regularized inverse is taken, with the Jacobian held fixed across steps. Each component is validated in \appApprox. We compare against five steering methods: CAA \citep{rimsky2024steering}, ActAdd \citep{turner2023activation}, ITI \citep{li2023inference}, ReFT \citep{wu2024reft}, and Representation Surgery \citep{singh2024representation}. Two references complete the comparison: the \emph{Euclidean} ablation, which runs the same iterative pipeline with $G$ replaced by the identity and thus steers along $\mathbf{q}$ itself; and \emph{Best-per-case}, a per-case oracle that selects, for every case, the baseline with the smallest off-target divergence.

\paragraph{Statistics.}
Off-target comparisons are summarized by per-case ratios $\mathrm{KL}_{\mathrm{baseline}}/\mathrm{KL}_{\mathrm{FishBack}}$, so values above $1$ favor FishBack; we report medians with interquartile ranges. Significance uses one-sided paired Wilcoxon signed-rank tests on the log-ratios, each pair consisting of the two methods' off-target divergences for the same context at the same target, whose null distribution is symmetric where that of raw ratios is not, with Benjamini--Hochberg correction applied within families defined per model and aggregation level (\appStats); cells with fewer than six pairs are marked underpowered and excluded from confirmatory claims. Families, audits, and per-cell tables appear in \appStats.
\begin{figure}[t]
	\centering
	\includegraphics[width=\columnwidth]{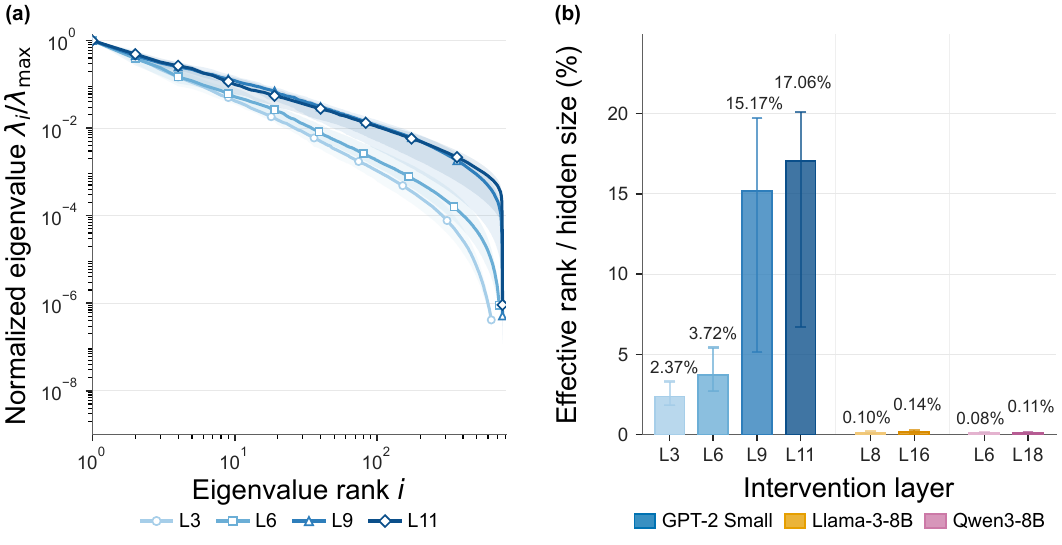}
	\caption{The pullback Fisher metric is strongly non-Euclidean at every layer studied. \textbf{Left:} normalized eigenvalue spectra of $G$, decaying by more than six orders of magnitude. \textbf{Right:} median effective rank as a fraction of the hidden dimension, with interquartile ranges over contexts.}
	\label{fig:geometry}
\end{figure}
\subsection{Geometry of the Intermediate Layers}
\label{subsec:exp_geometry}

Theorem~\ref{thm:pullback} makes the operative geometry explicit; the first experiment measures how far it departs from the flat metric that additive methods assume. On GPT-2 Small, where $G$ is computed exactly, the deviation from the closest scaled identity (the deviation identity of \appGeometry) exceeds $97\%$ in relative Frobenius norm at every layer, and the spectrum decays by more than six orders of magnitude within the first few hundred eigendirections (Figure~\ref{fig:geometry}, left): the median effective rank per layer spans $2$--$17\%$ of the hidden dimension across layers $3$--$11$, with condition numbers on the order of $10^{7}$. The two 8B models concentrate further still---at their early and middle layers the median effective rank, computed within the rank-$50$ subspace retained by the approximation, falls below $0.2\%$ of $d=4096$ (Figure~\ref{fig:geometry}, right), a concentration with two consequences realized below: the rank-$50$ subspace captures essentially all of the metric's mass, and the geometric correction has more room to act (Section~\ref{subsec:exp_ablation}).  The premise of Section~\ref{sec:pullback}---that steering cost is governed by a strongly anisotropic quadratic form---holds quantitatively at every layer used below.
\begin{figure}[t]
	\centering
	\includegraphics[width=\columnwidth]{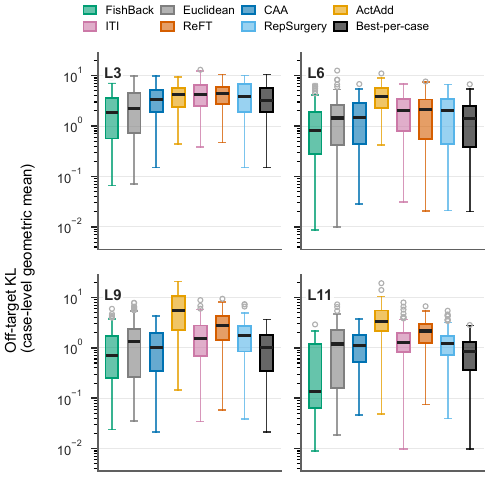}
	\caption{On GPT-2 Small, FishBack attains the lowest off-target KL divergence at matched concept change at every layer. Boxes show the median and interquartile range of per-case off-target KL (log scale), pooled over concepts and targets.}
	\label{fig:offtarget_gpt2}
\end{figure}
\begin{figure}[t]
	\centering
	\includegraphics[width=\columnwidth]{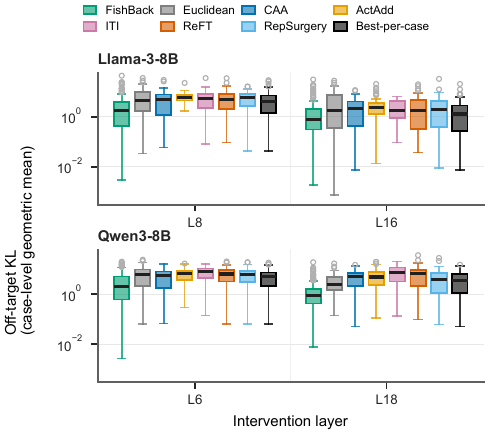}
	\caption{The off-target advantage persists on Llama-3-8B and Qwen3-8B at the early and middle layers under the matrix-free approximation. Boxes show per-case off-target KL (log scale) at matched concept change, with shared axes.}
	\label{fig:offtarget_8b}
\end{figure}
\subsection{Off-Target Comparison Against Baselines}
\label{subsec:exp_offtarget}
\begin{figure}[t]
	\centering
	\includegraphics[width=\columnwidth]{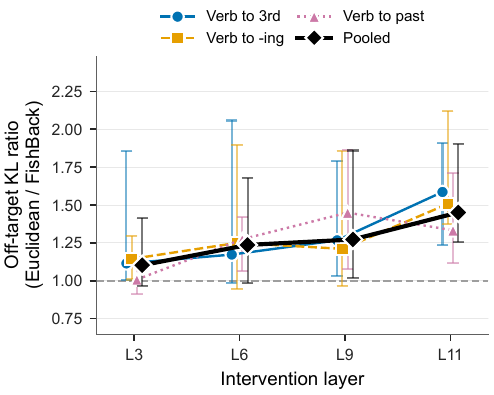}
	\caption{Ablating the metric isolates the geometric gain on GPT-2 Small: the Euclidean-to-FishBack off-target ratio lies above parity at every layer. Points are per-layer medians with interquartile ranges.}
	\label{fig:ablation_gpt2}
\end{figure}
Table~\ref{tab:main} reports the central comparison: the per-case ratio of each method's off-target divergence to FishBack's, at matched concept change, pooled over concepts, targets, and the layers in scope. On GPT-2 Small, where every step of \eqref{eq:optimal} is exact, FishBack attains a lower off-target divergence than each of the five steering methods, with median ratios from $1.42$ (CAA) to $6.52$ (ActAdd) and win rates of $69$--$94\%$; the per-cell tests remain significant after Benjamini--Hochberg correction in the large majority of concept--layer--target cells (\appStats). The advantage holds at every layer, including the deepest (Figure~\ref{fig:offtarget_gpt2}). Against the Best-per-case oracle, the median ratio is $1.25$ with a $64\%$ win rate---a thinner margin, and a bar no single fixed method faces in practice.
\begin{figure}[t]
	\centering
	\includegraphics[width=\columnwidth]{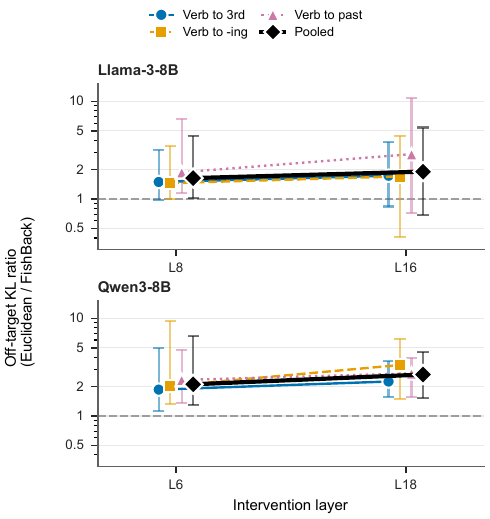}
	\caption{The geometric gain transfers to the 8B models: the Euclidean-to-FishBack off-target ratio stays above parity at the early and middle layers. Points are per-layer medians with interquartile ranges.}
	\label{fig:ablation_8b}
\end{figure}
The same protocol runs on Llama-3-8B and Qwen3-8B under the approximation of Section~\ref{subsec:exp_setup}. At the early and middle layers the advantage not only persists but widens: median ratios against individual baselines reach $2.2$--$3.5$ on Llama-3-8B and $1.8$--$3.6$ on Qwen3-8B, and the Best-per-case margins remain at $1.64$ and $1.49$ (Table~\ref{tab:main}; Figure~\ref{fig:offtarget_8b}). The deepest 8B layers, where the geometry itself contracts, are analyzed in Section~\ref{subsec:exp_analysis}. Because matched comparison conditions on both methods reaching a target, per-method reachability is reported in \appReach.

\subsection{Geometric Ablation}
\label{subsec:exp_ablation}

To attribute the advantage established above, the Euclidean ablation replaces the pullback Fisher metric $G$ in \eqref{eq:optimal} with the identity while keeping the iterative pipeline, the calibration, and the covector unchanged; the gap between the two runs therefore measures the contribution of the metric alone. Table~\ref{tab:main} (middle block) and Figures~\ref{fig:ablation_gpt2}--\ref{fig:ablation_8b} report this gap as the per-case ratio of Euclidean to FishBack off-target divergence. On GPT-2 Small the ratio sits above parity at every layer, with a pooled median of $1.30$ and an $80\%$ win rate; on the 8B models it widens to $1.79$ (Llama-3-8B) and $2.45$ (Qwen3-8B) at the early and middle layers.

\subsection{Analysis of the Advantage Regime}
\label{subsec:exp_analysis}

Section~\ref{subsec:computation} noted that the condition-number bound caps how anisotropic the geometry can become but does not by itself determine where correcting it pays. The analysis in \appUnified{} sharpens this: the attainable gain over Euclidean steering is bounded by the anisotropy of $G$, and it is realized only to the extent that the concept covector spreads across the spectrum of $G$ rather than aligning with a single eigendirection. Both quantities are measurable per case, which turns the framework into a prediction about where the ablation gap of Section~\ref{subsec:exp_ablation} should appear and where it should close.

The deepest 8B layers test the prediction. At Llama-3-8B L24 and Qwen3-8B L27 (roughly $75\%$ relative depth), the conditioning of $G$ has flattened substantially---the condition number falls by a factor of three on Llama-3-8B and forty on Qwen3-8B relative to their early layers (\appGeometry)---which lowers the ceiling of \appUnified{} on the attainable gain, and the Euclidean gap of Table~\ref{tab:main} is no longer significant. The closure is not uniform noise: at these layers the per-case gain becomes strongly predictable from the concept--spectrum alignment, with case-level Spearman correlations rising from near zero at the early layers to $0.50$ (Llama-3-8B) and $0.62$ (Qwen3-8B) at the deepest ones (\appDepth). Where the geometry is rich, the correction helps broadly; where it flattens, the gain narrows to the cases the theory selects.
\section{Conclusion}
\label{sec:conclusion}

Activation steering has treated the intermediate activation space as flat, and its familiar instabilities follow from that choice. This paper identified the metric that the choice ignores: the Fisher information of the softmax head, pulled back through the network's Jacobian, which measures to second order how much an intervention at layer $\ell$ moves the model's prediction. From this metric we derived the steering direction that achieves a prescribed concept change at the least off-target distortion, and related the widely used additive methods to steering under a flat metric. A matrix-free approximation carries the computation to 8B-scale models at a few hundred partial forward passes per case. On three verb-morphology concepts across GPT-2 Small, Llama-3-8B, and Qwen3-8B, the derived direction lowers off-target KL divergence at matched concept change by median factors of $1.4$--$6.5$x against individual steering baselines on GPT-2 Small and $1.8$--$3.6$x on the 8B models at the early and middle layers, with the geometric ablation attributing the gain to the metric itself.

\bibliography{aaai2027}
\clearpage
\appendix
\section{Proofs of the Main Results}
\label{app:proofs}
Theorem and proposition numbers in the headings below refer to the main paper; results stated here for the first time are numbered within this appendix.
\subsection{Setting and Notation}
Notation follows the main paper. The softmax head maps the final representation $\boldsymbol{\lambda}\in\mathbb{R}^{d}$ to $p_{\boldsymbol{\lambda}}=\mathrm{softmax}(W_U\boldsymbol{\lambda})$ with unembedding rows $\mathbf{w}_y$; the log-partition of this exponential family is
\begin{equation}
	\label{eq:app:logpart}
	A(\boldsymbol{\lambda})=\log\sum_{y\in\mathcal{V}}\exp(\mathbf{w}_y^{\top}\boldsymbol{\lambda}),
	p_{\boldsymbol{\lambda}}(y)=\exp\!\big(\mathbf{w}_y^{\top}\boldsymbol{\lambda}-A(\boldsymbol{\lambda})\big).
\end{equation}
We write
\begin{equation}
	\begin{aligned}
		\bar{\mathbf{w}}_{\boldsymbol{\lambda}}&=\mathbb{E}_{p_{\boldsymbol{\lambda}}}[\mathbf{w}],
		&\qquad
		H&=\mathrm{Cov}_{p_{\boldsymbol{\lambda}}}(\mathbf{w}),\\
		G&=J^{\top}HJ,
		&\qquad
		\mathbf{q}&=J^{\top}\boldsymbol{\beta}_W,
	\end{aligned}
\end{equation}
where $\boldsymbol{\lambda}=f(\mathbf{h})$ and $J$ is the Jacobian of $f$. For a concept $W$ with counterfactual pairs $\{(y_i^0,y_i^1)\}_{i=1}^{n_W}$, the concept tokens are $\mathcal{Y}_W=\bigcup_i\{y_i^0,y_i^1\}$ and $\mathcal{Y}_W^{c}$ is the remainder; $D_{\mathrm{KL}}^{\mathrm{off}}$ denotes the divergence between the distributions renormalized over $\mathcal{Y}_W^{c}$, as in the main paper.

\subsection{Proof of Theorem 1 (Second-Order Steering Cost)}

\begin{lemma}[KL as a Bregman divergence]
	\label{lem:bregman}
	For all $\boldsymbol{\lambda}_0,\boldsymbol{\lambda}_1$,
	\begin{equation}
		D_{\mathrm{KL}}\!\left(p_{\boldsymbol{\lambda}_0}\,\middle\|\,p_{\boldsymbol{\lambda}_1}\right)
		=A(\boldsymbol{\lambda}_1)-A(\boldsymbol{\lambda}_0)
		-\nabla A(\boldsymbol{\lambda}_0)^{\top}\big(\boldsymbol{\lambda}_1-\boldsymbol{\lambda}_0\big).
	\end{equation}
\end{lemma}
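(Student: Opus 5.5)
The plan is to expand the definition of the divergence directly, using the exponential-family form \eqref{eq:app:logpart}. With $p_{\boldsymbol{\lambda}}(y)=\exp(\mathbf{w}_y^{\top}\boldsymbol{\lambda}-A(\boldsymbol{\lambda}))$ we have
\[
\log\frac{p_{\boldsymbol{\lambda}_0}(y)}{p_{\boldsymbol{\lambda}_1}(y)}
=\mathbf{w}_y^{\top}(\boldsymbol{\lambda}_0-\boldsymbol{\lambda}_1)-A(\boldsymbol{\lambda}_0)+A(\boldsymbol{\lambda}_1).
\]
The log-partition terms do not depend on $y$, so the log-ratio is affine in the sufficient statistic $\mathbf{w}_y$. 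Taking the expectation under $p_{\boldsymbol{\lambda}_0}$ gives
\[
D_{\mathrm{KL}}\!\left(p_{\boldsymbol{\lambda}_0}\,\middle\|\,p_{\boldsymbol{\lambda}_1}\right)
=A(\boldsymbol{\lambda}_1)-A(\boldsymbol{\lambda}_0)-\bar{\mathbf{w}}_{\boldsymbol{\lambda}_0}^{\top}(\boldsymbol{\lambda}_1-\boldsymbol{\lambda}_0).
\]
This uses only that $p_{\boldsymbol{\lambda}_0}$ sums to one and that the expectation of $\mathbf{w}_y$ is $\bar{\mathbf{w}}_{\boldsymbol{\lambda}_0}$.

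The remaining step is the identity $\nabla A(\boldsymbol{\lambda}_0)=\bar{\mathbf{w}}_{\boldsymbol{\lambda}_0}$. Differentiating the log-sum-exp gives
\[
\nabla A(\boldsymbol{\lambda})=\sum_y \mathbf{w}_y\exp(\mathbf{w}_y^{\top}\boldsymbol{\lambda})\Big/\sum_{y'}\exp(\mathbf{w}_{y'}^{\top}\boldsymbol{\lambda})=\sum_y p_{\boldsymbol{\lambda}}(y)\,\mathbf{w}_y .
\]
Since $\mathcal{V}$ is finite, this is a finite sum of smooth functions, so there is no interchange-of-limits issue. Substituting it into the previous display gives the claimed Bregman form.

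No step here is a real obstacle; the only point needing care is the sign and argument order of the KL. The expectation must be taken under the first argument, $p_{\boldsymbol{\lambda}_0}$, so that the gradient is evaluated at $\boldsymbol{\lambda}_0$. For later use in Theorem~\ref{thm:pullback}, it is also worth recording at this point that $\nabla^2 A(\boldsymbol{\lambda})=\mathrm{Cov}_{p_{\boldsymbol{\lambda}}}(\mathbf{w})=H$, which follows by differentiating $\nabla A$ once more. A Taylor expansion of the Bregman form then gives the quadratic approximation \eqref{eq:kl_fisher}.
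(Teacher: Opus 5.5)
Your proof is correct and follows the paper's argument: expand the log-ratio using the exponential-family form, take the expectation under $p_{\boldsymbol{\lambda}_0}$, and identify $\mathbb{E}_{p_{\boldsymbol{\lambda}_0}}[\mathbf{w}]$ with $\nabla A(\boldsymbol{\lambda}_0)$. The only difference is that you verify the gradient identity inline, whereas the paper defers it to the following lemma on the Hessian of $A$.
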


\begin{proof}
	By \eqref{eq:app:logpart},
	\begin{equation}
		\log\frac{p_{\boldsymbol{\lambda}_0}(y)}{p_{\boldsymbol{\lambda}_1}(y)}
		=\mathbf{w}_y^{\top}\big(\boldsymbol{\lambda}_0-\boldsymbol{\lambda}_1\big)
		-A(\boldsymbol{\lambda}_0)+A(\boldsymbol{\lambda}_1).
	\end{equation}
	Taking the expectation under $p_{\boldsymbol{\lambda}_0}$ and using $\mathbb{E}_{p_{\boldsymbol{\lambda}_0}}[\mathbf{w}]=\nabla A(\boldsymbol{\lambda}_0)$, which is verified in the next proof, gives the claim.
\end{proof}

\begin{lemma}[Hessian equals the softmax Fisher]
	\label{lem:hessian}
	$\nabla A(\boldsymbol{\lambda})=\bar{\mathbf{w}}_{\boldsymbol{\lambda}}$ and $\nabla^{2}A(\boldsymbol{\lambda})=\mathrm{Cov}_{p_{\boldsymbol{\lambda}}}(\mathbf{w})=H$.
\end{lemma}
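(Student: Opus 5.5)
The plan is to differentiate the log-partition $A(\boldsymbol{\lambda})=\log\sum_{y}\exp(\mathbf{w}_y^{\top}\boldsymbol{\lambda})$ directly, twice, using only the chain rule and the definition $p_{\boldsymbol{\lambda}}(y)=\exp(\mathbf{w}_y^{\top}\boldsymbol{\lambda}-A(\boldsymbol{\lambda}))$ from \eqref{eq:app:logpart}. The vocabulary is finite, so $A$ is a smooth function: the log of a finite sum of exponentials, with a strictly positive argument. Differentiation under the sum is therefore trivially justified, and no analytic obstacle arises.

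\textbf{Step 1 (gradient).} Write $Z(\boldsymbol{\lambda})=\sum_y\exp(\mathbf{w}_y^{\top}\boldsymbol{\lambda})$, so $A=\log Z$. Then
\begin{equation}
\nabla A=\frac{1}{Z}\sum_y \mathbf{w}_y\exp(\mathbf{w}_y^{\top}\boldsymbol{\lambda})=\sum_y p_{\boldsymbol{\lambda}}(y)\,\mathbf{w}_y=\bar{\mathbf{w}}_{\boldsymbol{\lambda}}.
\end{equation}
This is the identity $\mathbb{E}_{p_{\boldsymbol{\lambda}_0}}[\mathbf{w}]=\nabla A(\boldsymbol{\lambda}_0)$ that Lemma~\ref{lem:bregman} deferred, so it also closes that proof.

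\textbf{Step 2 (Hessian).} Differentiate $\nabla A=\sum_y p_{\boldsymbol{\lambda}}(y)\mathbf{w}_y$ once more. The only nontrivial ingredient is the derivative of the softmax probabilities. From $\log p_{\boldsymbol{\lambda}}(y)=\mathbf{w}_y^{\top}\boldsymbol{\lambda}-A(\boldsymbol{\lambda})$ and Step 1 we get
\begin{equation}
\nabla p_{\boldsymbol{\lambda}}(y)=p_{\boldsymbol{\lambda}}(y)\bigl(\mathbf{w}_y-\bar{\mathbf{w}}_{\boldsymbol{\lambda}}\bigr).
\end{equation}
Hence
\begin{equation}
\nabla^2A=\sum_y p_{\boldsymbol{\lambda}}(y)\,\mathbf{w}_y\bigl(\mathbf{w}_y-\bar{\mathbf{w}}_{\boldsymbol{\lambda}}\bigr)^{\top}=\mathbb{E}_{p_{\boldsymbol{\lambda}}}[\mathbf{w}\mathbf{w}^{\top}]-\bar{\mathbf{w}}_{\boldsymbol{\lambda}}\bar{\mathbf{w}}_{\boldsymbol{\lambda}}^{\top},
\end{equation}
using $\sum_y p_{\boldsymbol{\lambda}}(y)\mathbf{w}_y=\bar{\mathbf{w}}_{\boldsymbol{\lambda}}$. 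This is exactly $\mathrm{Cov}_{p_{\boldsymbol{\lambda}}}(\mathbf{w})$, which matches the definition of $H$ in \eqref{eq:kl_fisher}.

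\textbf{Step 3 (remarks for later use).} $H$ is symmetric positive semidefinite, being a covariance. This is what Theorem~\ref{thm:pullback} needs, since it then follows that $G=J^{\top}HJ\succeq 0$. In addition, combining this lemma with Lemma~\ref{lem:bregman} and a second-order Taylor expansion of $A$ about $\boldsymbol{\lambda}_0$ yields $D_{\mathrm{KL}}\approx\tfrac12\Delta\boldsymbol{\lambda}^{\top}H\Delta\boldsymbol{\lambda}$. There is no real obstacle in this lemma. The only point requiring care is Step 2: keeping track of the $-\bar{\mathbf{w}}$ term coming from $\nabla A$ inside $\nabla p$, so that the mean-outer-product correction appears and one does not get the raw second moment.
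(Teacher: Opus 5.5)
Your proposal is correct and follows the paper's proof essentially verbatim: differentiate the log-partition to obtain $\bar{\mathbf{w}}_{\boldsymbol{\lambda}}$, use $\nabla p_{\boldsymbol{\lambda}}(y)=p_{\boldsymbol{\lambda}}(y)(\mathbf{w}_y-\bar{\mathbf{w}}_{\boldsymbol{\lambda}})$, and collect terms. The only cosmetic difference is the final form. You end at $\mathbb{E}_{p_{\boldsymbol{\lambda}}}[\mathbf{w}\mathbf{w}^{\top}]-\bar{\mathbf{w}}_{\boldsymbol{\lambda}}\bar{\mathbf{w}}_{\boldsymbol{\lambda}}^{\top}$, which matches the main-text definition of $H$ directly, whereas the paper recenters to $\sum_y p_{\boldsymbol{\lambda}}(y)(\mathbf{w}_y-\bar{\mathbf{w}}_{\boldsymbol{\lambda}})(\mathbf{w}_y-\bar{\mathbf{w}}_{\boldsymbol{\lambda}})^{\top}$ using $\sum_y p_{\boldsymbol{\lambda}}(y)(\mathbf{w}_y-\bar{\mathbf{w}}_{\boldsymbol{\lambda}})=\mathbf{0}$.
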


\begin{proof}
	Differentiating \eqref{eq:app:logpart},
	\begin{equation}
		\nabla A(\boldsymbol{\lambda})
		=\sum_y p_{\boldsymbol{\lambda}}(y)\,\mathbf{w}_y
		=\bar{\mathbf{w}}_{\boldsymbol{\lambda}}.
	\end{equation}
	Since $\nabla p_{\boldsymbol{\lambda}}(y)=p_{\boldsymbol{\lambda}}(y)\big(\mathbf{w}_y-\bar{\mathbf{w}}_{\boldsymbol{\lambda}}\big)$,
	\begin{align*}
		\nabla^{2}A(\boldsymbol{\lambda})
		&=\sum_y \nabla p_{\boldsymbol{\lambda}}(y)\,\mathbf{w}_y^{\top}
		=\sum_y p_{\boldsymbol{\lambda}}(y)\big(\mathbf{w}_y-\bar{\mathbf{w}}_{\boldsymbol{\lambda}}\big)\mathbf{w}_y^{\top}\\
		&=\sum_y p_{\boldsymbol{\lambda}}(y)\big(\mathbf{w}_y-\bar{\mathbf{w}}_{\boldsymbol{\lambda}}\big)\big(\mathbf{w}_y-\bar{\mathbf{w}}_{\boldsymbol{\lambda}}\big)^{\top},
	\end{align*}
	where the last step uses $\sum_y p_{\boldsymbol{\lambda}}(y)\big(\mathbf{w}_y-\bar{\mathbf{w}}_{\boldsymbol{\lambda}}\big)=\mathbf{0}$.
\end{proof}

\begin{proof}[Proof of Theorem 1]
	Let
	\begin{equation}
		\Delta\boldsymbol{\lambda}
		=f(\mathbf{h}_0+\delta\mathbf{h})-f(\mathbf{h}_0)
		=J\,\delta\mathbf{h}+O\!\big(\|\delta\mathbf{h}\|^{2}\big)
	\end{equation}
	for $f\in C^{2}$. By Lemma~\ref{lem:bregman} and a third-order Taylor expansion of $A$ at $\boldsymbol{\lambda}_0$, the first-order terms cancel and
	\begin{equation}
		D_{\mathrm{KL}}
		=\tfrac12\,\Delta\boldsymbol{\lambda}^{\top}H\,\Delta\boldsymbol{\lambda}
		+O\!\big(\|\Delta\boldsymbol{\lambda}\|^{3}\big),
	\end{equation}
	using Lemma~\ref{lem:hessian} for the quadratic coefficient. Substituting the expansion of $\Delta\boldsymbol{\lambda}$ and absorbing all cross and higher terms into the remainder yields
	\begin{equation}
		D_{\mathrm{KL}}
		=\tfrac12\,\delta\mathbf{h}^{\top}G\,\delta\mathbf{h}
		+O\!\big(\|\delta\mathbf{h}\|^{3}\big),
		\qquad
		G=J^{\top}HJ.
	\end{equation}
	Symmetry is immediate; positive semi-definiteness follows since
	$\delta\mathbf{h}^{\top}G\,\delta\mathbf{h}
	=(J\delta\mathbf{h})^{\top}H(J\delta\mathbf{h})\ge 0$
	by Lemma~\ref{lem:hessian}. An explicit bound on the third-order remainder, and the critical radius within which the quadratic model is reliable, are given in Appendix~D.
\end{proof}

\subsection{Proof of Proposition 1 (Gauss--Newton Form)}

\begin{proof}
	The negative log-likelihood of a label $y$ as a function of the final representation is
	\begin{equation}
		\mathcal{L}_y(\boldsymbol{\lambda})=A(\boldsymbol{\lambda})-\mathbf{w}_y^{\top}\boldsymbol{\lambda},
	\end{equation}
	whose Hessian $\nabla^{2}_{\boldsymbol{\lambda}}\mathcal{L}_y=\nabla^{2}A=H$ is independent of $y$ by Lemma~\ref{lem:hessian}. The generalized Gauss--Newton matrix of $\mathcal{L}_y\circ f$ with respect to $\mathbf{h}$ discards the term involving second derivatives of $f$ and retains
	\begin{equation}
		J^{\top}\big(\nabla^{2}_{\boldsymbol{\lambda}}\mathcal{L}_y\big)J
		=J^{\top}HJ=G
	\end{equation}
	for every label.
\end{proof}

\subsection{Proof of Theorem 2 (Optimal Steering Direction)}

Throughout this subsection assume $\mathbf{q}\in\mathrm{Range}(G)$ and $\mathbf{q}\neq\mathbf{0}$, so that $\mathbf{q}^{\top}G^{\dagger}\mathbf{q}>0$; in practice this is secured by the spectral regularization of the main paper, which replaces $G$ by a positive-definite $G+\alpha I$ (Appendix~D).
\begin{proof}[Proof of Theorem 2]
	The Lagrangian
	\begin{equation}
		\mathcal{L}(\delta\mathbf{h},\nu)
		=\tfrac12\,\delta\mathbf{h}^{\top}G\,\delta\mathbf{h}
		-\nu\big(\mathbf{q}^{\top}\delta\mathbf{h}-\rho\big)
	\end{equation}
	has stationarity condition
	\begin{equation}
		G\,\delta\mathbf{h}=\nu\,\mathbf{q}.
	\end{equation}
	Since $\mathbf{q}\in\mathrm{Range}(G)$, all solutions are $\delta\mathbf{h}=\nu\,G^{\dagger}\mathbf{q}+\mathbf{k}$ with $\mathbf{k}\in\ker(G)$, and the constraint fixes
	\begin{equation}
		\nu\,\mathbf{q}^{\top}G^{\dagger}\mathbf{q}+\mathbf{q}^{\top}\mathbf{k}=\rho.
	\end{equation}
	Because the objective is convex ($G\succeq 0$) and constant along $\ker(G)$, the set of minimizers is the affine family
	\begin{equation}
		\delta\mathbf{h}^{\star}+\big(\ker(G)\cap\mathbf{q}^{\perp}\big),
		\qquad
		\delta\mathbf{h}^{\star}
		=\frac{\rho}{\mathbf{q}^{\top}G^{\dagger}\mathbf{q}}\,G^{\dagger}\mathbf{q},
	\end{equation}
	all attaining the optimal cost
	\begin{equation}
		\frac{\rho^{2}}{2\,\mathbf{q}^{\top}G^{\dagger}\mathbf{q}}.
	\end{equation}
	Among them, $\delta\mathbf{h}^{\star}$ is the unique minimum-norm element, since $G^{\dagger}\mathbf{q}\in\mathrm{Range}(G)\perp\ker(G)$. Kernel directions carry no second-order cost but are not certified by the quadratic model beyond second order; the minimum-norm choice discards them. Setting $G=I$ recovers $\delta\mathbf{h}^{\star}=(\rho/\|\mathbf{q}\|^{2})\,\mathbf{q}$, the additive step of the flat special case.
\end{proof}

\subsection{Proof of Theorem 3 (Non-Target Optimality)}

\begin{definition}[Concept decomposability]
	\label{def:decomp}
	Write $\mathbf{d}_i=\mathbf{w}_{y_i^{1}}-\mathbf{w}_{y_i^{0}}$. The distribution $p_{\boldsymbol{\lambda}}$ is \emph{decomposable} for $W$ if there exist a binary distribution $P^{W}_{\boldsymbol{\lambda}}$ and a distribution $P^{Z}_{\boldsymbol{\lambda}}$ over contents---one atom $z_i$ per counterfactual pair and one atom per token of $\mathcal{Y}_W^{c}$---such that
	\begin{equation}
		\begin{aligned}
			p_{\boldsymbol{\lambda}}(y_i^{w})&=P^{W}_{\boldsymbol{\lambda}}(w)\,P^{Z}_{\boldsymbol{\lambda}}(z_i)
			&&\text{for all } i,w,\\
			p_{\boldsymbol{\lambda}}(y)&=P^{Z}_{\boldsymbol{\lambda}}(z_y)
			&&\text{for } y\in\mathcal{Y}_W^{c}.
		\end{aligned}
	\end{equation}
\end{definition}

\begin{lemma}[Criterion]
	\label{lem:criterion}
	$p_{\boldsymbol{\lambda}}$ is decomposable for $W$ if and only if $\mathbf{d}_i^{\top}\boldsymbol{\lambda}$ is the same for all $i$, i.e.
	\begin{equation}
		(\mathbf{d}_i-\mathbf{d}_j)^{\top}\boldsymbol{\lambda}=0
		\qquad\text{for all } i,j.
	\end{equation}
\end{lemma}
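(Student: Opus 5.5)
Both directions of the criterion reduce to a statement about the ratios $p_{\boldsymbol{\lambda}}(y_i^{1})/p_{\boldsymbol{\lambda}}(y_i^{0})=\exp(\mathbf{d}_i^{\top}\boldsymbol{\lambda})$, which follow from the exponential-family form \eqref{eq:app:logpart} because the log-partition $A(\boldsymbol{\lambda})$ cancels. We therefore reformulate decomposability as \emph{equal within-pair odds across all pairs} and prove the equivalence in two directions.

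For necessity, suppose $p_{\boldsymbol{\lambda}}$ is decomposable. Then for each pair $i$,
\begin{equation}
\frac{p_{\boldsymbol{\lambda}}(y_i^{1})}{p_{\boldsymbol{\lambda}}(y_i^{0})}=\frac{P^{W}_{\boldsymbol{\lambda}}(1)\,P^{Z}_{\boldsymbol{\lambda}}(z_i)}{P^{W}_{\boldsymbol{\lambda}}(0)\,P^{Z}_{\boldsymbol{\lambda}}(z_i)}=\frac{P^{W}_{\boldsymbol{\lambda}}(1)}{P^{W}_{\boldsymbol{\lambda}}(0)},
\end{equation}
which does not depend on $i$. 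The left side equals $\exp(\mathbf{d}_i^{\top}\boldsymbol{\lambda})$, so $\mathbf{d}_i^{\top}\boldsymbol{\lambda}$ is constant in $i$. Softmax probabilities are strictly positive, so $P^{Z}_{\boldsymbol{\lambda}}(z_i)>0$, $P^{W}_{\boldsymbol{\lambda}}(0)>0$, and the cancellation is legitimate.

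For sufficiency, let $c=\mathbf{d}_i^{\top}\boldsymbol{\lambda}$ be the common value. Define $P^{W}_{\boldsymbol{\lambda}}(1)=\sigma(c)$ and $P^{W}_{\boldsymbol{\lambda}}(0)=1-\sigma(c)$, with $\sigma$ the logistic function. Set $P^{Z}_{\boldsymbol{\lambda}}(z_i)=p_{\boldsymbol{\lambda}}(y_i^{0})+p_{\boldsymbol{\lambda}}(y_i^{1})$ and $P^{Z}_{\boldsymbol{\lambda}}(z_y)=p_{\boldsymbol{\lambda}}(y)$ for $y\in\mathcal{Y}_W^{c}$. Then $P^{Z}_{\boldsymbol{\lambda}}$ sums to one because it merely regroups the mass of $p_{\boldsymbol{\lambda}}$. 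The identity $p_{\boldsymbol{\lambda}}(y_i^{w})=P^{W}_{\boldsymbol{\lambda}}(w)P^{Z}_{\boldsymbol{\lambda}}(z_i)$ holds because the pair $(y_i^0,y_i^1)$ splits its total mass in the ratio $1:e^{c}$, that is, in fractions $1-\sigma(c)$ and $\sigma(c)$. The identity on $\mathcal{Y}_W^{c}$ holds by construction.

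The step needing care is the bookkeeping in Definition~\ref{def:decomp}, which implicitly treats the pairs as disjoint. If a token appears in two counterfactual pairs, or in both roles, the atom assignment $z_i$ and the grouped mass become ambiguous. I would first state that the pairs are taken disjoint, as the protocol of \citet{park2026information} provides, so that $\mathcal{Y}_W$ is a disjoint union and the regrouping above is a partition. Beyond that assumption, both directions are direct computations using the strict positivity of $p_{\boldsymbol{\lambda}}$.
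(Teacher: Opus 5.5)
Your proof is correct and is the paper's own argument: necessity follows because the within-pair odds $\exp(\mathbf{d}_i^{\top}\boldsymbol{\lambda})$ must equal $P^{W}(1)/P^{W}(0)$ and so cannot depend on $i$, and sufficiency uses the same construction $P^{W}(1)=\sigma(\mathbf{d}_1^{\top}\boldsymbol{\lambda})$, $P^{Z}(z_i)=p_{\boldsymbol{\lambda}}(y_i^{0})+p_{\boldsymbol{\lambda}}(y_i^{1})$. You also make explicit two assumptions the paper uses tacitly, strict positivity and disjoint counterfactual pairs, and the disjointness one is genuinely needed: with overlapping pairs the regrouped $P^{Z}$ does not sum to one, so sufficiency can fail.
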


\begin{proof}
	Decomposability forces the within-pair odds
	\begin{equation}
		\frac{p_{\boldsymbol{\lambda}}(y_i^{1})}{p_{\boldsymbol{\lambda}}(y_i^{0})}
		=\frac{P^{W}(1)}{P^{W}(0)}
	\end{equation}
	to be independent of $i$; by \eqref{eq:app:logpart} this odds equals $\exp(\mathbf{d}_i^{\top}\boldsymbol{\lambda})$. Conversely, if all $\mathbf{d}_i^{\top}\boldsymbol{\lambda}$ are equal, define
	\begin{equation}
		\begin{aligned}
			P^{W}(1)&=\sigma\!\big(\mathbf{d}_1^{\top}\boldsymbol{\lambda}\big),\\
			P^{Z}(z_i)&=p_{\boldsymbol{\lambda}}(y_i^{0})+p_{\boldsymbol{\lambda}}(y_i^{1}),\\
			P^{Z}(z_y)&=p_{\boldsymbol{\lambda}}(y);
		\end{aligned}
	\end{equation}
	the factorization follows.
\end{proof}

\begin{theorem}[Non-target optimality, precise form]
	\label{thm:app:nontarget}
	Let $\mathcal{A}_\rho$ be the feasible set of interventions achieving the prescribed concept change, and suppose every distribution reached from $\mathcal{A}_\rho\cup\{\mathbf{h}_0\}$ is decomposable for $W$, with $P^{W}$ determined by the concept score $\boldsymbol{\beta}_W^{\top}\boldsymbol{\lambda}$. Then
	\begin{equation}
		\arg\min_{\delta\mathbf{h}\in\mathcal{A}_\rho} D_{\mathrm{KL}}\!\left(p_{\boldsymbol{\lambda}_0}\,\middle\|\,p_{\boldsymbol{\lambda}}\right)
		\;\subseteq\;
		\arg\min_{\delta\mathbf{h}\in\mathcal{A}_\rho} D_{\mathrm{KL}}\!\left(P^{Z}_{\boldsymbol{\lambda}_0}\,\middle\|\,P^{Z}_{\boldsymbol{\lambda}}\right).
	\end{equation}
\end{theorem}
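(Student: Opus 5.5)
The plan is to use the chain rule for KL divergence under the product structure of Definition~\ref{def:decomp}. Think of a token $y$ as a pair (content atom, concept bit). A pair token $y_i^w$ corresponds to $(z_i,w)$, and a token of $\mathcal{Y}_W^c$ corresponds to the atom $z_y$ with no concept bit. Decomposability says the joint law is $P^{Z}$ on contents, followed by $P^{W}$ on the bit whenever the content is a pair atom.

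\textbf{Step 1: split the divergence.} The chain rule gives
\begin{equation*}
D_{\mathrm{KL}}\!\left(p_{\boldsymbol{\lambda}_0}\,\middle\|\,p_{\boldsymbol{\lambda}}\right)
= D_{\mathrm{KL}}\!\left(P^{Z}_{\boldsymbol{\lambda}_0}\,\middle\|\,P^{Z}_{\boldsymbol{\lambda}}\right)
+ \pi_0\, D_{\mathrm{KL}}\!\left(P^{W}_{\boldsymbol{\lambda}_0}\,\middle\|\,P^{W}_{\boldsymbol{\lambda}}\right),
\end{equation*}
where $\pi_0=\sum_i P^{Z}_{\boldsymbol{\lambda}_0}(z_i)$. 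I will verify this directly by writing $\log(p_0/p)$ tokenwise. On pair tokens it equals $\log(P^Z_0/P^Z)+\log(P^W_0/P^W)$, and on $\mathcal{Y}_W^c$ it equals $\log(P^Z_0/P^Z)$. Taking the expectation under $p_0$ and marginalizing the bit then gives the displayed identity.

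\textbf{Step 2: the target term is fixed on $\mathcal{A}_\rho$.} By hypothesis $P^{W}_{\boldsymbol{\lambda}}$ is determined by the concept score $\boldsymbol{\beta}_W^\top\boldsymbol{\lambda}$. The constraint defining $\mathcal{A}_\rho$ fixes that score, so $P^{W}_{\boldsymbol{\lambda}}$ takes the same value at every point of $\mathcal{A}_\rho$. The factor $\pi_0$ depends only on $\mathbf{h}_0$. The second summand is therefore a constant $c$ on $\mathcal{A}_\rho$. (If $\mathcal{A}_\rho$ is read as the first-order constraint $\mathbf{q}^\top\delta\mathbf{h}=\rho$ rather than an exact score level, the hypothesis is exactly what the statement assumes, so I will read $\mathcal{A}_\rho$ as the exact level set.)

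\textbf{Step 3: compare minimizers.} On $\mathcal{A}_\rho$ the total divergence equals the content divergence plus $c$, so the two objectives differ by a constant. Hence they have identical argmin sets, and in particular the claimed inclusion holds. Two loose ends remain. First, the content divergence is itself related to the renormalized off-target divergence $D_{\mathrm{KL}}^{\mathrm{off}}$, because $P^Z$ restricted to $\mathcal{Y}_W^c$ atoms is $p$ restricted there. I will only remark on this and not prove it, since the statement concerns $P^Z$. Second, Theorem~\ref{thm:nontarget} then follows to second order by combining this result with Theorem~\ref{thm:optimal_steering}.

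\textbf{Main obstacle.} The delicate point is Step 2, namely making precise that ``concept change fixed'' pins down $P^W$ across the whole feasible set. This depends on how $\mathcal{A}_\rho$ is defined. With an exact score constraint it is immediate. With the linearized constraint $\mathbf{q}^\top\delta\mathbf{h}=\rho$ the target term is constant only up to $O(\|\delta\mathbf{h}\|^2)$, and the conclusion holds only to second order. I would state the exact version first and then note the second-order version.
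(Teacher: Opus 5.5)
Your proposal is correct and takes essentially the same route as the paper: it applies the chain-rule split $D_{\mathrm{KL}}(p_{\boldsymbol{\lambda}_0}\|p_{\boldsymbol{\lambda}}) = \big(\sum_i P^{Z}_{\boldsymbol{\lambda}_0}(z_i)\big)\,D_{\mathrm{KL}}(P^{W}_{\boldsymbol{\lambda}_0}\|P^{W}_{\boldsymbol{\lambda}}) + D_{\mathrm{KL}}(P^{Z}_{\boldsymbol{\lambda}_0}\|P^{Z}_{\boldsymbol{\lambda}})$ under decomposability, and then notes that the first term is constant on $\mathcal{A}_\rho$. Reading $\mathcal{A}_\rho$ as the exact score level set is the right choice, since it makes explicit what the paper's proof tacitly assumes, and your remark that the two argmin sets actually coincide is a harmless strengthening of the stated inclusion.
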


\begin{proof}
	Expanding the total divergence over $\mathcal{Y}_W\cup\mathcal{Y}_W^{c}$ under Definition~\ref{def:decomp} and collecting terms,
	\begin{equation}
		\begin{aligned}
			D_{\mathrm{KL}}\!\left(p_{\boldsymbol{\lambda}_0}\,\middle\|\,p_{\boldsymbol{\lambda}}\right)
			={}&\Big(\textstyle\sum_i P^{Z}_{\boldsymbol{\lambda}_0}(z_i)\Big)\,
			D_{\mathrm{KL}}\!\left(P^{W}_{\boldsymbol{\lambda}_0}\,\middle\|\,P^{W}_{\boldsymbol{\lambda}}\right)\\
			&+D_{\mathrm{KL}}\!\left(P^{Z}_{\boldsymbol{\lambda}_0}\,\middle\|\,P^{Z}_{\boldsymbol{\lambda}}\right),
		\end{aligned}
	\end{equation}
	where the cross terms recombine into the second divergence because $\sum_w P^{W}(w)=1$. On $\mathcal{A}_\rho$ the concept score is fixed, hence $P^{W}_{\boldsymbol{\lambda}}$ is constant and the first term does not vary with $\delta\mathbf{h}$; minimizing the left side over $\mathcal{A}_\rho$ therefore minimizes the second term.
\end{proof}
Theorem~3 of the main paper is this statement specialized to the reported metric: under the same condition, the direction that minimizes the total divergence also minimizes the non-concept divergence, whose dominant component is $D_{\mathrm{KL}}^{\mathrm{off}}$ by Remark~\ref{rem:offtarget}.
\begin{remark}[Relation to the experimental metric]
	\label{rem:offtarget}
	Splitting $P^{Z}$ into its pair-content and $\mathcal{Y}_W^{c}$ components by the chain rule,
	\begin{equation}
		D_{\mathrm{KL}}\!\left(P^{Z}_{\boldsymbol{\lambda}_0}\,\middle\|\,P^{Z}_{\boldsymbol{\lambda}}\right)
		= d\!\left(m_0\,\middle\|\,m\right)
		+ m_0\, D_{\mathrm{KL}}^{\mathrm{pair}}
		+ (1-m_0)\, D_{\mathrm{KL}}^{\mathrm{off}},
	\end{equation}
	where $m$ is the content mass on the pairs and $d(\cdot\|\cdot)$ the binary divergence. The reported metric $D_{\mathrm{KL}}^{\mathrm{off}}$ is thus the dominant component whenever the concept tokens carry a small share of the mass, which holds throughout our evaluation contexts.
\end{remark}

\begin{remark}[Scope of the assumption]
	\label{rem:scope}
	Decomposability need only hold on the steering neighborhood. If all pairs share one difference vector, $\mathbf{d}_i\equiv\boldsymbol{\beta}_W$, the criterion of Lemma~\ref{lem:criterion} holds identically. Otherwise it holds approximately when
	\begin{equation}
		\zeta(r)=\sup_{\mathbf{h}\in B_r(\mathbf{h}_0)}\;\max_{i,j}\;
		\big|(\mathbf{d}_i-\mathbf{d}_j)^{\top}f(\mathbf{h})\big|
	\end{equation}
	is small, in which case all within-pair odds agree up to a factor $e^{\pm 2\zeta}$. Exactness at one point does not extend automatically: with $\mathbf{d}_a=(1,0)$, $\mathbf{d}_b=(0,1)$ and the reachable curve $f(t)=(t,2t)$, decomposability holds at $t=0$ and fails for every $t\neq 0$.
\end{remark}

\section{Structure of the Pullback Metric}
\label{app:structure}

This appendix develops the layerwise structure referenced in Section~4.4 of the main paper: the recursion relating the metrics at adjacent layers, the bounds on conditioning and effective rank that follow from it, and the behavior of the null space. Throughout, $G^{(\ell)}$ denotes the pullback metric at layer $\ell$, so that $G^{(L)}=H$ is the softmax Fisher at the final representation, and
\begin{equation}
	A_\ell=\frac{\partial \mathbf{h}^{(\ell+1)}}{\partial \mathbf{h}^{(\ell)}}
\end{equation}
is the Jacobian of the map from layer $\ell$ to layer $\ell+1$.

\subsection{Layerwise Recursion}

\begin{theorem}[Recursive decomposition]
	\label{thm:app:recursive}
	For every $\ell$,
	\begin{equation}
		\label{eq:app:recursive}
		G^{(\ell)}=A_\ell^{\top}G^{(\ell+1)}A_\ell .
	\end{equation}
\end{theorem}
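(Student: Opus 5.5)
The plan is to reduce the recursion to the chain rule applied to the map from layer $\ell$ to the output. Write $f^{(\ell)}$ for the composite map carrying $\mathbf{h}^{(\ell)}$ to the final representation $\boldsymbol{\lambda}$, and $J^{(\ell)}=\partial\boldsymbol{\lambda}/\partial\mathbf{h}^{(\ell)}$ for its Jacobian, so that by definition $G^{(\ell)}=J^{(\ell)\top}HJ^{(\ell)}$ as in Theorem~\ref{thm:pullback}. The convention $G^{(L)}=H$ corresponds to $J^{(L)}=I$, with $f^{(L)}$ the identity.

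The key step is the factorization $f^{(\ell)}=f^{(\ell+1)}\circ g_\ell$, where $g_\ell$ is the block mapping $\mathbf{h}^{(\ell)}$ to $\mathbf{h}^{(\ell+1)}$. The chain rule, evaluated at the current operating point $\mathbf{h}^{(\ell)}$ and its image $\mathbf{h}^{(\ell+1)}=g_\ell(\mathbf{h}^{(\ell)})$, then gives $J^{(\ell)}=J^{(\ell+1)}A_\ell$. Substituting this into the definition of $G^{(\ell)}$ gives
\begin{equation*}
G^{(\ell)}=A_\ell^{\top}J^{(\ell+1)\top}HJ^{(\ell+1)}A_\ell=A_\ell^{\top}G^{(\ell+1)}A_\ell ,
\end{equation*}
which is \eqref{eq:app:recursive}. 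One point must hold for this to be valid: $H$ is evaluated at the same $\boldsymbol{\lambda}=f^{(\ell)}(\mathbf{h}^{(\ell)})=f^{(\ell+1)}(\mathbf{h}^{(\ell+1)})$ on both sides. This is true because both metrics are taken along a single forward pass.

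The only real obstacle is bookkeeping rather than analysis. The residual-stream structure, $\mathbf{h}^{(\ell+1)}=\mathbf{h}^{(\ell)}+\mathrm{block}(\mathbf{h}^{(\ell)})$, means $A_\ell=I+\partial\,\mathrm{block}/\partial\mathbf{h}^{(\ell)}$. I would remark on this but it needs no special treatment. I would also state explicitly the standing assumption that each block is differentiable at the operating point, which is the same $C^{2}$ hypothesis used in Theorem~\ref{thm:pullback}.

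Finally, as a corollary I would unroll the recursion by induction downward from $\ell=L$. This gives $G^{(\ell)}=(A_{L-1}\cdots A_\ell)^{\top}H(A_{L-1}\cdots A_\ell)$, which is consistent with $J^{(\ell)}=A_{L-1}\cdots A_\ell$.
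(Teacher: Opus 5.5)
Your proposal is correct and follows the paper's proof: the chain rule gives $J^{(\ell)}=J^{(\ell+1)}A_\ell$, and substituting this into $G^{(\ell)}=J^{(\ell)\top}HJ^{(\ell)}$ yields the recursion. Your additional remarks are sound bookkeeping that the paper leaves implicit: $H$ is evaluated at the same $\boldsymbol{\lambda}$ on both sides, $A_\ell$ has the residual form, and the unrolled product is $J^{(\ell)}=A_{L-1}\cdots A_\ell$.
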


\begin{proof}
	The chain rule gives $J_{\ell\to L}=J_{\ell+1\to L}A_\ell$, so
	\begin{align*}
		G^{(\ell)}
		&=J_{\ell\to L}^{\top}H\,J_{\ell\to L}\\
		&=A_\ell^{\top}J_{\ell+1\to L}^{\top}H\,J_{\ell+1\to L}A_\ell\\
		&=A_\ell^{\top}G^{(\ell+1)}A_\ell .
	\end{align*}
\end{proof}

The metric at each layer is thus a congruence transform of the metric above it, and the metric at an early layer is the output Fisher conjugated by the entire stack of intervening maps. Mild per-layer distortion accumulates multiplicatively, which is the mechanism behind the bounds below.

\subsection{Conditioning}

\begin{theorem}[Condition-number bound]
	\label{thm:app:condition}
	Suppose $A_\ell$ is invertible on the relevant subspace. Then
	\begin{equation}
		\label{eq:app:cond_step}
		\kappa\big(G^{(\ell)}\big)\le\kappa\big(G^{(\ell+1)}\big)\,\kappa(A_\ell)^{2}.
	\end{equation}
	If moreover $A_k=I+B_k$ with $\lVert B_k\rVert_2\le\rho_k<1$ for all $k$, then
	\begin{equation}
		\label{eq:app:cond_prod}
		\kappa\big(G^{(\ell)}\big)
		\le
		\kappa\big(G^{(L)}\big)
		\prod_{k=\ell}^{L-1}\left(\frac{1+\rho_k}{1-\rho_k}\right)^{2}.
	\end{equation}
\end{theorem}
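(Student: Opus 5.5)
The plan is to combine the congruence recursion of Theorem~\ref{thm:app:recursive} with standard singular-value inequalities. Throughout, $\kappa$ is the ratio of the largest to the smallest nonzero eigenvalue (equivalently, singular value) on the relevant subspace. For the single-step bound \eqref{eq:app:cond_step}, write $G^{(\ell)}=A_\ell^{\top}G^{(\ell+1)}A_\ell$ and bound its extreme eigenvalues by Rayleigh quotients. For any unit $\mathbf{x}$ in the relevant subspace,
\begin{equation*}
\mathbf{x}^{\top}G^{(\ell)}\mathbf{x}=(A_\ell\mathbf{x})^{\top}G^{(\ell+1)}(A_\ell\mathbf{x}),
\end{equation*}
and $\sigma_{\min}(A_\ell)\le\|A_\ell\mathbf{x}\|\le\sigma_{\max}(A_\ell)$. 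This gives
\begin{equation*}
\lambda_{\max}(G^{(\ell)})\le\sigma_{\max}(A_\ell)^2\lambda_{\max}(G^{(\ell+1)}),
\qquad
\lambda_{\min}(G^{(\ell)})\ge\sigma_{\min}(A_\ell)^2\lambda_{\min}(G^{(\ell+1)}).
\end{equation*}
The second inequality holds provided $A_\ell$ maps the relevant subspace of layer $\ell$ into the relevant subspace (the range) of $G^{(\ell+1)}$. Dividing the two bounds gives $\kappa(G^{(\ell)})\le\kappa(G^{(\ell+1)})\kappa(A_\ell)^2$.

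For the product bound \eqref{eq:app:cond_prod}, I would first control $\kappa(A_k)$ when $A_k=I+B_k$ with $\|B_k\|_2\le\rho_k<1$. By the triangle inequality $\sigma_{\max}(A_k)\le1+\rho_k$, and by Weyl's inequality for singular values $\sigma_{\min}(A_k)\ge1-\|B_k\|_2\ge1-\rho_k>0$. This also certifies that $A_k$ is invertible, so the hypothesis of the first part holds automatically. Hence $\kappa(A_k)\le(1+\rho_k)/(1-\rho_k)$. Iterating \eqref{eq:app:cond_step} from $k=\ell$ up to $L-1$, with $G^{(L)}=H$, telescopes into the stated product.

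The main obstacle is making the phrase ``invertible on the relevant subspace'' precise, because $G^{(\ell+1)}$ is rank-deficient and the naive $\lambda_{\min}$ is zero. My first attempt would be to define the relevant subspace at layer $\ell$ as $\mathcal{R}_\ell=\operatorname{Range}(G^{(\ell)})$. I would then show that invertibility of $A_\ell$ yields
\begin{equation*}
\ker G^{(\ell)}=A_\ell^{-1}\ker G^{(\ell+1)},
\end{equation*}
since $\mathbf{x}^{\top}G^{(\ell)}\mathbf{x}=0$ if and only if $A_\ell\mathbf{x}\in\ker G^{(\ell+1)}$. The obstruction is that $A_\ell$ need not map $\mathcal{R}_\ell$ onto $\mathcal{R}_{\ell+1}$ orthogonally. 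A vector in $\mathcal{R}_\ell$ can be sent partly into $\ker G^{(\ell+1)}$, which would break the lower Rayleigh bound.

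I would handle this through a min-max characterization of the smallest positive eigenvalue. Let $r=\operatorname{rank}G^{(\ell)}=\operatorname{rank}G^{(\ell+1)}$. Then
\begin{equation*}
\lambda_{r}(G^{(\ell)})=\max_{\dim S=r}\min_{\mathbf{x}\in S,\|\mathbf{x}\|=1}\mathbf{x}^{\top}G^{(\ell)}\mathbf{x}.
\end{equation*}
I would choose $S=A_\ell^{-1}\mathcal{R}_{\ell+1}$, which has dimension $r$. For unit $\mathbf{x}\in S$, the vector $A_\ell\mathbf{x}$ lies in $\mathcal{R}_{\ell+1}$ with norm at least $\sigma_{\min}(A_\ell)$, so the quadratic form is at least $\sigma_{\min}(A_\ell)^2\lambda_r(G^{(\ell+1)})$. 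This recovers the lower bound without any alignment assumption, and it is the step I would write out most carefully.
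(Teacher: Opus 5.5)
Your proposal is correct and follows the paper's proof: Rayleigh-quotient bounds on the congruence $G^{(\ell)}=A_\ell^{\top}G^{(\ell+1)}A_\ell$, division of the two extreme-eigenvalue bounds, then $\sigma_{\max}(I+B_k)\le 1+\rho_k$, $\sigma_{\min}(I+B_k)\ge 1-\rho_k$, and telescoping. Your Courant--Fischer step with $S=A_\ell^{-1}\operatorname{Range}(G^{(\ell+1)})$ adds rigor the paper lacks: it proves the lower bound on the smallest positive eigenvalue, which the paper only asserts ``on that subspace'' and which would otherwise need the alignment condition you flagged.
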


\begin{proof}
	By Theorem~\ref{thm:app:recursive}, $\mathbf{v}^{\top}G^{(\ell)}\mathbf{v}=(A_\ell\mathbf{v})^{\top}G^{(\ell+1)}(A_\ell\mathbf{v})$ for every $\mathbf{v}$. Writing $m_{\ell+1}$ and $M_{\ell+1}$ for the extreme eigenvalues of $G^{(\ell+1)}$ on that subspace,
	\begin{align*}
		\lambda_{\max}\big(G^{(\ell)}\big)&\le M_{\ell+1}\,\sigma_{\max}(A_\ell)^{2},\\
		\lambda_{\min}\big(G^{(\ell)}\big)&\ge m_{\ell+1}\,\sigma_{\min}(A_\ell)^{2},
	\end{align*}
	and dividing gives \eqref{eq:app:cond_step}. For $A_k=I+B_k$ one has $\sigma_{\max}(A_k)\le 1+\rho_k$ and $\sigma_{\min}(A_k)\ge 1-\rho_k$, so $\kappa(A_k)\le(1+\rho_k)/(1-\rho_k)$; telescoping \eqref{eq:app:cond_step} from $L$ down to $\ell$ yields \eqref{eq:app:cond_prod}.
\end{proof}

\begin{remark}[Scope of the bounds]
	\label{rem:app:cond_scope}
	Equation \eqref{eq:app:cond_prod} is an upper bound. It permits the conditioning to degrade geometrically in the number of layers traversed, but does not force it: a layer with $\kappa(A_\ell)$ near one leaves the bound essentially unchanged, and the bound is blind to cancellations between layers. The measured condition numbers of \appGeometry{} do grow toward the shallower layers on all three models, though not in every concept-level cell. Corollary~\ref{cor:app:erank} is likewise specific to the trace effective rank $r_{\mathrm{tr}}$; the quantity measured in our experiments is the entropy effective rank of Definition~\ref{def:app:erank_entropy}, which weights the spectrum differently, so the two track each other qualitatively but the bound does not apply to the reported values verbatim. These bounds constrain how anisotropic the metric can become; they do not identify where geometric correction is most useful, which is the empirical question taken up in \appDepth.
\end{remark}

\subsection{Effective Rank}

\begin{definition}[Trace effective rank]
	\label{def:app:erank}
	For $G\succeq 0$ let
	\begin{equation}
		r_{\mathrm{tr}}(G)=\frac{\operatorname{tr}(G)}{\lambda_{\max}(G)},
		\qquad
		1\le r_{\mathrm{tr}}(G)\le\operatorname{rank}(G).
	\end{equation}
\end{definition}

\begin{corollary}[Decay bound]
	\label{cor:app:erank}
	If $A_\ell$ is invertible, then
	\begin{equation}
		r_{\mathrm{tr}}\big(G^{(\ell)}\big)\ge\kappa(A_\ell)^{-2}\,r_{\mathrm{tr}}\big(G^{(\ell+1)}\big),
	\end{equation}
	and consequently
	\begin{equation}
		r_{\mathrm{tr}}\big(G^{(\ell)}\big)
		\ge
		r_{\mathrm{tr}}\big(G^{(L)}\big)\prod_{k=\ell}^{L-1}\kappa(A_k)^{-2}.
	\end{equation}
\end{corollary}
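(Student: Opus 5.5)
The plan is to bound the numerator and the denominator of $r_{\mathrm{tr}}$ separately under the congruence of Theorem~\ref{thm:app:recursive}, $G^{(\ell)}=A_\ell^{\top}G^{(\ell+1)}A_\ell$, and then divide. Write $\sigma_{\max}$ and $\sigma_{\min}$ for the extreme singular values of $A_\ell$. Invertibility gives $\sigma_{\min}>0$ and $\kappa(A_\ell)=\sigma_{\max}/\sigma_{\min}$.

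For the denominator, the argument is the one already used in the proof of Theorem~\ref{thm:app:condition}. For every unit vector $\mathbf{v}$,
\begin{equation*}
\mathbf{v}^{\top}G^{(\ell)}\mathbf{v}=(A_\ell\mathbf{v})^{\top}G^{(\ell+1)}(A_\ell\mathbf{v})\le\lambda_{\max}\big(G^{(\ell+1)}\big)\,\sigma_{\max}^{2}.
\end{equation*}
Hence $\lambda_{\max}(G^{(\ell)})\le\sigma_{\max}^{2}\,\lambda_{\max}(G^{(\ell+1)})$.

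For the numerator, I use cyclicity of the trace: $\operatorname{tr}(G^{(\ell)})=\operatorname{tr}\big(G^{(\ell+1)}A_\ell A_\ell^{\top}\big)$. I then apply the elementary inequality $\operatorname{tr}(PM)\ge\lambda_{\min}(M)\operatorname{tr}(P)$ for $P\succeq0$ and $M$ symmetric. It follows by writing
\begin{equation*}
\operatorname{tr}(PM)=\operatorname{tr}\big(P^{1/2}MP^{1/2}\big)
\end{equation*}
and noting that $P^{1/2}(M-\lambda_{\min}(M)I)P^{1/2}\succeq0$. With $M=A_\ell A_\ell^{\top}$, whose smallest eigenvalue is $\sigma_{\min}^{2}$, this gives $\operatorname{tr}(G^{(\ell)})\ge\sigma_{\min}^{2}\operatorname{tr}(G^{(\ell+1)})$.

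Dividing the two bounds gives
\begin{equation*}
r_{\mathrm{tr}}\big(G^{(\ell)}\big)\ge\frac{\sigma_{\min}^{2}}{\sigma_{\max}^{2}}\,r_{\mathrm{tr}}\big(G^{(\ell+1)}\big)=\kappa(A_\ell)^{-2}\,r_{\mathrm{tr}}\big(G^{(\ell+1)}\big).
\end{equation*}
Because $A_\ell$ is invertible, $G^{(\ell)}\neq0$ whenever $G^{(\ell+1)}\neq0$, so the ratio is well defined at every layer once $H=G^{(L)}\neq0$. This is the only degenerate case to exclude, or to handle by a convention.

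The product form then follows by induction downward from $L$. I apply the one-step bound at $k=\ell,\dots,L-1$ and multiply, using that every factor is positive.

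I expect no real obstacle. The only delicate point is the trace lower bound, since $G^{(\ell+1)}$ may be rank-deficient. The $P^{1/2}$ argument handles this without needing $G^{(\ell+1)}$ to be invertible, so the proof needs invertibility of $A_\ell$ only, not the restriction to a ``relevant subspace'' that Theorem~\ref{thm:app:condition} requires.
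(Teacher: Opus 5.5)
Your proposal is correct and follows the paper's proof essentially verbatim: cyclicity of the trace with $A_\ell A_\ell^{\top}\succeq\sigma_{\min}(A_\ell)^{2}I$ bounds the numerator, and the Rayleigh-quotient bound from the condition-number proof bounds $\lambda_{\max}$; dividing and then telescoping over $k=\ell,\dots,L-1$ gives both inequalities. Your $P^{1/2}$ justification of $\operatorname{tr}(PM)\ge\lambda_{\min}(M)\operatorname{tr}(P)$ and your check that $G^{(\ell)}\neq 0$ fill in steps the paper leaves implicit, though the product form tacitly requires every $A_k$ to be invertible, not just $A_\ell$.
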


\begin{proof}
	Cyclicity of the trace and $A_\ell A_\ell^{\top}\succeq\sigma_{\min}(A_\ell)^{2}I$ give
	\begin{equation*}
		\operatorname{tr}\big(G^{(\ell)}\big)
		=\operatorname{tr}\big(G^{(\ell+1)}A_\ell A_\ell^{\top}\big)
		\ge\sigma_{\min}(A_\ell)^{2}\operatorname{tr}\big(G^{(\ell+1)}\big),
	\end{equation*}
	while $\lambda_{\max}(G^{(\ell)})\le\sigma_{\max}(A_\ell)^{2}\lambda_{\max}(G^{(\ell+1)})$ as in the previous proof. Dividing the two gives the claim.
\end{proof}

\subsection{Null Space and Near-Null Space}

\begin{theorem}[Kernel recursion]
	\label{thm:app:kernel}
	For $G^{(\ell+1)}\succeq 0$,
	\begin{equation}
		\ker G^{(\ell)}=\big\{\mathbf{v}:A_\ell\mathbf{v}\in\ker G^{(\ell+1)}\big\}.
	\end{equation}
	In particular $\ker A_\ell\subseteq\ker G^{(\ell)}$, and if $A_\ell$ is invertible then $\dim\ker G^{(\ell)}=\dim\ker G^{(\ell+1)}$.
\end{theorem}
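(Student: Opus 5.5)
The plan is to reduce the kernel of $G^{(\ell)}$ to a statement about quadratic forms, using the recursion of Theorem~\ref{thm:app:recursive} together with positive semi-definiteness. The basic fact needed is standard: for a symmetric $S\succeq 0$, a vector $\mathbf{u}$ satisfies $S\mathbf{u}=\mathbf{0}$ if and only if $\mathbf{u}^{\top}S\mathbf{u}=0$. The forward direction is immediate. For the converse, factor $S=S^{1/2}S^{1/2}$; then $\mathbf{u}^{\top}S\mathbf{u}=\lVert S^{1/2}\mathbf{u}\rVert^{2}=0$ forces $S^{1/2}\mathbf{u}=\mathbf{0}$, hence $S\mathbf{u}=\mathbf{0}$. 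I will apply this fact to both $G^{(\ell+1)}$ and $G^{(\ell)}$. Note that $G^{(\ell)}\succeq 0$ follows from the recursion, since $\mathbf{v}^{\top}G^{(\ell)}\mathbf{v}=(A_\ell\mathbf{v})^{\top}G^{(\ell+1)}(A_\ell\mathbf{v})\ge 0$.

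\textbf{Step 1: the set identity.} Take any $\mathbf{v}$. By the fact above,
\[
\mathbf{v}\in\ker G^{(\ell)} \iff \mathbf{v}^{\top}G^{(\ell)}\mathbf{v}=0 .
\]
By the recursion, the right-hand side equals $(A_\ell\mathbf{v})^{\top}G^{(\ell+1)}(A_\ell\mathbf{v})=0$. Applying the fact again, this time with $S=G^{(\ell+1)}$ and $\mathbf{u}=A_\ell\mathbf{v}$, this holds if and only if $A_\ell\mathbf{v}\in\ker G^{(\ell+1)}$. This chain of equivalences is the identity in the statement.

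\textbf{Step 2: the two consequences.} If $A_\ell\mathbf{v}=\mathbf{0}$, then $A_\ell\mathbf{v}$ trivially lies in $\ker G^{(\ell+1)}$, so Step~1 gives $\ker A_\ell\subseteq\ker G^{(\ell)}$. If $A_\ell$ is invertible, Step~1 says that $\ker G^{(\ell)}=A_\ell^{-1}\big(\ker G^{(\ell+1)}\big)$, the preimage of a subspace under a linear isomorphism. An isomorphism preserves the dimension of subspaces, so $\dim\ker G^{(\ell)}=\dim\ker G^{(\ell+1)}$.

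The only step needing care is the converse direction of the PSD fact, which is where $G^{(\ell+1)}\succeq 0$ is actually used. Without semi-definiteness, a vector could be isotropic for the quadratic form without lying in the kernel, and the equivalence in Step~1 would fail. Everything else is direct substitution.
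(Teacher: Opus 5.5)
Your proposal is correct and follows essentially the same route as the paper. Both arguments reduce kernel membership to vanishing of the quadratic form via positive semi-definiteness, apply this through the identity $\mathbf{v}^{\top}G^{(\ell)}\mathbf{v}=(A_\ell\mathbf{v})^{\top}G^{(\ell+1)}(A_\ell\mathbf{v})$, and read off the two consequences directly; you additionally spell out the square-root proof of the PSD fact and the check that $G^{(\ell)}\succeq 0$, which the paper leaves implicit.
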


\begin{proof}
	For $G\succeq 0$ one has $\mathbf{x}^{\top}G\mathbf{x}=0$ if and only if $G\mathbf{x}=0$. Applying this to $\mathbf{v}^{\top}G^{(\ell)}\mathbf{v}=(A_\ell\mathbf{v})^{\top}G^{(\ell+1)}(A_\ell\mathbf{v})$ gives the characterization; the two consequences are immediate.
\end{proof}

\begin{remark}[Layer normalization does not enlarge the exact kernel]
	\label{rem:app:layernorm}
	The Jacobian of layer normalization annihilates the two-dimensional subspace spanned by the all-ones vector and the centered input. In a pre-normalization residual block, however, the layer map has Jacobian $A_k=I+B_kJ_{\mathrm{LN}}$, so any $\mathbf{v}\in\ker J_{\mathrm{LN}}$ satisfies $A_k\mathbf{v}=\mathbf{v}$ and is not annihilated. By Theorem~\ref{thm:app:kernel}, exact nullity therefore does not accumulate merely because normalization layers are traversed.
\end{remark}

\begin{theorem}[Threshold null space]
	\label{thm:app:threshold}
	Let $\mathcal{N}_\tau(G)=\operatorname{span}\{\mathbf{u}_i:\lambda_i(G)\le\tau\}$, and suppose $\sigma_{\min}(A_\ell)\ge s_\ell$ and $\sigma_{\max}(A_\ell)\le S_\ell$. Then, by the Courant--Fischer principle,
	\begin{equation}
		s_\ell^{2}\,\lambda_i\big(G^{(\ell+1)}\big)
		\le
		\lambda_i\big(G^{(\ell)}\big)
		\le
		S_\ell^{2}\,\lambda_i\big(G^{(\ell+1)}\big),
	\end{equation}
	and hence
	\begin{equation}
		\lambda_i\big(G^{(\ell+1)}\big)\le\frac{\tau}{S_\ell^{2}}
		\quad\Longrightarrow\quad
		\lambda_i\big(G^{(\ell)}\big)\le\tau .
	\end{equation}
\end{theorem}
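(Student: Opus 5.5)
The plan is to compare Rayleigh quotients through the congruence of Theorem~\ref{thm:app:recursive}, $G^{(\ell)}=A_\ell^{\top}G^{(\ell+1)}A_\ell$, and then apply the Courant--Fischer max--min characterization. Order the eigenvalues of both matrices the same way, say decreasingly, so that for a symmetric $M$ and $R_M(\mathbf{x})=\mathbf{x}^{\top}M\mathbf{x}/\mathbf{x}^{\top}\mathbf{x}$,
\[
\lambda_i(M)=\max_{\dim V=i}\;\min_{\mathbf{0}\neq\mathbf{v}\in V} R_M(\mathbf{v}).
\]
For $\mathbf{v}$ with $A_\ell\mathbf{v}\neq\mathbf{0}$, the recursion gives the identity
\[
R_{G^{(\ell)}}(\mathbf{v})=R_{G^{(\ell+1)}}(A_\ell\mathbf{v})\,\frac{\lVert A_\ell\mathbf{v}\rVert^{2}}{\lVert\mathbf{v}\rVert^{2}}.
\]
Because $G^{(\ell+1)}\succeq 0$, the first factor is nonnegative. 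The second factor lies in $[s_\ell^{2},S_\ell^{2}]$. Both inequalities will come from this identity.

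\emph{Lower bound.} If $s_\ell=0$ there is nothing to prove, since $G^{(\ell)}\succeq 0$. Otherwise $A_\ell$ is invertible. Let $U$ be the span of the top $i$ eigenvectors of $G^{(\ell+1)}$ and take the test subspace $V=A_\ell^{-1}U$, which has dimension $i$. Every $\mathbf{v}\in V$ satisfies $R_{G^{(\ell+1)}}(A_\ell\mathbf{v})\ge\lambda_i(G^{(\ell+1)})$ and $\lVert A_\ell\mathbf{v}\rVert^{2}\ge s_\ell^{2}\lVert\mathbf{v}\rVert^{2}$. The max--min then gives $\lambda_i(G^{(\ell)})\ge s_\ell^{2}\lambda_i(G^{(\ell+1)})$.

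\emph{Upper bound.} First assume $A_\ell$ is invertible and take any $V$ with $\dim V=i$. Then $A_\ell V$ also has dimension $i$. By Courant--Fischer applied to $G^{(\ell+1)}$, some $\mathbf{w}=A_\ell\mathbf{v}\in A_\ell V$ has $R_{G^{(\ell+1)}}(\mathbf{w})\le\lambda_i(G^{(\ell+1)})$. Since this quotient is nonnegative, multiplying by $\lVert A_\ell\mathbf{v}\rVert^{2}/\lVert\mathbf{v}\rVert^{2}\le S_\ell^{2}$ preserves the inequality, so $\min_{V}R_{G^{(\ell)}}\le S_\ell^{2}\lambda_i(G^{(\ell+1)})$. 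Maximizing over $V$ gives the upper bound.

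If $A_\ell$ is singular, replace it by $A_\ell+\varepsilon I$. This is invertible for all small $\varepsilon\neq 0$ and satisfies $\sigma_{\max}\le S_\ell+|\varepsilon|$. Apply the bound, then let $\varepsilon\to 0$, using continuity of eigenvalues of symmetric matrices in their entries. Alternatively, one can use the dual min--max form with test subspaces containing $\ker A_\ell$, on which $G^{(\ell)}$ vanishes.

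The implication is then immediate: if $\lambda_i(G^{(\ell+1)})\le\tau/S_\ell^{2}$, the upper bound gives $\lambda_i(G^{(\ell)})\le\tau$. Consequently $\mathcal{N}_\tau(G^{(\ell)})$ contains at least as many eigendirections as there are eigenvalues of $G^{(\ell+1)}$ below $\tau/S_\ell^{2}$.

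The step I expect to need the most care is the bookkeeping, not any single inequality. Three points must hold together: the same index ordering for both matrices, the sign of the Rayleigh quotient (positive semi-definiteness is what lets the scaling factor $S_\ell^{2}$ or $s_\ell^{2}$ be applied in the right direction), and the degenerate case where $A_\ell$ is not invertible, where pushing the test subspace forward loses dimension. I would handle the last point by the $\varepsilon$-perturbation and continuity argument rather than by a separate case analysis.
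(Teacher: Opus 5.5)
Your proposal is correct and follows the same route as the paper, which proves the statement only by invoking the Courant--Fischer principle on the congruence $G^{(\ell)}=A_\ell^{\top}G^{(\ell+1)}A_\ell$ of Theorem~\ref{thm:app:recursive}. Your Rayleigh-quotient factorization, your use of $G^{(\ell+1)}\succeq 0$ to multiply the inequalities, and your treatment of a singular $A_\ell$ supply details the paper leaves implicit; the $\varepsilon$-perturbation could even be skipped, since any $i$-dimensional $V$ that meets $\ker A_\ell$ already contains a vector with zero Rayleigh quotient.
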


When the intervening maps contract, $S_\ell<1$ and the set of directions falling below the threshold at the shallower layer is strictly larger than at the deeper one. The concentration observed at early layers is therefore a statement about the numerical near-null space rather than about exact nullity, and we describe it as such throughout.

\section{Geometry Measurements}
\label{app:geometry}

This appendix gives the identity behind the deviation figures quoted in Section~5.2 of the main paper, and reports the measured geometry at every layer of the three models.

\subsection{Deviation from the Euclidean Metric}

The Euclidean assumption amounts to modelling $G$ by a scalar multiple of the identity. The quality of the best such model is controlled by a single spectral quantity.

\begin{definition}[Participation rank]
	\label{def:app:pr}
	For $G\succeq 0$ let
	\begin{equation}
		\operatorname{PR}(G)=\frac{\big(\operatorname{tr}G\big)^{2}}{\lVert G\rVert_F^{2}}
		=\frac{\big(\sum_i\lambda_i\big)^{2}}{\sum_i\lambda_i^{2}},
	\end{equation}
	which satisfies $1\le\operatorname{PR}(G)\le\operatorname{rank}(G)\le d$.
\end{definition}

\begin{theorem}[Best scalar Euclidean approximation]
	\label{thm:app:deviation}
	For any $G\succeq 0$ in dimension $d$,
	\begin{equation}
		\label{eq:app:deviation}
		\min_{c\in\mathbb{R}}\frac{\lVert G-cI\rVert_F}{\lVert G\rVert_F}
		=\sqrt{1-\frac{\operatorname{PR}(G)}{d}},
	\end{equation}
	attained at $c^{\star}=\operatorname{tr}(G)/d$.
\end{theorem}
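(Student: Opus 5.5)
The plan is to treat the minimization over $c$ as a one-dimensional least-squares problem in the Frobenius inner product $\langle X,Y\rangle_F=\operatorname{tr}(X^{\top}Y)$. Expanding the square and using symmetry of $G$ gives
\begin{equation}
	\lVert G-cI\rVert_F^{2}=\lVert G\rVert_F^{2}-2c\operatorname{tr}(G)+c^{2}d,
\end{equation}
since $\langle G,I\rangle_F=\operatorname{tr}G$ and $\lVert I\rVert_F^{2}=d$. This is a convex quadratic in $c$ with positive leading coefficient $d$. Setting the derivative to zero gives the unique minimizer $c^{\star}=\operatorname{tr}(G)/d$; equivalently, $c^{\star}I$ is the orthogonal projection of $G$ onto the span of $I$.

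Substituting $c^{\star}$ back in gives the minimal squared residual $\lVert G\rVert_F^{2}-(\operatorname{tr}G)^{2}/d$, which is the Pythagorean identity for that projection. Dividing by $\lVert G\rVert_F^{2}$ yields
\begin{equation}
	1-\frac{(\operatorname{tr}G)^{2}}{d\,\lVert G\rVert_F^{2}}.
\end{equation}
By Definition~\ref{def:app:pr} this equals $1-\operatorname{PR}(G)/d$. Taking square roots gives \eqref{eq:app:deviation}.

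There is essentially no hard step. The only care needed is to confirm that the quantity under the square root is nonnegative and that the expression is well defined. I would get nonnegativity from Cauchy--Schwarz on the eigenvalues: $(\sum_i\lambda_i)^{2}\le \operatorname{rank}(G)\sum_i\lambda_i^{2}\le d\sum_i\lambda_i^{2}$, which is the bound $\operatorname{PR}(G)\le d$ already recorded in Definition~\ref{def:app:pr}.

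For well-definedness, I would note that the ratio requires $G\neq 0$. In the degenerate case $G=0$ the statement is vacuous or conventional, so I would either exclude it or set the ratio to zero by convention. Positive semidefiniteness is not needed for the identity itself, only for the interpretation $\operatorname{PR}\ge 1$.
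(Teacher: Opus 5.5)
Your proof is correct and follows the paper's argument exactly: expand $\lVert G-cI\rVert_F^{2}$ as a quadratic in $c$, minimize it at $c^{\star}=\operatorname{tr}(G)/d$, substitute, divide by $\lVert G\rVert_F^{2}$, and take square roots. The paper leaves implicit two points you make explicit, nonnegativity under the root via $\operatorname{PR}(G)\le d$ and the need to exclude $G=0$.
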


\begin{proof}
	Expanding the objective and using $\lVert I\rVert_F^{2}=d$,
	\begin{equation*}
		\lVert G-cI\rVert_F^{2}
		=\lVert G\rVert_F^{2}-2c\operatorname{tr}G+c^{2}d ,
	\end{equation*}
	a quadratic in $c$ minimized at $c^{\star}=\operatorname{tr}(G)/d$. Substituting,
	\begin{align*}
		\min_c\lVert G-cI\rVert_F^{2}
		&=\lVert G\rVert_F^{2}-\frac{(\operatorname{tr}G)^{2}}{d}\\
		&=\lVert G\rVert_F^{2}\left(1-\frac{\operatorname{PR}(G)}{d}\right),
	\end{align*}
	and dividing by $\lVert G\rVert_F^{2}$ and taking square roots gives \eqref{eq:app:deviation}.
\end{proof}

The identity makes the observed deviations a direct consequence of spectral concentration: a relative error of $0.98$ requires $\operatorname{PR}(G)/d\approx 0.04$, so no scalar metric can approximate $G$ once its spectrum occupies a small fraction of the ambient dimension. The values in Table~\ref{tab:app:gpt2geom} correspond to participation ranks between roughly $6$ and $34$ out of $768$.

\subsection{Measured Geometry}
\begin{definition}[Effective rank]
	\label{def:app:erank_entropy}
	Let $\tilde\lambda_i=\lambda_i/\sum_j\lambda_j$ be the normalized eigenvalues of $G$. The effective rank reported below is the exponentiated Shannon entropy of this distribution,
	\begin{equation}
		r_{\mathrm{eff}}(G)=\exp\Big(-\sum_i\tilde\lambda_i\log\tilde\lambda_i\Big),
	\end{equation}
	which equals $d$ for a flat spectrum and approaches $1$ as the mass concentrates on a single direction.
\end{definition}
Table~\ref{tab:app:gpt2geom} reports the exact geometry of GPT-2 Small, computed from the full $768$-dimensional spectrum at $100$ contexts per layer. The condition number grows as the metric is pulled back through more layers, and the effective rank contracts in the same direction, from $17\%$ of the hidden dimension at layer $11$ to $2.4\%$ at layer $3$; the deviation from the closest scalar metric exceeds $97\%$ everywhere.

\begin{table}[t]
	\centering
	\small
	\begin{tabular}{lccccc}
		\toprule
		Layer & $n$ & $\kappa(G)$ & $r_{\mathrm{eff}}$ [IQR] & $r_{\mathrm{eff}}/d$ & Dev. \\
		\midrule
		3  & 100 & $9.5{\times}10^{7}$ & 18.2 [14.0, 25.4] & 2.4\%  & 0.996 \\
		6  & 100 & $7.1{\times}10^{7}$ & 28.6 [20.7, 41.7] & 3.7\%  & 0.994 \\
		9  & 100 & $5.0{\times}10^{7}$ & 116.5 [39.3, 151.5] & 15.2\% & 0.978 \\
		11 & 100 & $4.9{\times}10^{7}$ & 131.0 [51.7, 154.3] & 17.1\% & 0.980 \\
		\bottomrule
	\end{tabular}
	\caption{Exact pullback-Fisher geometry of GPT-2 Small ($d=768$), medians over $100$ contexts per layer. Dev.\ is the relative Frobenius deviation \eqref{eq:app:deviation} from the closest scalar metric.}
	\label{tab:app:gpt2geom}
\end{table}

At $8$B scale the spectrum is obtained from the rank-$50$ randomized decomposition of \appApprox, so the quantities in Table~\ref{tab:app:geom8b} describe the retained dominant subspace rather than the full spectrum; the deviation \eqref{eq:app:deviation} requires the complete spectrum and is therefore reported only for GPT-2 Small. Within that subspace the concentration is far sharper than on GPT-2 Small, with effective ranks near $0.1\%$ of $d=4096$, and the condition number again grows toward the shallower layers on both models.

\begin{table}[t]
	\centering
	\small
	\begin{tabular}{llcccc}
		\toprule
		Model & Layer & $n$ & $\kappa(G)$ & $r_{\mathrm{eff}}$ [IQR] & $r_{\mathrm{eff}}/d$ \\
		\midrule
		Llama-3-8B & 8  & 126 & $1.2{\times}10^{4}$ & 4.25 [2.58, 7.67] & 0.10\% \\
		& 16 & 117 & $4.0{\times}10^{3}$ & 5.63 [2.91, 11.22] & 0.14\% \\
		& 24 & 93  & $3.9{\times}10^{3}$ & 4.45 [2.60, 11.96] & 0.11\% \\
		\midrule
		Qwen3-8B   & 6  & 291 & $1.2{\times}10^{5}$ & 3.15 [2.06, 5.14] & 0.08\% \\
		& 18 & 292 & $1.8{\times}10^{4}$ & 4.54 [2.28, 6.64] & 0.11\% \\
		& 27 & 289 & $2.9{\times}10^{3}$ & 6.07 [2.91, 9.67] & 0.15\% \\
		\bottomrule
	\end{tabular}
	\caption{Pullback-Fisher geometry of the two $8$B models ($d=4096$), medians over successful cases with finite spectra, computed within the rank-$50$ subspace retained by the approximation of \appApprox.}
	\label{tab:app:geom8b}
\end{table}

The two tables together support the premise of the main paper's analysis at every layer used in the comparisons: the metric is far from any scalar multiple of the identity, and the directions along which an intervention actually moves the prediction number in the single digits on the 8B models and in the tens to low hundreds on GPT-2 Small, against ambient dimensions of $4096$ and $768$.

\section{Regularization and Validity of the Linear Solve}
\label{app:regularization}

The direction of Theorem~2 requires inverting a metric that is rank-deficient and ill-conditioned, and it is derived from a second-order model of a nonlinear map. This appendix bounds the bias introduced by regularization, gives the radius within which the second-order model is faithful, bounds the effect of an inexact metric, and reports a controlled numerical study.

\subsection{Regularized Solve and Its Bias}

Write $\delta=G^{\dagger}\mathbf{q}$ for the exact solution and
\begin{equation}
	\delta_\alpha=(G+\alpha I)^{-1}\mathbf{q}
\end{equation}
for the regularized one.

\begin{theorem}[Regularization bias]
	\label{thm:app:ridge}
	Suppose $\mathbf{q}\in\operatorname{Range}(G)$ and let $m>0$ be the smallest positive eigenvalue of $G$. Then
	\begin{equation}
		\lVert\delta_\alpha-\delta\rVert\le\frac{\alpha}{m(m+\alpha)}\,\lVert\mathbf{q}\rVert .
	\end{equation}
\end{theorem}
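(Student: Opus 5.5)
The plan is to diagonalize $G$ and compare $\delta_\alpha$ and $\delta$ one eigendirection at a time. Since $G\succeq 0$ is symmetric (Theorem~\ref{thm:pullback}), write
\begin{equation*}
	G=\sum_i \lambda_i\,\mathbf{u}_i\mathbf{u}_i^{\top}
\end{equation*}
with orthonormal eigenvectors $\mathbf{u}_i$, and split the indices into those with $\lambda_i>0$ (spanning $\operatorname{Range}(G)$) and those with $\lambda_i=0$ (spanning $\ker G$). The hypothesis $\mathbf{q}\in\operatorname{Range}(G)$ means $\mathbf{q}=\sum_{\lambda_i>0}q_i\mathbf{u}_i$ with $q_i=\mathbf{u}_i^{\top}\mathbf{q}$. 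Neither solution then has a kernel component: $G^{\dagger}$ acts as $1/\lambda_i$ on the range and as zero on the kernel, and $(G+\alpha I)^{-1}$ acts as $1/(\lambda_i+\alpha)$ on each eigendirection, for any $\alpha>0$.

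Subtracting the two expansions coordinatewise gives
\begin{equation*}
	\delta_\alpha-\delta=\sum_{\lambda_i>0}\Big(\frac{1}{\lambda_i+\alpha}-\frac{1}{\lambda_i}\Big)q_i\,\mathbf{u}_i=-\sum_{\lambda_i>0}\frac{\alpha}{\lambda_i(\lambda_i+\alpha)}\,q_i\,\mathbf{u}_i .
\end{equation*}
By orthonormality,
\begin{equation*}
	\lVert\delta_\alpha-\delta\rVert^{2}=\sum_{\lambda_i>0}\frac{\alpha^{2}}{\lambda_i^{2}(\lambda_i+\alpha)^{2}}\,q_i^{2}.
\end{equation*}
The scalar function $\lambda\mapsto\alpha/(\lambda(\lambda+\alpha))$ is decreasing on $(0,\infty)$. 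Over the positive eigenvalues it is therefore maximized at the smallest one, $\lambda=m$. Bounding each coefficient by $\alpha/(m(m+\alpha))$ and using $\sum_{\lambda_i>0}q_i^{2}=\lVert\mathbf{q}\rVert^{2}$ (again by the range assumption) gives the stated inequality after taking square roots.

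No step is a real obstacle; the argument is a one-line spectral calculation. The only point needing care is the role of the range assumption. Without it, $\mathbf{q}$ would have a component in $\ker G$. The exact solution $G^{\dagger}\mathbf{q}$ drops that component, but $(G+\alpha I)^{-1}$ maps it to a vector of size $1/\alpha$ times its length. That difference is unbounded as $\alpha\to0$ and cannot be controlled by any bound of the stated form. So I will state explicitly that the hypothesis is used twice: to make the kernel contribution vanish, and to identify $\lVert\mathbf{q}\rVert$ with the norm of its range part. I will also note that the regularization is taken with $\alpha>0$, so that $G+\alpha I$ is invertible.
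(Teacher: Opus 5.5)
Your proposal is correct and matches the paper's own proof: both diagonalize $G$, expand $\mathbf{q}$ over the eigenvectors with positive eigenvalues, compare the coefficients $q_i/\lambda_i$ and $q_i/(\lambda_i+\alpha)$, and bound each factor $\alpha/(\lambda_i(\lambda_i+\alpha))$ by its value at $\lambda_i=m$. Your explicit check that $(G+\alpha I)^{-1}\mathbf{q}$ has no kernel component, and your monotonicity argument, are details the paper leaves implicit.
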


\begin{proof}
	Let $G=\sum_{i=1}^{r}\lambda_i\mathbf{u}_i\mathbf{u}_i^{\top}$ with $\lambda_i\ge m$, and expand $\mathbf{q}=\sum_i q_i\mathbf{u}_i$, which is possible because $\mathbf{q}\in\operatorname{Range}(G)$. Then
	\begin{equation*}
		\delta=\sum_i\frac{q_i}{\lambda_i}\mathbf{u}_i,
		\qquad
		\delta_\alpha=\sum_i\frac{q_i}{\lambda_i+\alpha}\mathbf{u}_i,
	\end{equation*}
	so that
	\begin{align*}
		\lVert\delta_\alpha-\delta\rVert^{2}
		&=\sum_i q_i^{2}\left(\frac{\alpha}{\lambda_i(\lambda_i+\alpha)}\right)^{2}\\
		&\le\left(\frac{\alpha}{m(m+\alpha)}\right)^{2}\lVert\mathbf{q}\rVert^{2}.
	\end{align*}
\end{proof}

The bias is therefore controlled as long as $\alpha$ is small relative to the smallest eigenvalue that carries signal. Choosing $\alpha$ well requires it to track the scale of the spectrum, which also makes the solution independent of the units in which activations are expressed.

\begin{proposition}[Scale equivariance]
	\label{prop:app:scale}
	Consider the reparameterization $\tilde{\mathbf{h}}=t\mathbf{h}$ of the activation coordinates, under which $\tilde{G}=t^{-2}G$ and $\tilde{\mathbf{q}}=t^{-1}\mathbf{q}$. If $\alpha$ is set to a fixed multiple of a degree-one spectral statistic of $G$, as in $\alpha=c\,\lambda_{\mathrm{med}}(G)$, then
	\begin{equation}
		\tilde\delta_{\tilde\alpha}=t\,\delta_\alpha ,
	\end{equation}
	which is the transformation satisfied by the exact solution. A fixed $\alpha$ does not satisfy it.
\end{proposition}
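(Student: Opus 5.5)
The argument is a direct substitution: we track how $G$, $\mathbf{q}$, and $\alpha$ transform under $\tilde{\mathbf{h}}=t\mathbf{h}$ and then read off the regularized solution.

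\textbf{Transformation of the inputs.} The map to the output becomes $\tilde f(\tilde{\mathbf{h}})=f(\tilde{\mathbf{h}}/t)$. By the chain rule its Jacobian is $\tilde J=t^{-1}J$. The softmax Fisher $H$ lives at the output and does not change, so
\[
\tilde G=\tilde J^{\top}H\tilde J=t^{-2}G,
\qquad
\tilde{\mathbf{q}}=\tilde J^{\top}\boldsymbol{\beta}_W=t^{-1}\mathbf{q},
\]
as stated in the proposition.

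\textbf{Transformation of $\alpha$.} We use that a degree-one spectral statistic $s$ is one satisfying $s(cG)=c\,s(G)$ for $c>0$. The eigenvalues of $\tilde G$ are $t^{-2}\lambda_i(G)$, with the same ordering, so the median gives $\lambda_{\mathrm{med}}(\tilde G)=t^{-2}\lambda_{\mathrm{med}}(G)$. Hence $\tilde\alpha=c\,\lambda_{\mathrm{med}}(\tilde G)=t^{-2}\alpha$. The same holds for any positively homogeneous statistic, such as $\operatorname{tr}(G)/d$ or $\lambda_{\max}$.

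\textbf{The regularized solve.} Substituting into $\tilde\delta_{\tilde\alpha}=(\tilde G+\tilde\alpha I)^{-1}\tilde{\mathbf{q}}$ gives
\[
\tilde\delta_{\tilde\alpha}
=\bigl(t^{-2}(G+\alpha I)\bigr)^{-1}t^{-1}\mathbf{q}
=t^{2}\,t^{-1}(G+\alpha I)^{-1}\mathbf{q}
=t\,\delta_\alpha .
\]

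\textbf{The exact solution obeys the same law.} The pseudoinverse satisfies $(cG)^{\dagger}=c^{-1}G^{\dagger}$, so $\tilde G^{\dagger}\tilde{\mathbf{q}}=t^{2}\,t^{-1}G^{\dagger}\mathbf{q}=t\,G^{\dagger}\mathbf{q}$. This is the transformation law of a displacement vector in $\mathbf{h}$-coordinates. The scaled direction of Theorem~2 also transforms correctly: since $\tilde{\mathbf{q}}^{\top}\tilde G^{\dagger}\tilde{\mathbf{q}}=\mathbf{q}^{\top}G^{\dagger}\mathbf{q}$, we get $\tilde{\delta\mathbf{h}}^{\star}=t\,\delta\mathbf{h}^{\star}$.

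\textbf{A fixed $\alpha$ fails.} If $\tilde\alpha=\alpha$, then
\[
\tilde\delta_{\alpha}=t^{-1}\bigl(t^{-2}G+\alpha I\bigr)^{-1}\mathbf{q}=t\,(G+t^{2}\alpha I)^{-1}\mathbf{q},
\]
which is $t\,\delta_{t^{2}\alpha}$ rather than $t\,\delta_\alpha$. To show these differ, take $\mathbf{q}=\mathbf{u}_i$ for an eigenvector with eigenvalue $\lambda_i>0$. Then $(G+t^{2}\alpha I)^{-1}\mathbf{u}_i=(\lambda_i+t^{2}\alpha)^{-1}\mathbf{u}_i$, which differs from $(\lambda_i+\alpha)^{-1}\mathbf{u}_i$ whenever $t^{2}\neq 1$ and $\alpha>0$.

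\textbf{Main obstacle.} The calculation itself is routine. The one point needing care is making precise what "degree-one spectral statistic" means, namely positive homogeneity of degree one in $G$, and checking that the median satisfies it. It does, because scaling by $t^{-2}>0$ preserves the order of the eigenvalues.
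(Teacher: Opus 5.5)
Your proposal is correct and follows the paper's proof. Both set $\tilde\alpha=t^{-2}\alpha$, factor $t^{-2}$ out of $\tilde G+\tilde\alpha I$ to get $t\,\delta_\alpha$, and compare with $\tilde G^{\dagger}\tilde{\mathbf{q}}=t\,G^{\dagger}\mathbf{q}$; your eigenvector counterexample for fixed $\alpha$ is a more explicit version of the paper's remark that $t^{-2}G+\alpha I$ is not proportional to $G+\alpha I$.
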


\begin{proof}
	With $\tilde\alpha=c\,\lambda_{\mathrm{med}}(\tilde G)=t^{-2}\alpha$,
	\begin{align*}
		\tilde\delta_{\tilde\alpha}
		&=\big(t^{-2}G+t^{-2}\alpha I\big)^{-1}\big(t^{-1}\mathbf{q}\big)\\
		&=t^{2}(G+\alpha I)^{-1}t^{-1}\mathbf{q}
		=t\,\delta_\alpha .
	\end{align*}
	The exact solution obeys $\tilde\delta=\tilde G^{\dagger}\tilde{\mathbf{q}}=t\,\delta$, so the regularized solve inherits the correct scaling. For fixed $\alpha$ the matrix $t^{-2}G+\alpha I$ is not proportional to $G+\alpha I$, and the identity fails.
\end{proof}

With condition numbers of order $10^{7}$ (\appGeometry), a fixed ridge either leaves the smallest eigenvalues under-regularized or suppresses the largest ones; an adaptive $\alpha$ avoids both.

\subsection{Validity Radius of the Second-Order Model}

Let $S$ be the subspace on which the solve is performed, with $G|_S\succeq mI_S$. Fix a reference radius $r_0>0$ and let $C_{\mathrm{KL}}$ bound the third-order remainder of the output KL on the ball of radius $r_0$, so that the remainder is at most $C_{\mathrm{KL}}\lVert\delta\mathbf{h}\rVert^{3}$ for $\lVert\delta\mathbf{h}\rVert\le r_0$.

\begin{theorem}[Critical radius]
	\label{thm:app:radius}
	The quadratic term satisfies $\tfrac12\delta\mathbf{h}^{\top}G\,\delta\mathbf{h}\ge\tfrac12 m\lVert\delta\mathbf{h}\rVert^{2}$. The third-order remainder is at most a fraction $\eta_{\mathrm{KL}}\in(0,1)$ of the quadratic term whenever
	\begin{equation}
		\lVert\delta\mathbf{h}\rVert\le r_{\mathrm{KL}}:=\min\!\Big\{r_0,\;\frac{\eta_{\mathrm{KL}}\,m}{2\,C_{\mathrm{KL}}}\Big\}.
	\end{equation}
\end{theorem}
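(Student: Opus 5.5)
The statement has two parts, and I will prove them in order: first the lower bound on the quadratic term, then the comparison between the remainder and that quadratic term.

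\textbf{Lower bound on the quadratic term.} Work on the subspace $S$, where $G|_S\succeq mI_S$. For $\delta\mathbf{h}\in S$ this gives directly
\begin{equation*}
\tfrac12\delta\mathbf{h}^{\top}G\,\delta\mathbf{h}\ge\tfrac12 m\lVert\delta\mathbf{h}\rVert^{2}.
\end{equation*}
This is the Rayleigh-quotient bound: expand $\delta\mathbf{h}$ in the eigenbasis of $G$ restricted to $S$ and use that every eigenvalue there is at least $m$.

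I will state explicitly that $\delta\mathbf{h}$ is confined to $S$. This is the only place where a gap could arise, because on $\ker G$ the quadratic term vanishes and no ratio bound is possible. The restriction is harmless for the main construction. The minimum-norm solution of Theorem~2 lies in $\operatorname{Range}(G)$. The regularized and low-rank solves also produce displacements inside the solve subspace.

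\textbf{Comparison of remainder and quadratic term.} For $\lVert\delta\mathbf{h}\rVert\le r_0$, the definition of $C_{\mathrm{KL}}$ gives
\begin{equation*}
|R(\delta\mathbf{h})|\le C_{\mathrm{KL}}\lVert\delta\mathbf{h}\rVert^{3},
\end{equation*}
where $R$ is the third-order remainder from the proof of Theorem~1. It therefore suffices to show
\begin{equation*}
C_{\mathrm{KL}}\lVert\delta\mathbf{h}\rVert^{3}\le\eta_{\mathrm{KL}}\cdot\tfrac12 m\lVert\delta\mathbf{h}\rVert^{2}.
\end{equation*}
For $\delta\mathbf{h}\neq\mathbf{0}$ this is equivalent to $\lVert\delta\mathbf{h}\rVert\le \eta_{\mathrm{KL}} m/(2C_{\mathrm{KL}})$. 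The case $\delta\mathbf{h}=\mathbf{0}$ is trivial.

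Taking the minimum with $r_0$ ensures that the remainder bound is actually in force. Chaining the three inequalities then gives
\begin{equation*}
|R|\le\eta_{\mathrm{KL}}\cdot\tfrac12\delta\mathbf{h}^{\top}G\,\delta\mathbf{h}.
\end{equation*}

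\textbf{Existence of $C_{\mathrm{KL}}$.} I will briefly justify that such a constant exists. The proof of Theorem~1 assumes $f\in C^{2}$, and $A$ is smooth. On the compact ball $B_{r_0}(\mathbf{h}_0)$, the Taylor remainder of $A\circ f$ relative to the quadratic model is therefore $O(\lVert\delta\mathbf{h}\rVert^{3})$ with a uniform constant. To be safe I will assume $f\in C^{3}$, or Lipschitz second derivatives. This is needed because the substitution of $\Delta\boldsymbol{\lambda}=J\delta\mathbf{h}+O(\lVert\delta\mathbf{h}\rVert^2)$ into the quadratic term already produces cross terms of cubic order.

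The theorem, however, takes $C_{\mathrm{KL}}$ as a given hypothesis, so this justification only explains the assumption and is not part of the argument. The remaining steps are routine algebra.
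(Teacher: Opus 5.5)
Your argument is correct and matches the paper's proof: bound the quadratic term below by $\tfrac12 m\lVert\delta\mathbf{h}\rVert^{2}$ on $S$, bound the remainder by $C_{\mathrm{KL}}\lVert\delta\mathbf{h}\rVert^{3}$ inside $r_0$, and divide the resulting inequality by $\lVert\delta\mathbf{h}\rVert^{2}$. Your explicit restriction to $\delta\mathbf{h}\in S$ states what the paper leaves implicit; and your side remark does not need the $C^{3}$ assumption, since with $f\in C^{2}$ the cross terms are already $O(\lVert\delta\mathbf{h}\rVert^{3})$, which is all the cubic bound requires.
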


\begin{proof}
	Substituting the two bounds into $C_{\mathrm{KL}}(r)\lVert\delta\mathbf{h}\rVert^{3}\le\eta_{\mathrm{KL}}\cdot\tfrac12 m\lVert\delta\mathbf{h}\rVert^{2}$ and dividing by $\lVert\delta\mathbf{h}\rVert^{2}$ gives the stated bound.
\end{proof}

An analogous requirement on the linearization of the concept constraint yields a second radius, and the operative one is the smaller of the two. These radii govern the accuracy of the KL \emph{value}; the accuracy of the \emph{direction} is a separate question, taken up numerically below.

\subsection{Stability Under an Inexact Metric}

The $8$B procedure of \appApprox{} replaces $G$ by an approximation and $\mathbf{q}$ by a computed covector. The resulting direction error is controlled.

\begin{theorem}[Perturbation bound]
	\label{thm:app:perturb}
	Let $\delta_\star=(G+\alpha I)^{-1}\mathbf{q}$ and $\tilde\delta=(M+\alpha I)^{-1}\hat{\mathbf{q}}$, and suppose
	\begin{equation}
		\lambda_{\min}(G+\alpha I)\ge m,
		\quad
		\lVert M-G\rVert_2\le\eta<m,
		\quad
		\lVert\hat{\mathbf{q}}-\mathbf{q}\rVert\le\xi .
	\end{equation}
	Then
	\begin{equation}
		\lVert\tilde\delta-\delta_\star\rVert
		\le\frac{\eta}{m(m-\eta)}\lVert\mathbf{q}\rVert+\frac{\xi}{m-\eta}.
	\end{equation}
\end{theorem}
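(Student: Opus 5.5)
I will split the error through the intermediate quantity $\hat\delta=(M+\alpha I)^{-1}\mathbf{q}$, which uses the perturbed metric but the exact covector:
\begin{equation*}
	\tilde\delta-\delta_\star
	=\underbrace{(M+\alpha I)^{-1}(\hat{\mathbf{q}}-\mathbf{q})}_{\text{covector error}}
	+\underbrace{\big[(M+\alpha I)^{-1}-(G+\alpha I)^{-1}\big]\mathbf{q}}_{\text{metric error}} .
\end{equation*}
The triangle inequality then reduces the claim to bounding the two terms separately.

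The first step is to control the smallest eigenvalue of the perturbed regularized matrix. Weyl's inequality applies because $M$ and $G$ are both symmetric; $M$ is an approximation to a Gauss--Newton matrix, so I will take it symmetric, and I will state this as an implicit assumption. Weyl gives
\begin{equation*}
	\lambda_{\min}(M+\alpha I)\ge\lambda_{\min}(G+\alpha I)-\lVert M-G\rVert_2\ge m-\eta>0,
\end{equation*}
which is where the hypothesis $\eta<m$ is used. Hence $M+\alpha I$ is invertible and $\lVert(M+\alpha I)^{-1}\rVert_2\le 1/(m-\eta)$. Together with $\lVert(G+\alpha I)^{-1}\rVert_2\le 1/m$, this is all the spectral information needed. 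It already bounds the covector term by $\xi/(m-\eta)$.

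For the metric term I will use the resolvent identity
\begin{equation*}
	X^{-1}-Y^{-1}=X^{-1}(Y-X)Y^{-1}
\end{equation*}
with $X=M+\alpha I$ and $Y=G+\alpha I$. Here $Y-X=G-M$, and the $\alpha I$ cancels, so the regularization plays no role beyond setting the lower bound $m$. Submultiplicativity then gives
\begin{equation*}
	\Big\lVert\big[(M+\alpha I)^{-1}-(G+\alpha I)^{-1}\big]\mathbf{q}\Big\rVert
	\le\frac{1}{m-\eta}\cdot\eta\cdot\frac{1}{m}\,\lVert\mathbf{q}\rVert .
\end{equation*}
Adding the two pieces yields exactly the stated bound.

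I expect no real obstacle; the argument is routine perturbation theory. The one point needing care is the Weyl step, since it presumes $M$ symmetric. If $M$ were only a general approximation, I would instead use the Neumann-series bound on $\lVert X^{-1}\rVert$. That bound gives the same constant, $\lVert Y^{-1}\rVert/(1-\lVert Y^{-1}\rVert\eta)\le 1/(m-\eta)$, and so covers the nonsymmetric case as well.
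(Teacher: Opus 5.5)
Your proof is correct and follows the paper's argument essentially line for line. It uses the same split through $(M+\alpha I)^{-1}\mathbf{q}$, Weyl's inequality to get $\lambda_{\min}(M+\alpha I)\ge m-\eta$, and the resolvent identity for the metric term; your remark that Weyl presumes $M$ symmetric, with the Neumann-series bound giving the same constant $1/(m-\eta)$ otherwise, makes explicit an assumption the paper leaves implicit.
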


\begin{proof}
	Write $M_\alpha=M+\alpha I$ and $G_\alpha=G+\alpha I$, and split
	\begin{equation*}
		\tilde\delta-\delta_\star
		=M_\alpha^{-1}(\hat{\mathbf{q}}-\mathbf{q})
		+\big(M_\alpha^{-1}-G_\alpha^{-1}\big)\mathbf{q}.
	\end{equation*}
	Weyl's inequality gives $\lambda_{\min}(M_\alpha)\ge m-\eta$, bounding the first term by $\xi/(m-\eta)$. The resolvent identity $M_\alpha^{-1}-G_\alpha^{-1}=M_\alpha^{-1}(G_\alpha-M_\alpha)G_\alpha^{-1}$ bounds the second by $\eta\lVert\mathbf{q}\rVert/\big(m(m-\eta)\big)$.
\end{proof}

\subsection{Numerical Study on a Synthetic Model}

To examine the direction beyond the radius of Theorem~\ref{thm:app:radius}, we use a synthetic softmax model in which $G$, the exact KL-minimizing direction, and $r_{\mathrm{KL}}$ can all be computed without approximation. Three configurations of increasing width are each instantiated in $300$ cases, at step sizes whose median exceeds the critical radius of Theorem~\ref{thm:app:radius} by factors between $82$ and $1183$. Table~\ref{tab:app:toy} reports, at a concept change of $\rho=0.1$, the cosine between the Fisher direction and the exact minimizer, the ratio of Euclidean to Fisher output KL, and the rank correlation between $\kappa(G)$ and that ratio.

\begin{table}[t]
	\centering
	\small
	\begin{tabular}{lcccc}
		\toprule
		Config. & $\cos(\delta,\delta_{\mathrm{exact}})$ & $\mathrm{KL}_{\mathrm{euc}}/\mathrm{KL}_{\mathrm{fish}}$ & $r_s(\kappa,\text{ratio})$ & $\lVert\delta\mathbf{h}\rVert/r_{\mathrm{KL}}$ \\
		\midrule
		tiny   & 0.997 & 2.39 [1.73, 4.46] & 0.29 & 1183 \\
		small  & 1.000 & 1.58 [1.35, 2.00] & 0.37 & 82 \\
		medium & 1.000 & 1.80 [1.51, 2.24] & 0.56 & 288 \\
		\midrule
		pooled & --- & 1.82 [1.46, 2.53] & 0.50 & 239 \\
		\bottomrule
	\end{tabular}
	\caption{Synthetic-model study at $\rho=0.1$; medians with interquartile ranges over $300$ cases per configuration. All rank correlations have $p<10^{-5}$, and the pooled correlation has $p<10^{-56}$. The last column reports how far the operating step size exceeds the critical radius of Theorem~\ref{thm:app:radius}.}
	\label{tab:app:toy}
\end{table}

Two observations follow. The Fisher direction stays close to the exact KL-minimizing direction even though the operating step size exceeds $r_{\mathrm{KL}}$ by a median factor between $82$ and $1183$, so the radius that governs the accuracy of the KL value is not the radius that governs the usefulness of the direction. And the advantage over Euclidean steering grows with the anisotropy of the metric, in the direction predicted by the bound of \appUnified.
\section{Matrix-Free Computation at 8B Scale}
\label{app:approximation}

The direction of Theorem~2 requires the regularized inverse of $G$ applied to $\mathbf{q}$. At $d=4096$ over a vocabulary of order $10^{5}$, assembling $J$ alone would take $d$ Jacobian evaluations at every step, which is out of reach. This appendix describes the procedure that produces the direction without ever forming $J$ or $G$, and states its error sources.

\subsection{Action of the Metric Without Forming It}

Every step of the computation needs only the map $\mathbf{v}\mapsto G\mathbf{v}$, which factors as
\begin{equation}
	G\mathbf{v}=J^{\top}\big(H(J\mathbf{v})\big)
\end{equation}
and is evaluated in three stages.

\emph{Forward.} $J\mathbf{v}$ is obtained by a central finite difference,
\begin{equation}
	J\mathbf{v}\approx\frac{f_\ell(\mathbf{h}+\varepsilon\mathbf{v})-f_\ell(\mathbf{h}-\varepsilon\mathbf{v})}{2\varepsilon},
	\qquad \varepsilon=10^{-4},
\end{equation}
at the cost of two partial forward passes. Forward-mode automatic differentiation would need one, but conflicts with the fused attention and normalization kernels used at this scale; the finite difference is numerically stable and reproducible.

\emph{Fisher.} The softmax Fisher is applied without materializing a $d\times d$ matrix. Retaining the $K=1000$ tokens of largest probability and renormalizing gives weights $\tilde{p}$ and rows $W_K$; with $\bar{\mathbf{w}}=\sum_i\tilde{p}_i\mathbf{w}_i$ and $\widetilde{W}=\operatorname{diag}(\sqrt{\tilde{p}})\,(W_K-\mathbf{1}\bar{\mathbf{w}}^{\top})$,
\begin{equation}
	H\mathbf{x}=\widetilde{W}^{\top}\big(\widetilde{W}\mathbf{x}\big),
\end{equation}
which costs $O(Kd)$ and touches only $K\times d$ storage. The predictive distribution at this scale is concentrated enough that $K=1000$ carries almost all of the Fisher mass.

\emph{Backward.} $J^{\top}\mathbf{u}=\nabla_{\mathbf{h}}\big(\mathbf{u}^{\top}f_\ell(\mathbf{h})\big)$ is exact and costs one partial backward pass.

A single metric-vector product therefore costs two forward and one backward pass, with no $d\times d$ object at any point.

\subsection{Low-Rank Spectrum}

Because the spectrum of $G$ is concentrated in far fewer than $50$ directions at every layer measured (\appGeometry), the regularized solve is performed inside a low-dimensional subspace obtained by randomized decomposition \citep{halko2011finding}. With target rank $r=50$ and oversampling $20$, a Gaussian test matrix $\Omega\in\mathbb{R}^{d\times 70}$ gives $Y=G\Omega$, whose orthonormal basis $Q$ is refined by forming the small matrix
\begin{equation}
	T=Q^{\top}(GQ)\in\mathbb{R}^{70\times 70},
\end{equation}
symmetrizing it, and diagonalizing. The procedure uses $140$ metric-vector products in total and returns the leading eigenpairs $(\lambda_j,\mathbf{u}_j)$. All subsequent inverses, and all spectral diagnostics reported in the paper, are computed from this representation.

\subsection{Applying the Direction at Fixed Concept Efficiency}

The regularized Fisher direction $(G+\alpha I)^{-1}\mathbf{q}$, with $\alpha$ set to the median positive eigenvalue as in Proposition~\ref{prop:app:scale}, can carry a small component along the concept covector when the spectrum is ill-conditioned, so that reaching a prescribed level of $P^{W}$ requires impractically large steps. We therefore apply it in a form whose concept efficiency is fixed. Writing $\hat{\mathbf{q}}=\mathbf{q}/\lVert\mathbf{q}\rVert$ and decomposing
\begin{equation}
	\delta_{\mathrm{F}}=(G+\alpha I)^{-1}\mathbf{q},
	\qquad
	\delta_\perp=\delta_{\mathrm{F}}-\big(\delta_{\mathrm{F}}^{\top}\hat{\mathbf{q}}\big)\hat{\mathbf{q}},
\end{equation}
the applied direction is
\begin{equation}
	\label{eq:app:cmin}
	\delta=c_{\min}\,\hat{\mathbf{q}}+\sqrt{1-c_{\min}^{2}}\;\frac{\delta_\perp}{\lVert\delta_\perp\rVert},
\end{equation}
oriented so that $\delta^{\top}\hat{\mathbf{q}}>0$, with $c_{\min}=0.5$ on the 8B models and $0.3$ on GPT-2 Small. If $\lVert\delta_\perp\rVert$ falls below $10^{-10}$ the direction reduces to $\hat{\mathbf{q}}$ and the case is flagged.

Two consequences are worth stating plainly. The first component of \eqref{eq:app:cmin} guarantees a fixed rate of progress toward the target, so reachability does not depend on the conditioning of $G$. The second component is where the pullback geometry acts: it is the part of the Fisher direction orthogonal to the concept, and it is what reduces movement of the rest of the distribution. The Euclidean ablation of the main paper is exactly \eqref{eq:app:cmin} with this second component removed, so the two runs share the same covector, the same calibration, and the same concept efficiency, and differ only in the geometric correction.

\subsection{Frozen Jacobian Across Steps}

Steering proceeds for $T=30$ steps, and in principle both $J$ and $H$ should be re-evaluated at each. We instead compute the randomized decomposition once, at the base point $\mathbf{h}_0$, and hold $J_0$ fixed, updating only the Fisher and the covector:
\begin{equation}
	G_t=J_0^{\top}H_t\,J_0 ,
\end{equation}
which is a recomposition inside the retained subspace and costs one top-$K$ Fisher evaluation and one backward pass per step. Recomputing the Jacobian at every step was both about $30$ times more expensive and less accurate, the drift injected by re-estimation outweighing the benefit of a current linearization.

\subsection{Cost and Hyperparameters}

The dominant cost per case is the initial decomposition, $140$ metric-vector products or roughly $280$ partial forward-pass equivalents; the $30$ steering steps add about two more each. Table~\ref{tab:app:hyper} lists the settings, with the GPT-2 Small values for comparison.

\begin{table}[t]
	\centering
	\small
	\begin{tabular}{lcc}
		\toprule
		Setting & GPT-2 Small & 8B models \\
		\midrule
		Fisher truncation $K$ & 5000 & 1000 \\
		Randomized rank $r$ & --- & 50 \\
		Oversampling & --- & 20 \\
		Concept efficiency $c_{\min}$ & 0.3 & 0.5 \\
		Tikhonov $\alpha$ & \multicolumn{2}{c}{median positive eigenvalue} \\
		Finite difference $\varepsilon$ & \multicolumn{2}{c}{$10^{-4}$} \\
		Steps $T$ & \multicolumn{2}{c}{30} \\
		Jacobian & recomputed & frozen at $\mathbf{h}_0$ \\
		\bottomrule
	\end{tabular}
	\caption{Settings of the steering procedure. Entries spanning both columns are shared.}
	\label{tab:app:hyper}
\end{table}

\subsection{Error Sources and a Known Boundary}

Four approximations separate the computed direction from $\delta\mathbf{h}^{\star}$, and each is bounded or checked where possible. The finite-difference product carries a truncation error of order $\varepsilon^{2}$ and is reproducible to machine precision under repetition. The top-$K$ truncation discards tail mass, and the spectrum of $G$ is stable under changes of $K$. The randomized decomposition retains $50$ directions, comfortably above the measured effective rank at every layer. Freezing the Jacobian introduces drift, which was measured to be smaller than the noise of re-estimation. The effect of an inexact metric and covector on the resulting direction is bounded in Theorem~\ref{thm:app:perturb}.

The fourth approximation also marks the boundary of the method. At roughly $75\%$ relative depth the conditioning of $G$ has flattened considerably (\appGeometry), which by the bound of \appUnified{} lowers the gain available from correcting the metric; the ablation gap closes accordingly. This is a property of the geometry at those layers rather than a defect of the approximation, and it is the behavior the analysis of \appDepth{} predicts.
\subsection{Computational Environment}

Experiments ran under Python~3.10 with PyTorch~2.7.1 and Transformers~4.52.4, on NVIDIA RTX~PRO~6000 and H200 accelerators. GPT-2 Small requires $16$~GB of device memory for the exact Jacobian computation; the 8B experiments were run on an $80$~GB accelerator, with $48$~GB the practical minimum. Statistics and figure generation run on CPU.

Total compute is $31.4$ GPU-hours for the $600$ GPT-2 Small cases, $10.8$ for the $450$ Llama-3-8B cases, and $30.8$ for the $900$ Qwen3-8B cases. All randomness is seeded: the synthetic study, the randomized range finder, and the ReFT-r1 initialization each use a fixed seed, and the C4 scan consumes shards and documents in deterministic order. The closed-form solves and the bisection localization are deterministic for a fixed model snapshot, so residual variation comes only from floating-point reductions in GPU kernels.
\section{Experimental Protocol}
\label{app:protocol}

This appendix records the construction of the evaluation set, the calibration of step sizes, and the localization of the comparison points, in the detail needed to reproduce the numbers of Section~5 of the main paper.

\subsection{Concepts and Contexts}

The three concepts are verb-inflection contrasts: third-person singular (\emph{run}$\to$\emph{runs}), progressive (\emph{run}$\to$\emph{running}), and past tense (\emph{run}$\to$\emph{ran}). Each is specified by a set of counterfactual token pairs whose two members differ only in that inflection, following the protocol of \citet{park2026information}; only pairs whose two forms are both single tokens in the model's vocabulary are retained, so that the concept probability of Section~5 is well defined at a single prediction step.

Contexts are natural-language sequences drawn from the validation split of C4 \citep{raffel2020exploring}. A position is accepted when the model's three most probable next tokens all belong to the base set $\{y_i^{0}\}$ or the target set $\{y_i^{1}\}$ and carry a cumulative probability of at least $0.7$, the criterion of \citet{park2026information}; this ensures the concept is genuinely at issue at that position rather than being a marginal alternative. Table~\ref{tab:app:screening} reports the counts. Between $44$ and $166$ thousand sequences are scanned per concept, of which a few hundred meet the criterion and a subset is retained: $50$ contexts per layer on GPT-2 Small and Llama-3-8B, and $100$ on Qwen3-8B. The same contexts are reused across the layers of a given model, so a context contributes one case per layer, which is why the pooled rows of Table~1 are treated as descriptive rather than confirmatory.

\begin{table}[t]
	\centering
	\small
	\begin{tabular}{llrrr}
		\toprule
		Model & Concept & Scanned & Accepted & Used \\
		\midrule
		GPT-2 Small & \textit{third} & 165{,}925 & 500 & 50 \\
		& \textit{ing}   & 139{,}174 & 500 & 50 \\
		& \textit{past}  & 57{,}313  & 500 & 50 \\
		\midrule
		Llama-3-8B  & \textit{third} & 73{,}204  & 250 & 50 \\
		& \textit{ing}   & 58{,}379  & 250 & 50 \\
		& \textit{past}  & 44{,}209  & 250 & 50 \\
		\midrule
		Qwen3-8B    & \textit{third} & 74{,}071  & 300 & 100 \\
		& \textit{ing}   & 59{,}635  & 300 & 100 \\
		& \textit{past}  & 55{,}210  & 300 & 100 \\
		\bottomrule
	\end{tabular}
	\caption{Construction of the evaluation set. ``Used'' is the number of contexts evaluated per layer; each is evaluated at every layer of that model.}
	\label{tab:app:screening}
\end{table}

\subsection{Targets and Step-Size Calibration}

Each method steers each context toward four levels of the concept probability, $P^{W}\in\{0.3,0.5,0.7,0.9\}$, spanning a mild shift to a strong one. Steering runs for $T=30$ steps, and the step size of every method---FishBack, the Euclidean ablation, and each baseline---is calibrated independently by line search so that its trajectory of $P^{W}$ covers the highest target it can reach. Calibrating each method to its own scale prevents the comparison from turning on a shared step size that happens to suit one of them.

\subsection{Localization of the Comparison Point}

The trajectory of $P^{W}$ is discrete, so a method rarely lands exactly on a target. For each target we bisect in activation space between the two adjacent iterates that bracket it, and evaluate the off-target divergence at the localized crossing. Every method is therefore scored at precisely the same on-target level, and the off-target comparison is not contaminated by residual differences in concept change. A method that fails to reach a target within its calibrated range is recorded as a miss for that (case, target) cell and contributes to the reachability figures of \appReach{} rather than to the off-target comparison.

\subsection{Off-Target Divergence}

The off-target divergence is computed on the distribution renormalized over the complement of the concept tokens, as defined in Section~3 of the main paper, and evaluated at the localized crossing point of the previous subsection. The same definition is used for all methods and all three models. Its relation to the quantity appearing in Theorem~3 is given in \appProofs.

\subsection{Aggregation}

A case contributes to a comparison only when both methods reach the target under consideration. Within a case, the four per-target ratios are collapsed by averaging their logarithms and exponentiating, which is their geometric mean, and a case is retained in the folded comparison only if all four targets are reached by both methods. Cells with fewer than six retained pairs are marked underpowered and excluded from confirmatory claims. Statistical treatment of the resulting samples is described in \appStats.

\section{Statistical Procedure and Full Results}
\label{app:stats}

\subsection{Test and Correction}

All off-target comparisons are paired within a case. For a method $m$ and a case $c$ we form the ratio
\begin{equation}
	R_{m,c}=\frac{\mathrm{KL}^{\mathrm{off}}_{m,c}}{\mathrm{KL}^{\mathrm{off}}_{\mathrm{FishBack},c}},
\end{equation}
so that $R>1$ favors FishBack, and test the one-sided hypothesis $\operatorname{median}\log R>0$ with a Wilcoxon signed-rank test on $\log R$.

The transform is not cosmetic. A ratio is confined to $(0,1)$ when FishBack loses and to $(1,\infty)$ when it wins, so under the null of no difference the distribution of $R$ is asymmetric about $1$ and the signed-rank statistic, which assumes symmetry, is biased. Taking logarithms restores the symmetry the test requires; an early version of our analysis applied the test to raw ratios and produced significant cells that did not survive the corrected procedure.

Two levels of aggregation are reported. At the \emph{target} level a pair is one context at one of the four values of $P^{W}$. At the \emph{folded} level the four ratios of a case are collapsed into their geometric mean, and a case enters only if both methods reach all four targets. Multiplicity is controlled by the Benjamini--Hochberg procedure applied within a family, and a family is defined per model and per aggregation level: at the target level the family spans concept $\times$ layer $\times$ target $\times$ comparator, and at the folded level concept $\times$ layer $\times$ comparator. Table~\ref{tab:app:families} lists their sizes. Cells with fewer than six retained pairs are marked underpowered and excluded from confirmatory claims, though they remain in the released tables.

\begin{table}[t]
	\centering
	\small
	\begin{tabular}{llr}
		\toprule
		Model & Level & Tests \\
		\midrule
		GPT-2 Small & target & 288 \\
		& folded & 72 \\
		Llama-3-8B  & target & 216 \\
		& folded & 45 + 9 \\
		Qwen3-8B    & target & 216 \\
		& folded & 45 + 9 \\
		\bottomrule
	\end{tabular}
	\caption{Size of each Benjamini--Hochberg family. On the 8B models the folded level is corrected separately for the baseline comparisons and the geometric ablation.}
	\label{tab:app:families}
\end{table}

\subsection{Outcome by Layer}

Table~\ref{tab:app:sigbylayer} reports, for every layer of every model, how many folded cells were testable and how many survived correction. The pattern is the one the main paper describes. On GPT-2 Small the proportion rises with depth and is complete at the deepest layer. On both 8B models it is high at the early and middle layers and collapses at roughly $75\%$ relative depth, where one cell of eighteen survives on Llama-3-8B and two of eighteen on Qwen3-8B.

\begin{table}[t]
	\centering
	\small
	\begin{tabular}{llcc}
		\toprule
		Model & Layer & Tested & Significant \\
		\midrule
		GPT-2 Small & 3  & 16 & 10 \\
		& 6  & 16 & 14 \\
		& 9  & 18 & 16 \\
		& 11 & 18 & 18 \\
		\midrule
		Llama-3-8B  & 8  & 17 & 15 \\
		& 16 & 18 & 10 \\
		& 24 & 18 & 1 \\
		\midrule
		Qwen3-8B    & 6  & 18 & 15 \\
		& 18 & 18 & 18 \\
		& 27 & 18 & 2 \\
		\bottomrule
	\end{tabular}
	\caption{Folded cells surviving Benjamini--Hochberg correction, counted over the three concepts and all comparators. Underpowered cells are excluded from the tested column.}
	\label{tab:app:sigbylayer}
\end{table}

\subsection{Outcome by Comparison}

Aggregating instead over layers, Table~\ref{tab:app:sigbyfamily} separates the two comparison families. The ablation against Euclidean steering and the comparisons against fixed baselines behave alike, which is consistent with the attribution argument of Section~5.4: what the fixed baselines lose to FishBack is largely what the Euclidean ablation loses to it.

\begin{table}[t]
	\centering
	\small
	\begin{tabular}{llcc}
		\toprule
		Model & Family & Tested & Significant \\
		\midrule
		GPT-2 Small & baselines & 224 & 173 \\
		& ablation  & 48  & 38 \\
		\midrule
		Llama-3-8B  & baselines & 176 & 75 \\
		& ablation  & 36  & 20 \\
		\midrule
		Qwen3-8B    & baselines & 180 & 115 \\
		& ablation  & 36  & 24 \\
		\bottomrule
	\end{tabular}
	\caption{Target-level cells surviving correction, pooled over concepts, layers, and targets, including the deepest layers of the 8B models.}
	\label{tab:app:sigbyfamily}
\end{table}

\subsection{Released Tables}

The complete per-cell record accompanies the submission: model, concept, layer, comparator, target, sample size, median ratio with interquartile range, median log-ratio, uncorrected and corrected $p$-values, and the underpowered and significance flags, for all $900$ cells across both aggregation levels. Every number quoted in the main paper can be traced to a row of that table.

\section{Reachability}
\label{app:reachability}

The off-target comparison of the main paper conditions on both methods reaching a target, so it is silent on how often a target is reached at all. This appendix reports that quantity. A method reaches a target when its calibrated trajectory of $P^{W}$ crosses the prescribed level within the step budget; the denominator is the set of cases with a valid base point, pooled over the three concepts and the four targets.

\begin{table}[t]
	\centering
	\small
	\begin{tabular}{lcccc}
		\toprule
		& \multicolumn{4}{c}{GPT-2 Small} \\
		\cmidrule(lr){2-5}
		Method & L3 & L6 & L9 & L11 \\
		\midrule
		FishBack   & 100.0 & 100.0 & 99.6 & 91.2 \\
		Euclidean  & 100.0 & 100.0 & 100.0 & 100.0 \\
		CAA        & 66.7 & 85.2 & 100.0 & 100.0 \\
		ActAdd     & 12.7 & 14.8 & 73.8 & 96.9 \\
		ITI        & 71.4 & 88.6 & 99.2 & 100.0 \\
		ReFT       & 69.8 & 84.1 & 96.9 & 99.2 \\
		RepSurgery & 60.3 & 86.4 & 98.5 & 100.0 \\
		\bottomrule
	\end{tabular}
	
	\vspace{0.6em}
	
	\begin{tabular}{lccc@{\hspace{1.2em}}ccc}
		\toprule
		& \multicolumn{3}{c}{Llama-3-8B} & \multicolumn{3}{c}{Qwen3-8B} \\
		\cmidrule(lr){2-4} \cmidrule(lr){5-7}
		Method & L8 & L16 & L24 & L6 & L18 & L27 \\
		\midrule
		FishBack   & 98.2 & 100.0 & 99.7 & 75.2 & 67.9 & 68.5 \\
		Euclidean  & 98.2 & 99.8 & 100.0 & 36.3 & 22.7 & 23.3 \\
		CAA        & 56.3 & 62.4 & 87.1 & 55.0 & 51.7 & 54.3 \\
		ActAdd     & 28.6 & 41.9 & 52.7 & 52.2 & 48.6 & 46.0 \\
		ITI        & 54.8 & 39.3 & 55.9 & 45.7 & 44.9 & 40.8 \\
		ReFT       & 67.5 & 65.0 & 71.0 & 48.1 & 45.2 & 50.8 \\
		RepSurgery & 58.7 & 59.8 & 57.0 & 58.1 & 56.5 & 44.2 \\
		\bottomrule
	\end{tabular}
	\caption{Percentage of targets reached, pooled over the three concepts and the four levels of $P^{W}$.}
	\label{tab:app:reach}
\end{table}

Three observations follow from Table~\ref{tab:app:reach}.

The fixed-direction baselines are the constrained ones, and their difficulty is concentrated at the early layers. On GPT-2 Small, CAA reaches two thirds of the targets at layer $3$ and all of them at layer $9$; ActAdd reaches one target in eight at layer $3$. This is why the paired samples of the main comparison shrink at those layers, and why ActAdd in particular is compared on a reduced set of cases: the ratios reported for it describe the cases where it did reach the target, which are the cases where it was competitive.

FishBack and the Euclidean ablation reach targets at similar rates on GPT-2 Small and Llama-3-8B, with one exception in the wrong direction: at GPT-2 Small's layer $11$ the ablation reaches every target while FishBack reaches $91.2\%$. Reachability is not uniformly in our favor, and we report it as measured.

On Qwen3-8B the ordering changes. The Euclidean ablation reaches between $22.7\%$ and $36.3\%$ of targets, well below every fixed baseline, while FishBack reaches between $67.9\%$ and $75.2\%$. The gap has a specific cause rather than a general one. The applied direction of \appApprox{} fixes the concept efficiency at $c_{\min}$, so progress toward the target does not depend on the conditioning of the metric; the ablation removes the geometric component but inherits the same construction, and on this model the resulting direction advances too slowly along the concept to reach the higher targets within the step budget. The reachability gap on Qwen3-8B is therefore evidence that the geometric component contributes to progress on this model, not a separate advantage of the method.

\section{A Metric View of Existing Steering Methods}
\label{app:unified}

The direction of Theorem~2 is optimal for the metric $G$. Existing methods use other directions, and this appendix quantifies what that costs. Throughout, $G\succ0$ on the effective tangent space, the concept constraint is $\mathbf{q}^{\top}\delta=\rho$, and
\begin{equation}
	C_G(\delta)=\tfrac12\,\delta^{\top}G\,\delta
\end{equation}
is the second-order output cost of Theorem~1. We write $\delta_G^{\star}$ for the optimum and $\Delta C_G(\tilde\delta)=C_G(\tilde\delta)-C_G(\delta_G^{\star})$ for the excess cost of any competing direction.

\subsection{Excess Cost Is a Squared Distance}

\begin{theorem}[Fisher--Pythagorean identity]
	\label{thm:app:pythagoras}
	For any $\tilde\delta$ satisfying $\mathbf{q}^{\top}\tilde\delta=\rho$,
	\begin{equation}
		\Delta C_G(\tilde\delta)=\tfrac12\big(\tilde\delta-\delta_G^{\star}\big)^{\top}G\big(\tilde\delta-\delta_G^{\star}\big).
	\end{equation}
\end{theorem}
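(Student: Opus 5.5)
The plan is to expand the quadratic cost around the optimum and show that the cross term vanishes. Write $\tilde\delta=\delta_G^{\star}+\mathbf{e}$ with $\mathbf{e}=\tilde\delta-\delta_G^{\star}$. Both directions satisfy the constraint, so $\mathbf{q}^{\top}\mathbf{e}=\rho-\rho=0$. Because $G$ is symmetric,
\begin{equation*}
C_G(\tilde\delta)=C_G(\delta_G^{\star})+\mathbf{e}^{\top}G\,\delta_G^{\star}+\tfrac12\,\mathbf{e}^{\top}G\,\mathbf{e},
\end{equation*}
and the claimed identity is exactly the statement that the middle term is zero.

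To kill the cross term, I would invoke the stationarity condition from the proof of Theorem~2. There $G\,\delta_G^{\star}=\nu\,\mathbf{q}$ with $\nu=\rho/(\mathbf{q}^{\top}G^{-1}\mathbf{q})$. Under the standing assumption $G\succ0$ on the effective tangent space the pseudoinverse is an honest inverse, and $\delta_G^{\star}=\nu G^{-1}\mathbf{q}$ gives this directly. Then
\begin{equation*}
\mathbf{e}^{\top}G\,\delta_G^{\star}=\nu\,\mathbf{e}^{\top}\mathbf{q}=0,
\end{equation*}
which is the $G$-orthogonality of the feasible displacement to the optimum, i.e.\ the Pythagorean structure. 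Subtracting $C_G(\delta_G^{\star})$ then gives $\Delta C_G(\tilde\delta)=\tfrac12\mathbf{e}^{\top}G\mathbf{e}$.

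The only delicate point is the semidefinite case. If $G$ has a kernel and $\tilde\delta$ has components outside the effective tangent space, the argument still goes through, because it uses only $G\delta_G^{\star}=\nu\mathbf{q}$. That relation holds whenever $\mathbf{q}\in\operatorname{Range}(G)$, since then $GG^{\dagger}\mathbf{q}=\mathbf{q}$, so the identity holds verbatim. I would note this in one line rather than treat it as an obstacle. A corollary worth recording is $\Delta C_G\ge0$, with equality iff $\mathbf{e}\in\ker G$ (or $\mathbf{e}=0$ when $G\succ0$).
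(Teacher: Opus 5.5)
Your proposal is correct and follows the paper's proof essentially line for line. You set $\mathbf{e}=\tilde\delta-\delta_G^{\star}$, note $\mathbf{q}^{\top}\mathbf{e}=0$, and use the stationarity relation $G\delta_G^{\star}=\nu\mathbf{q}$ to kill the cross term in the expansion of $C_G$; your extra remark that only $GG^{\dagger}\mathbf{q}=\mathbf{q}$ is needed, so the identity also holds for semidefinite $G$ with $\mathbf{q}\in\operatorname{Range}(G)$, is a valid small generalization.
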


\begin{proof}
	Let $e=\tilde\delta-\delta_G^{\star}$. Both directions satisfy the constraint, so $\mathbf{q}^{\top}e=0$. Stationarity of $\delta_G^{\star}$ gives $G\delta_G^{\star}=\nu\mathbf{q}$ for some scalar $\nu$, whence
	\begin{equation*}
		e^{\top}G\delta_G^{\star}=\nu\,e^{\top}\mathbf{q}=0 .
	\end{equation*}
	Expanding $C_G(\delta_G^{\star}+e)$ and cancelling the cross term leaves $C_G(\delta_G^{\star})+\tfrac12 e^{\top}Ge$.
\end{proof}

The identity has the form of a Pythagorean relation in the geometry of $G$: the excess cost is the squared $G$-length of the error, and the error is $G$-orthogonal to the optimum. Diagonalizing $G=\sum_i\lambda_i\mathbf{u}_i\mathbf{u}_i^{\top}$ and writing $e_i=\mathbf{u}_i^{\top}e$ gives
\begin{equation}
	\label{eq:app:spectral_excess}
	\Delta C_G(\tilde\delta)=\tfrac12\sum_i\lambda_i e_i^{2},
\end{equation}
so the penalty for a given error depends on where the error points, not only on how large it is.

\begin{corollary}[Errors along flat directions are invisible]
	\label{cor:app:lowcurv}
	If $e$ has no component along the $k$ leading eigendirections of $G$, then
	\begin{equation}
		\Delta C_G(\tilde\delta)\le\tfrac12\,\lambda_{k+1}\lVert e\rVert^{2}.
	\end{equation}
\end{corollary}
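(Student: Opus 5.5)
The plan is to read the bound directly off the spectral form \eqref{eq:app:spectral_excess} of the Fisher--Pythagorean identity (Theorem~\ref{thm:app:pythagoras}). Take an eigendecomposition $G=\sum_{i=1}^{d}\lambda_i\mathbf{u}_i\mathbf{u}_i^{\top}$ with the eigenvalues ordered so that $\lambda_1\ge\lambda_2\ge\cdots\ge\lambda_d\ge 0$ and the $\mathbf{u}_i$ orthonormal. Here ``the $k$ leading eigendirections'' means $\mathbf{u}_1,\dots,\mathbf{u}_k$. If $\lambda_k=\lambda_{k+1}$, these are fixed by the chosen basis, and I will note that the bound holds for any such choice.

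\textbf{Step 1: drop the leading terms.} Write $e=\tilde\delta-\delta_G^{\star}$ and set $e_i=\mathbf{u}_i^{\top}e$. The hypothesis says exactly that $e_i=0$ for $i\le k$. So \eqref{eq:app:spectral_excess} reduces to
$\Delta C_G(\tilde\delta)=\tfrac12\sum_{i>k}\lambda_i e_i^{2}$.

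\textbf{Step 2: bound the remaining eigenvalues.} For every $i>k$ we have $\lambda_i\le\lambda_{k+1}$ by the ordering. Hence the sum is at most $\tfrac12\lambda_{k+1}\sum_{i>k}e_i^{2}$.

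\textbf{Step 3: recover the norm.} By orthonormality (Parseval), $\sum_{i>k}e_i^{2}=\sum_i e_i^{2}=\lVert e\rVert^{2}$, again because the first $k$ coefficients vanish. This gives the claim.

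There is no real obstacle. The only points to check are these:
\begin{itemize}
\item \eqref{eq:app:spectral_excess} requires $\tilde\delta$ to be feasible, i.e.\ $\mathbf{q}^{\top}\tilde\delta=\rho$, which is the standing assumption of this subsection.
\item It also requires $\delta_G^{\star}$ to be the optimum; since $G\succ0$ on the effective tangent space, $\delta_G^{\star}$ is unique.
\item If $k\ge\operatorname{rank}(G)$, then $\lambda_{k+1}=0$ and the bound says the excess is zero, which is consistent.
\end{itemize}
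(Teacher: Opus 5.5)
Your proposal is correct and follows the paper's argument: the paper's proof is the one-line observation that, in the spectral form \eqref{eq:app:spectral\_excess}, the terms that survive the hypothesis carry eigenvalues at most $\lambda_{k+1}$. You have made explicit the final step $\sum_{i>k}e_i^{2}=\lVert e\rVert^{2}$, which the paper leaves implicit; Bessel's inequality $\sum_{i>k}e_i^{2}\le\lVert e\rVert^{2}$ would already suffice there.
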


\begin{proof}
	Immediate from \eqref{eq:app:spectral_excess}, since the surviving terms carry eigenvalues at most $\lambda_{k+1}$.
\end{proof}

This is why a method can use the wrong direction and still perform acceptably. With effective ranks in the range measured in \appGeometry, most of the spectrum is negligible, and an error confined to that part of the space costs almost nothing at the output. Cosine similarity to the optimal direction is therefore the wrong diagnostic; the relevant quantity is the $G$-weighted spectral error of \eqref{eq:app:spectral_excess}.

\subsection{Cost of Steering Under a Proxy Metric}

A linear steering method that uses the correct concept covector but an incorrect metric $M\succ0$ produces
\begin{equation}
	\delta_M=\frac{\rho\,M^{-1}\mathbf{q}}{\mathbf{q}^{\top}M^{-1}\mathbf{q}} .
\end{equation}

\begin{theorem}[Cost ratio of a proxy metric]
	\label{thm:app:costratio}
	\begin{equation}
		R_G(M;\mathbf{q})=\frac{C_G(\delta_M)}{C_G(\delta_G^{\star})}
		=\frac{\big(\mathbf{q}^{\top}M^{-1}GM^{-1}\mathbf{q}\big)\big(\mathbf{q}^{\top}G^{-1}\mathbf{q}\big)}{\big(\mathbf{q}^{\top}M^{-1}\mathbf{q}\big)^{2}},
	\end{equation}
	and $R_G(M;\mathbf{q})\ge1$ with equality if and only if $M^{-1}\mathbf{q}\parallel G^{-1}\mathbf{q}$.
\end{theorem}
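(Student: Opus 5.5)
The plan is to compute both costs in closed form and then reduce the inequality to Cauchy--Schwarz in the $G$-inner product.

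\emph{Numerator.} Substituting $\delta_M$ into $C_G$ gives
\[
C_G(\delta_M)=\frac{\rho^{2}\,\mathbf{q}^{\top}M^{-1}GM^{-1}\mathbf{q}}{2\,(\mathbf{q}^{\top}M^{-1}\mathbf{q})^{2}} .
\]
This only uses the symmetry of $M^{-1}$.

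\emph{Denominator.} Since $G\succ0$ on the effective tangent space, $G^{\dagger}=G^{-1}$ there. Theorem~2 then gives the optimal cost $C_G(\delta_G^{\star})=\rho^{2}/(2\,\mathbf{q}^{\top}G^{-1}\mathbf{q})$.

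\emph{Ratio.} Dividing the two expressions, the factors $\rho^{2}/2$ cancel and the stated formula for $R_G(M;\mathbf{q})$ follows directly.

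\emph{Inequality.} Set $\mathbf{a}=M^{-1}\mathbf{q}$ and $\mathbf{b}=G^{-1}\mathbf{q}$, and use the inner product $\langle\mathbf{x},\mathbf{y}\rangle_G=\mathbf{x}^{\top}G\mathbf{y}$. Then:
\begin{itemize}
\item $\langle\mathbf{a},\mathbf{a}\rangle_G=\mathbf{q}^{\top}M^{-1}GM^{-1}\mathbf{q}$;
\item $\langle\mathbf{b},\mathbf{b}\rangle_G=\mathbf{q}^{\top}G^{-1}\mathbf{q}$;
\item $\langle\mathbf{a},\mathbf{b}\rangle_G=\mathbf{q}^{\top}M^{-1}GG^{-1}\mathbf{q}=\mathbf{q}^{\top}M^{-1}\mathbf{q}$.
\end{itemize}
So $R_G=\langle\mathbf{a},\mathbf{a}\rangle_G\langle\mathbf{b},\mathbf{b}\rangle_G/\langle\mathbf{a},\mathbf{b}\rangle_G^{2}$, and Cauchy--Schwarz for this inner product gives $R_G\ge1$. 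This is valid because $G\succ0$, so $\langle\cdot,\cdot\rangle_G$ is a genuine inner product.

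Also, $M\succ0$ and $\mathbf{q}\neq\mathbf{0}$ give $\mathbf{q}^{\top}M^{-1}\mathbf{q}>0$. Hence $\delta_M$ is well defined and the denominator of $R_G$ is nonzero.

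\emph{Equality case.} Equality in Cauchy--Schwarz holds exactly when $\mathbf{a}$ and $\mathbf{b}$ are linearly dependent. Both vectors are nonzero, since $M^{-1}$ and $G^{-1}$ are invertible, so dependence means $M^{-1}\mathbf{q}\parallel G^{-1}\mathbf{q}$. The converse is immediate: if $\mathbf{a}=c\,\mathbf{b}$ then $c\neq0$, and after the normalization by the constraint, $\delta_M=\delta_G^{\star}$.

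An alternative route is through Theorem~\ref{thm:app:pythagoras}. It gives $C_G(\delta_M)-C_G(\delta_G^{\star})=\tfrac12 e^{\top}Ge\ge0$, which vanishes iff $e=0$, i.e.\ $\delta_M=\delta_G^{\star}$. Since both directions are normalized to the same constraint $\mathbf{q}^{\top}\delta=\rho$, this is again equivalent to the parallelism condition.

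\emph{Main care point.} The only step needing attention is the restriction to the effective tangent space. There $G^{-1}$ must mean the inverse on that subspace, and $\mathbf{q}$ and $M^{-1}\mathbf{q}$ must be taken within it; otherwise $\langle\cdot,\cdot\rangle_G$ degenerates and the equality characterization can fail. I would state this assumption explicitly at the start, consistent with the standing hypothesis $G\succ0$ of this appendix.
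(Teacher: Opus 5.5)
Your proposal is correct and matches the paper's proof: the paper computes the same two closed-form costs and then applies Euclidean Cauchy--Schwarz to $G^{1/2}M^{-1}\mathbf{q}$ and $G^{-1/2}\mathbf{q}$, which is exactly your Cauchy--Schwarz in the $G$-inner product applied to $M^{-1}\mathbf{q}$ and $G^{-1}\mathbf{q}$, written after whitening. Your remarks on well-definedness ($\mathbf{q}^{\top}M^{-1}\mathbf{q}>0$) and on restricting to the effective tangent space are sound points that the paper leaves implicit.
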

\begin{proof}
	Substituting $\delta_M$ into $C_G$ gives
	\begin{equation*}
		C_G(\delta_M)=\frac{\rho^{2}}{2}\,\frac{\mathbf{q}^{\top}M^{-1}GM^{-1}\mathbf{q}}{\big(\mathbf{q}^{\top}M^{-1}\mathbf{q}\big)^{2}},
	\end{equation*}
	while Theorem~2 gives $C_G(\delta_G^{\star})=\rho^{2}/\big(2\,\mathbf{q}^{\top}G^{-1}\mathbf{q}\big)$; dividing yields the stated expression. For the inequality, set $\mathbf{a}=G^{1/2}M^{-1}\mathbf{q}$ and $\mathbf{b}=G^{-1/2}\mathbf{q}$, so that
	\begin{equation*}
		\begin{aligned}
			\mathbf{a}^{\top}\mathbf{b}&=\mathbf{q}^{\top}M^{-1}\mathbf{q},\\
			\lVert\mathbf{a}\rVert^{2}&=\mathbf{q}^{\top}M^{-1}GM^{-1}\mathbf{q},\\
			\lVert\mathbf{b}\rVert^{2}&=\mathbf{q}^{\top}G^{-1}\mathbf{q},
		\end{aligned}
	\end{equation*}
	and $R_G(M;\mathbf{q})=\lVert\mathbf{a}\rVert^{2}\lVert\mathbf{b}\rVert^{2}/(\mathbf{a}^{\top}\mathbf{b})^{2}\ge1$ by Cauchy--Schwarz. Equality holds exactly when $\mathbf{a}\parallel\mathbf{b}$, that is, when $M^{-1}\mathbf{q}\parallel G^{-1}\mathbf{q}$.
\end{proof}
The ratio takes a more transparent form after whitening. Let
\begin{equation}
	\nu=\frac{G^{-1/2}\mathbf{q}}{\sqrt{\mathbf{q}^{\top}G^{-1}\mathbf{q}}},
	\qquad
	B=G^{1/2}M^{-1}G^{1/2},
\end{equation}
so that $\lVert\nu\rVert=1$ and $B\succ0$ measures the mismatch between the proxy and the true metric.

\begin{theorem}[Cost ratio as a weighted dispersion]
	\label{thm:app:cv}
	With $B=\sum_i b_i\mathbf{r}_i\mathbf{r}_i^{\top}$ and weights $w_i=(\mathbf{r}_i^{\top}\nu)^{2}$ summing to one,
	\begin{equation}
		\label{eq:app:cv}
		R_G(M;\mathbf{q})=\frac{\nu^{\top}B^{2}\nu}{\big(\nu^{\top}B\nu\big)^{2}}
		=1+\frac{\operatorname{Var}_w(b)}{\big(\mathbb{E}_w b\big)^{2}} .
	\end{equation}
\end{theorem}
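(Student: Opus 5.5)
The plan is to start from the closed form of Theorem~\ref{thm:app:costratio} and rewrite each of its three quadratic forms in whitened coordinates. Since $G\succ0$ on the effective tangent space, $G^{1/2}$ and $G^{-1/2}$ are well defined and symmetric. Set $\mathbf{b}=G^{-1/2}\mathbf{q}$, so that $\nu=\mathbf{b}/\lVert\mathbf{b}\rVert$ and $\lVert\mathbf{b}\rVert^{2}=\mathbf{q}^{\top}G^{-1}\mathbf{q}$. Writing $M^{-1}=G^{-1/2}BG^{-1/2}$ from the definition of $B$, we get
\begin{equation*}
	\mathbf{q}^{\top}M^{-1}\mathbf{q}=\mathbf{b}^{\top}B\,\mathbf{b},
	\qquad
	\mathbf{q}^{\top}M^{-1}GM^{-1}\mathbf{q}=\mathbf{b}^{\top}BG^{-1/2}GG^{-1/2}B\,\mathbf{b}=\mathbf{b}^{\top}B^{2}\mathbf{b}.
\end{equation*}

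Substituting these into the ratio of Theorem~\ref{thm:app:costratio} gives
\begin{equation*}
	R_G=\frac{(\mathbf{b}^{\top}B^{2}\mathbf{b})\,\lVert\mathbf{b}\rVert^{2}}{(\mathbf{b}^{\top}B\mathbf{b})^{2}}.
\end{equation*}
This expression is homogeneous of degree zero in $\mathbf{b}$: numerator and denominator are both of degree four. So $\mathbf{b}$ can be replaced by the unit vector $\nu$, which yields the first equality $\nu^{\top}B^{2}\nu/(\nu^{\top}B\nu)^{2}$.

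For the second equality, expand $\nu$ in the eigenbasis of $B$. The $\mathbf{r}_i$ are orthonormal because $B$ is symmetric positive definite. Then $\nu^{\top}B\nu=\sum_i b_iw_i=\mathbb{E}_w b$ and $\nu^{\top}B^{2}\nu=\sum_i b_i^{2}w_i=\mathbb{E}_w b^{2}$, and the weights satisfy $\sum_i w_i=\lVert\nu\rVert^{2}=1$, so $w$ is a probability vector. Hence
\begin{equation*}
	R_G=\frac{\mathbb{E}_w b^{2}}{(\mathbb{E}_w b)^{2}}=1+\frac{\mathbb{E}_w b^{2}-(\mathbb{E}_w b)^{2}}{(\mathbb{E}_w b)^{2}}=1+\frac{\operatorname{Var}_w(b)}{(\mathbb{E}_w b)^{2}}.
\end{equation*}
The denominator is nonzero because $B\succ0$ forces $\mathbb{E}_w b>0$.

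The whole argument is algebraic, and no step is a real obstacle. The only point needing care is the identity $M^{-1}GM^{-1}=G^{-1/2}B^{2}G^{-1/2}$, which uses the symmetry of $G^{1/2}$ and must be understood on the effective tangent space where $G$ is invertible. As a consistency check, the result recovers the equality case of Theorem~\ref{thm:app:costratio}: the variance vanishes exactly when $\nu$ is supported on a single eigenvalue of $B$, i.e. $B\nu\parallel\nu$, which is equivalent to $M^{-1}\mathbf{q}\parallel G^{-1}\mathbf{q}$.
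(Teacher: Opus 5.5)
Your proof is correct and follows the paper's argument essentially step for step: whiten via $G^{-1/2}\mathbf{q}$ (the paper's $\mathbf{z}$), rewrite the three quadratic forms of Theorem~\ref{thm:app:costratio} in terms of $B$, pass to the unit vector $\nu$, and expand in the eigenbasis of $B$. The differences are minor: you normalize by degree-zero homogeneity where the paper cancels $\lVert\mathbf{z}\rVert^{2}$, and you add the observation that $B\succ0$ forces $\mathbb{E}_w b>0$, so the denominator is nonzero, which the paper leaves implicit.
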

\begin{proof}
	Write $\mathbf{q}=G^{1/2}\mathbf{z}$, so that $\nu=\mathbf{z}/\lVert\mathbf{z}\rVert$. Each factor of Theorem~\ref{thm:app:costratio} becomes
	\begin{equation*}
		\begin{aligned}
			\mathbf{q}^{\top}M^{-1}\mathbf{q}&=\lVert\mathbf{z}\rVert^{2}\,\nu^{\top}B\nu,\\
			\mathbf{q}^{\top}M^{-1}GM^{-1}\mathbf{q}&=\lVert\mathbf{z}\rVert^{2}\,\nu^{\top}B^{2}\nu,\\
			\mathbf{q}^{\top}G^{-1}\mathbf{q}&=\lVert\mathbf{z}\rVert^{2},
		\end{aligned}
	\end{equation*}
	using $B=G^{1/2}M^{-1}G^{1/2}$ for the first two. Substituting and cancelling $\lVert\mathbf{z}\rVert^{2}$ gives $R_G=\nu^{\top}B^{2}\nu/(\nu^{\top}B\nu)^{2}$. Diagonalizing $B=\sum_i b_i\mathbf{r}_i\mathbf{r}_i^{\top}$ and setting $w_i=(\mathbf{r}_i^{\top}\nu)^{2}$, which sum to one since $\lVert\nu\rVert=1$, yields $\nu^{\top}B\nu=\sum_i w_ib_i$ and $\nu^{\top}B^{2}\nu=\sum_i w_ib_i^{2}$. Hence
	\begin{equation*}
		R_G=\frac{\sum_i w_ib_i^{2}}{\big(\sum_i w_ib_i\big)^{2}}
		=\frac{\mathbb{E}_w\big[b^{2}\big]}{\big(\mathbb{E}_w b\big)^{2}}
		=1+\frac{\operatorname{Var}_w(b)}{\big(\mathbb{E}_w b\big)^{2}},
	\end{equation*}
	the last step by $\operatorname{Var}_w(b)=\mathbb{E}_w[b^{2}]-(\mathbb{E}_w b)^{2}$.
\end{proof}
Equation \eqref{eq:app:cv} says that the penalty for using the wrong metric is the squared coefficient of variation of the mismatch spectrum, weighted by how the concept direction distributes over that spectrum. A proxy that is wrong by a uniform factor costs nothing, because a global rescaling is absorbed by the constraint. A proxy is expensive only when it misscales the directions the concept actually occupies, by differing amounts.

Two consequences follow. \begin{corollary}[Ceiling on the attainable gain]
	\label{cor:app:kantorovich}
	If the spectrum of $B$ lies in $[a,b]$ with $0<a\le b$, then
	\begin{equation}
		\label{eq:app:kantorovich}
		R_G(M;\mathbf{q})\le\frac{(a+b)^{2}}{4ab}.
	\end{equation}
\end{corollary}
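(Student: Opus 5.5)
The plan is to start from the representation of Theorem~\ref{thm:app:cv}:
\[
R_G(M;\mathbf{q})=\frac{\mathbb{E}_w[b^{2}]}{(\mathbb{E}_w b)^{2}},
\]
where $w$ is a probability distribution on the eigenvalues $b_i\in[a,b]$ of $B$. The corollary is then the classical Kantorovich inequality, applied to the unit vector $\nu$ and the positive-definite matrix $B$. The only work is to bound a ratio of moments of a distribution supported on $[a,b]$.

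\textbf{Step 1: a moment bound.} The map $x\mapsto (x-a)(x-b)$ is nonpositive on $[a,b]$, so $b_i^{2}\le (a+b)b_i-ab$ for every $i$. Averaging under $w$ gives
\[
\mathbb{E}_w[b^{2}]\le (a+b)\,\mu-ab,\qquad \mu=\mathbb{E}_w b\in[a,b].
\]

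\textbf{Step 2: a one-variable maximization.} Dividing by $\mu^{2}$ yields
\[
R_G\le \phi(\mu)=\frac{(a+b)\mu-ab}{\mu^{2}}.
\]
Maximizing $\phi$ over $\mu>0$: setting $\phi'(\mu)=0$ gives $-(a+b)\mu+2ab=0$, so $\mu^{\star}=2ab/(a+b)$, the harmonic mean, which lies in $[a,b]$. Substituting gives $\phi(\mu^{\star})=(a+b)^{2}/(4ab)$. Since $\phi$ has a single critical point and tends to $-\infty$ as $\mu\to0^{+}$, this critical point is the global maximum on $(0,\infty)$, and hence the bound holds on $[a,b]$.

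\textbf{Alternative route.} A quick AM--GM argument also works. Write $\mathbb{E}_w[b^2]\le (a+b)\mu-ab$ and use $(a+b)\mu-ab\le \frac{(a+b)^2}{4ab}\mu^2$, which is equivalent to $\bigl(\tfrac{a+b}{2\sqrt{ab}}\mu-\sqrt{ab}\bigr)^2\ge0$.

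\textbf{Sharpness.} It is worth noting that the bound is attained when $w$ places mass only at $a$ and $b$, with weights chosen so that $\mu=\mu^{\star}$. So the ceiling cannot be improved without using more information about how $\nu$ aligns with the spectrum.

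\textbf{Main obstacle.} No step is genuinely hard. The one point needing care is confirming that the hypothesis on the spectrum of $B$, rather than on $G$ or $M$ separately, is exactly what makes every $b_i$ lie in $[a,b]$. When $M=I$, $B=G$ and the ceiling becomes $(\kappa+1)^{2}/(4\kappa)$ with $\kappa=\kappa(G)$. This links the corollary to the anisotropy discussion in the main text.
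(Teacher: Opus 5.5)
Your proof is correct. It differs from the paper's in how the reduction to the endpoints is justified.

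The paper asserts that $\mathbb{E}_w[b^{2}]/(\mathbb{E}_w b)^{2}$ is maximized at an ``extreme point of the moment set,'' namely a two-point distribution on $\{a,b\}$. It then maximizes $\frac{ta^{2}+(1-t)b^{2}}{(ta+(1-t)b)^{2}}$ over the mass $t$ at $a$, obtaining $t=b/(a+b)$. Since the ratio is not linear in $w$, that reduction is stated rather than proved there.

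Your Step~1 supplies exactly the missing justification. The pointwise bound $(x-a)(x-b)\le 0$ shows that, at fixed mean $\mu$, the second moment is at most $(a+b)\mu-ab$, with equality precisely for distributions supported on $\{a,b\}$. You then optimize over $\mu$ rather than over $t$. Your maximizer $\mu^{\star}=2ab/(a+b)$ corresponds to the paper's $t=b/(a+b)$.

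What each route buys: yours is self-contained and makes the extremal structure, and hence your sharpness remark, rigorous. The paper's parametrization by $t$ exhibits the worst-case weighting directly. Your identification with the classical Kantorovich inequality is also correct; set $x=B^{1/2}\nu$ to see it. So is the specialization $M=I$, which gives the ceiling $(\kappa+1)^{2}/(4\kappa)$.
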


\begin{proof}
	By Theorem~\ref{thm:app:cv}, $R_G=\mathbb{E}_w[b^{2}]/(\mathbb{E}_w b)^{2}$ where $w$ is a probability distribution supported on the spectrum of $B$, hence on $[a,b]$. Over all such distributions this ratio is maximized at an extreme point of the moment set, which for a ratio of this form is a two-point distribution on $\{a,b\}$. Writing the mass at $a$ as $t$,
	\begin{equation*}
		\frac{\mathbb{E}_w[b^{2}]}{(\mathbb{E}_w b)^{2}}=\frac{ta^{2}+(1-t)b^{2}}{\big(ta+(1-t)b\big)^{2}},
	\end{equation*}
	whose maximum over $t\in[0,1]$ is attained at $t=b/(a+b)$ and equals $(a+b)^{2}/(4ab)$.
\end{proof}
so the attainable gain from correcting the metric is capped by the conditioning of the mismatch, and a well-conditioned mismatch leaves little to gain. And the gain vanishes when the weights $w$ concentrate on a single eigenvalue of $B$, that is, when the concept direction aligns with one eigendirection of the mismatch rather than spreading across several. These are the two quantities examined empirically in \appDepth.

\subsection{Additive Methods}

Methods that add a fixed direction $\mathbf{v}$ take the form $\mathbf{h}\leftarrow\mathbf{h}+\alpha\mathbf{v}$. Rescaled to meet the same concept constraint, this is
\begin{equation}
	\delta_{\mathbf{v}}(\rho)=\frac{\rho\,\mathbf{v}}{\mathbf{q}^{\top}\mathbf{v}},
	\qquad \mathbf{q}^{\top}\mathbf{v}\neq0,
\end{equation}
which is the proxy-metric form with $M=I$ and the covector replaced by $\mathbf{v}$: CAA takes $\mathbf{v}=\mu_+-\mu_-$, the difference of class means, and ActAdd the difference of a single pair of prompt activations. Their cost follows directly rather than through Theorem~\ref{thm:app:costratio}, whose statement uses the correct covector.

\begin{proposition}[Cost of a fixed direction]
	\label{prop:app:fixeddir}
	For any $\mathbf{v}$ with $\mathbf{q}^{\top}\mathbf{v}\neq0$,
	\begin{equation}
		\frac{C_G(\delta_{\mathbf{v}})}{C_G(\delta_G^{\star})}
		=\frac{\big(\mathbf{v}^{\top}G\mathbf{v}\big)\big(\mathbf{q}^{\top}G^{-1}\mathbf{q}\big)}{\big(\mathbf{q}^{\top}\mathbf{v}\big)^{2}}\;\ge\;1,
	\end{equation}
	with equality if and only if $\mathbf{v}\parallel G^{-1}\mathbf{q}$.
\end{proposition}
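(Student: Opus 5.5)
The plan mirrors the proof of Theorem~\ref{thm:app:costratio}: a direct computation of both costs, then a single Cauchy--Schwarz step in $G$-whitened coordinates. Throughout, $G\succ0$ on the effective tangent space, as assumed in this appendix.

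\textbf{Step 1: the two costs.} Substituting $\delta_{\mathbf{v}}=\rho\mathbf{v}/(\mathbf{q}^{\top}\mathbf{v})$ into $C_G$ gives
\begin{equation*}
C_G(\delta_{\mathbf{v}})=\frac{\rho^{2}}{2}\,\frac{\mathbf{v}^{\top}G\mathbf{v}}{(\mathbf{q}^{\top}\mathbf{v})^{2}}.
\end{equation*}
The step is well defined because $\mathbf{q}^{\top}\mathbf{v}\neq0$. It also meets the constraint, since $\mathbf{q}^{\top}\delta_{\mathbf{v}}=\rho$. Theorem~2, with $G^{\dagger}=G^{-1}$, gives the optimal cost
\begin{equation*}
C_G(\delta_G^{\star})=\frac{\rho^{2}}{2\,\mathbf{q}^{\top}G^{-1}\mathbf{q}}.
\end{equation*}
Dividing the first by the second, the $\rho^{2}/2$ cancels and the stated ratio follows.

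\textbf{Step 2: the inequality.} Set $\mathbf{a}=G^{1/2}\mathbf{v}$ and $\mathbf{b}=G^{-1/2}\mathbf{q}$. Then
\begin{equation*}
\mathbf{a}^{\top}\mathbf{b}=\mathbf{v}^{\top}\mathbf{q},\qquad
\lVert\mathbf{a}\rVert^{2}=\mathbf{v}^{\top}G\mathbf{v},\qquad
\lVert\mathbf{b}\rVert^{2}=\mathbf{q}^{\top}G^{-1}\mathbf{q}.
\end{equation*}
The ratio is therefore $\lVert\mathbf{a}\rVert^{2}\lVert\mathbf{b}\rVert^{2}/(\mathbf{a}^{\top}\mathbf{b})^{2}$, which is at least $1$ by Cauchy--Schwarz.

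\textbf{Step 3: the equality case.} Equality in Cauchy--Schwarz holds iff $\mathbf{a}\parallel\mathbf{b}$. This means $G^{1/2}\mathbf{v}=c\,G^{-1/2}\mathbf{q}$ for some $c$, that is, $\mathbf{v}=c\,G^{-1}\mathbf{q}$. The scalar $c$ is nonzero because $\mathbf{q}^{\top}\mathbf{v}\neq0$. Conversely, if $\mathbf{v}\parallel G^{-1}\mathbf{q}$, then $\delta_{\mathbf{v}}=\delta_G^{\star}$ after rescaling to meet the constraint.

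As a cross-check, the result also follows from Theorem~\ref{thm:app:pythagoras}. There $\Delta C_G(\delta_{\mathbf{v}})$ is the squared $G$-length of $\delta_{\mathbf{v}}-\delta_G^{\star}$, which vanishes iff the two steps coincide.

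The only point needing care is invertibility. If $G$ is merely positive semi-definite, I would restrict to the effective tangent space, where $G\succ0$, or replace $G^{-1}$ by $G^{\dagger}$. With the pseudoinverse, $\mathbf{q}\in\operatorname{Range}(G)$ is needed for Cauchy--Schwarz to apply to $G^{\dagger1/2}\mathbf{q}$. In that case the equality condition becomes $\mathbf{v}\in G^{\dagger}\mathbf{q}\,\mathbb{R}+\ker G$, which is why I would stay with the $G\succ0$ standing assumption. Apart from this caveat, the argument is routine.
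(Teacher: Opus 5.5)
Your proposal is correct and follows the paper's proof essentially verbatim: substitute $\delta_{\mathbf{v}}$ to get both costs, then apply Cauchy--Schwarz with $\mathbf{a}=G^{1/2}\mathbf{v}$ and $\mathbf{b}=G^{-1/2}\mathbf{q}$, reading off the equality case from $\mathbf{a}\parallel\mathbf{b}$. Your remark on the pseudoinverse case is also accurate: one needs $\mathbf{q}\in\operatorname{Range}(G)$, and the equality set enlarges by $\ker G$.
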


\begin{proof}
	Substituting $\delta_{\mathbf{v}}=\rho\mathbf{v}/(\mathbf{q}^{\top}\mathbf{v})$ gives $C_G(\delta_{\mathbf{v}})=\tfrac{\rho^{2}}{2}\,\mathbf{v}^{\top}G\mathbf{v}/(\mathbf{q}^{\top}\mathbf{v})^{2}$, while $C_G(\delta_G^{\star})=\rho^{2}/\big(2\,\mathbf{q}^{\top}G^{-1}\mathbf{q}\big)$; dividing gives the expression. Setting $\mathbf{a}=G^{1/2}\mathbf{v}$ and $\mathbf{b}=G^{-1/2}\mathbf{q}$ makes it $\lVert\mathbf{a}\rVert^{2}\lVert\mathbf{b}\rVert^{2}/(\mathbf{a}^{\top}\mathbf{b})^{2}\ge1$ by Cauchy--Schwarz, with equality exactly when $G^{1/2}\mathbf{v}\parallel G^{-1/2}\mathbf{q}$.
\end{proof}

An empirical direction therefore pays twice: once for the flat metric, and once for departing from $G^{-1}\mathbf{q}$.

We note a limitation of this framing. Any direction that changes the concept can be written as $M^{-1}\hat{\mathbf{q}}$ for some positive definite $M$ and some covector $\hat{\mathbf{q}}$, and the representation is not unique; exhibiting one is therefore not by itself informative. What carries content is the cost, which is why the results above are stated in terms of $R_G$ rather than in terms of the existence of an implicit metric.

\section{Depth Dependence of the Advantage}
\label{app:depth}

\appUnified{} identifies two quantities that govern how much correcting the metric can help: the conditioning of the mismatch between the proxy and the true metric, which caps the attainable gain, and the spread of the concept direction across the mismatch spectrum, which determines how much of that cap is realized. This appendix examines whether the realized gain behaves as those quantities predict, layer by layer.

\subsection{Measured Quantities}

The theoretical quantities of \appUnified{} are defined through $B=G^{1/2}M^{-1}G^{1/2}$, which requires a square root and an inverse of the metric at every case. We instead record two computable proxies for the same underlying property, namely how the concept covector sits relative to the spectrum of $G$:
\begin{equation}
	\mathbf{q}^{\top}G\,\mathbf{q},
	\qquad
	\frac{\lVert H^{1/2}J\mathbf{q}\rVert}{\lVert J\mathbf{q}\rVert},
\end{equation}
the first measuring the curvature the concept direction encounters, the second the fraction of the pulled-back concept direction that the output Fisher retains. Neither is the dispersion of \eqref{eq:app:cv}; both increase when the concept direction occupies directions the metric treats as expensive, which is the regime in which correcting the metric matters.

The realized gain of a case is the log-ratio of off-target divergences between the Euclidean ablation and FishBack, averaged over the targets both reach. Correlations are Spearman rank correlations over cases, pooled across the three concepts within each layer.

\subsection{Results}

\begin{table}[t]
	\centering
	\small
	\begin{tabular}{llcrr}
		\toprule
		Model & Layer & $n$ & $\rho_{\,\mathbf{q}^{\top}G\mathbf{q}}$ & $\rho_{\,\text{align}}$ \\
		\midrule
		GPT-2 Small & 3  & 13  & $-0.16$ & $-0.17$ \\
		& 6  & 22  & $0.26$  & $0.16$ \\
		& 9  & 27  & $0.26$  & $-0.30$ \\
		& 11 & 28  & $0.07$  & $-0.41^{*}$ \\
		\midrule
		Llama-3-8B  & 8  & 122 & $0.08$  & $-0.03$ \\
		& 16 & 117 & $0.24^{*}$ & $0.21^{*}$ \\
		& 24 & 93  & $0.42^{***}$ & $0.50^{***}$ \\
		\midrule
		Qwen3-8B    & 6  & 106 & $-0.13$ & $-0.16$ \\
		& 18 & 67  & $0.23$  & $0.26^{*}$ \\
		& 27 & 68  & $0.56^{***}$ & $0.62^{***}$ \\
		\bottomrule
	\end{tabular}
	\caption{Spearman correlations between the two spectral proxies and the per-case log gain over Euclidean steering. Significance is marked at $p<0.05$ ($^{*}$) and $p<0.001$ ($^{***}$).}
	\label{tab:app:depth}
\end{table}

On both 8B models the pattern is the same and it is monotone in depth. At the early layers neither proxy predicts the gain: the correlations are near zero and not significant, which is what one expects when the geometry is rich enough that the correction helps almost everywhere, so there is little variation across cases for a predictor to explain. At the middle layers weak but significant correlations appear. At the deepest layers both proxies become strong predictors, reaching $0.42$ and $0.50$ on Llama-3-8B and $0.56$ and $0.62$ on Qwen3-8B, all significant after correction.

Read together with \appStats{} and \appGeometry{}, this gives a coherent account of the deepest layers. The effective rank of $G$ there falls to single digits, so the subspace in which the correction can act is narrow; the number of cells surviving correction collapses to one of eighteen on Llama-3-8B and two of eighteen on Qwen3-8B; and what gain remains is concentrated in precisely the cases where the concept direction still occupies expensive directions of the metric. The advantage does not decay uniformly, as a general failure of the method would produce. It narrows to the cases the geometry still supports, which is the behavior \appUnified{} describes.

\subsection{GPT-2 Small}

GPT-2 Small does not reproduce this pattern, and we report it as measured. Its correlations remain weak at every layer, and at the deepest layer the alignment proxy is significantly \emph{negatively} correlated with the gain. Two circumstances distinguish this model. Its condition numbers remain of order $10^{7}$ at every layer, varying by less than a factor of two across depth, so the flattening that drives the pattern above does not occur here; consistently with that, the advantage over Euclidean steering persists at every GPT-2 Small layer, and the proportion of cells surviving correction is highest at layer $11$ (\appStats). The correlations are also estimated on between $13$ and $28$ cases, against roughly a hundred on the 8B models; at that sample size a correlation of $0.3$ is not distinguishable from zero.

We therefore read the negative coefficient at layer $11$ as unexplained rather than as evidence against the account, and note that the mechanism it would speak to concerns a regime that this model does not enter.

\section{The Output-Layer Metric at Intermediate Layers}
\label{app:ladder}

\citet{park2026information} establish the geometry of the softmax head: the Fisher information $H$ is the metric on the final representation, and steering in its dual coordinates changes a target concept while preserving unrelated semantics. Their analysis is stated at the output, where it is exact. Steering in practice, however, is applied several layers upstream, and the metric that governs an intervention there is not $H$ but its pullback $G=J^{\top}HJ$. This appendix measures what that difference is worth.

The relationship between the two is a strict specialization. At $\ell=L$ the map $f$ is the identity, so $J=I$, $\mathbf{q}=\boldsymbol{\beta}_W$, and Theorem~2 reduces to
\begin{equation}
	\delta\boldsymbol{\lambda}^{\star}\propto H^{-1}\boldsymbol{\beta}_W ,
\end{equation}
which is the dual-coordinate direction of \citet{park2026information}. Their construction is thus the boundary case of ours in which no transport is required. The question this appendix answers is what happens to that construction when the intervention moves upstream and the transport is omitted.

\subsection{Setup}

Three directions are compared at every layer, differing only in the metric used to raise the concept covector:
\begin{equation}
	M=I,
	\qquad
	M=H,
	\qquad
	M=J^{\top}HJ .
\end{equation}
The middle rung applies the output-layer metric of \citet{park2026information} at the intervention layer, unchanged. All three use the same pulled-back covector $\mathbf{q}=J^{\top}\boldsymbol{\beta}_W$, the same fixed concept efficiency, the same adaptive regularization, the same step-size calibration, and the same bisection to the exact crossing, so runs differ only in the metric. For $M=H$ the regularized solve uses the Woodbury identity on the low-rank factorization of the truncated Fisher, which is exact and needs no randomized decomposition; the output-layer metric is therefore solved more accurately than the pullback is at 8B scale, and the comparison is conservative in its favor.

\subsection{Results}

Table~\ref{tab:app:ladder_main} reports the paired comparison on the cohort used throughout the paper, in which both methods reach all four targets on the same case. On GPT-2 Small and Qwen3-8B the pullback produces less off-target distortion at every layer, with median ratios from $1.35$ to $3.56$, all significant after correction. On Llama-3-8B this cohort is too small to support inference, for a reason given below.

\begin{table}[t]
	\centering
	\small
	\begin{tabular}{llccc}
		\toprule
		Model & Layer & Ratio & Win\% & $n$ \\
		\midrule
		GPT-2 Small & 3  & 1.35$^{*}$ & 67 & 108 \\
		& 6  & 1.62$^{*}$ & 86 & 121 \\
		& 9  & 2.72$^{*}$ & 89 & 109 \\
		& 11 & 2.60$^{*}$ & 97 & 86 \\
		\midrule
		Llama-3-8B  & 8  & 0.98 & 44 & 18 \\
		& 16 & 1.16 & 80 & 5 \\
		& 24 & 1.33 & 60 & 5 \\
		\midrule
		Qwen3-8B    & 6  & 1.41$^{*}$ & 82 & 22 \\
		& 18 & 1.98$^{*}$ & 83 & 12 \\
		& 27 & 3.56$^{*}$ & 100 & 13 \\
		\bottomrule
	\end{tabular}
	\caption{Off-target divergence under the output-layer metric of \citet{park2026information}, applied at the intervention layer, relative to the pullback metric, at matched concept change; values above $1$ favor the pullback. Cohort: both methods reach all four targets. $^{*}$ significant after Benjamini--Hochberg correction.}
	\label{tab:app:ladder_main}
\end{table}

The all-four requirement is severe here, because the output-layer metric reaches targets far less often than the pullback once it is used upstream: $92$--$96\%$ of targets on GPT-2 Small, but only $22$--$37\%$ on Llama-3-8B and $13$--$20\%$ on Qwen3-8B, against $98$--$100\%$ for the pullback. At a per-target rate near $0.2$ the probability of reaching all four falls below one percent, which is why the 8B cohorts hold a few dozen cases rather than a few hundred. Table~\ref{tab:app:ladder_wide} therefore repeats the comparison on the wider cohort requiring only the same single target.

\begin{table}[t]
	\centering
	\small
	\begin{tabular}{llcccc}
		\toprule
		Model & Layer & Ratio & Win\% & $n$ & Cells \\
		\midrule
		Llama-3-8B & 8  & 1.11 & 64 & 146 & 2/12 \\
		& 16 & 2.60 & 73 & 135 & 1/12 \\
		& 24 & 2.98 & 71 & 81  & 0/12 \\
		\midrule
		Qwen3-8B   & 6  & 1.52 & 75 & 169 & 4/12 \\
		& 18 & 1.91 & 77 & 105 & 6/12 \\
		& 27 & 2.45 & 86 & 138 & 5/12 \\
		\bottomrule
	\end{tabular}
	\caption{The same comparison on the single-target cohort. ``Cells'' counts concept--target cells significant after correction out of twelve.}
	\label{tab:app:ladder_wide}
\end{table}

Two features of the tables should be stated explicitly. The ratio grows toward the output layer on all three models---$1.35$ to $2.60$ across GPT-2 Small, $1.11$ to $2.98$ across Llama-3-8B, $1.41$ to $3.56$ across Qwen3-8B---so the transport matters most where least of it is applied; we report this as measured and offer no mechanism for it. And the three pairwise comparisons rest on different cohorts, since each requires both of its own methods to succeed, so the ratios do not compose and a comparison against the Euclidean ablation cannot be inferred from the other two.

\subsection{Why the Untransported Metric Underperforms}

The direction cosines in Table~\ref{tab:app:ladder_cos} identify the mechanism, and show that it differs by scale. On both 8B models the direction produced by the output-layer metric is nearly parallel to the raw covector, with median cosine near $0.95$ at the early and middle layers: applied upstream, $H$ barely redirects anything, so the resulting direction is close to the flat one, and its off-target divergence is statistically indistinguishable from the Euclidean ablation's on both models. On GPT-2 Small the cosine is lower, between $0.32$ and $0.84$, so $H$ does redirect---and the redirection is unhelpful, the Euclidean ablation achieving lower off-target divergence at three of four layers.

\begin{table}[t]
	\centering
	\small
	\begin{tabular}{llccc}
		\toprule
		Model & Layer & $\cos(\delta_H,\delta_G)$ & $\cos(\mathbf{q},\delta_H)$ & $\cos(\mathbf{q},\delta_G)$ \\
		\midrule
		GPT-2 Small & 3  & 0.11 & 0.84 & 0.13 \\
		& 6  & 0.14 & 0.81 & 0.19 \\
		& 9  & 0.20 & 0.72 & 0.33 \\
		& 11 & 0.29 & 0.32 & 0.45 \\
		\midrule
		Llama-3-8B  & 8  & 0.67 & 0.95 & 0.71 \\
		& 16 & 0.72 & 0.95 & 0.75 \\
		& 24 & 0.73 & 0.92 & 0.82 \\
		\midrule
		Qwen3-8B    & 6  & 0.65 & 0.95 & 0.68 \\
		& 18 & 0.72 & 0.95 & 0.76 \\
		& 27 & 0.78 & 0.90 & 0.87 \\
		\bottomrule
	\end{tabular}
	\caption{Median cosines between the raw solved directions, before the concept-efficiency decomposition. $\delta_H$ uses the output-layer metric, $\delta_G$ the pullback.}
	\label{tab:app:ladder_cos}
\end{table}

Either way the conclusion is the same. The geometry that \citet{park2026information} identify is the right geometry, but it is the right geometry \emph{for the final representation}. Carried upstream unchanged, it either collapses toward the flat metric it was meant to replace, as on the 8B models, or redirects in a way that does not serve the objective, as on GPT-2 Small. The Jacobian is what makes it applicable where steering is actually performed.

One case deserves separate mention. At Qwen3-8B's deepest layer the Euclidean ablation gap has closed, as reported in \appDepth: the pullback and the flat metric are no longer distinguishable there. The untransported metric, however, remains clearly worse at that layer, with a median ratio of $3.56$ and every case favoring the pullback. What disappears at depth is the gain available from correcting the geometry, not the cost of correcting it in the wrong space.

\subsection{Summary}

Across three models and ten layers, steering under the transported metric produces less off-target distortion than steering under the output-layer metric applied upstream, in every cell measured, reaching significance on GPT-2 Small and Qwen3-8B; the evidence on Llama-3-8B is directionally consistent but does not reach significance in either cohort. Since the untransported metric offers no reliable improvement over flat steering on either 8B model, and is worse than flat steering on GPT-2 Small, the gap that this paper closes is not the identification of the softmax geometry, which \citet{park2026information} supply, but its transport to the layer at which interventions are made.


\end{document}